\newcommand{\given}{\,|\,}
\newcommand{\Rbf}{\text{Range($f_{\Bcal}$)}}
\newcommand{\Rf}{\text{Range($f$)}}
\newcommand{\xhdr}[1]{\vspace{1mm} \noindent {\bf #1}}
\newcommand{\R}[2]{\cbr{#1, \ldots, #2}}
\title{On the Within-Group Fairness of Screening Classifiers}
\author{Nastaran Okati}
\author{Stratis Tsirtsis}
\author{Manuel Gomez Rodriguez}
\affil{Max Planck Institute for Software Systems \\ \{nastaran, stsirtsis, manuel\}@mpi-sws.org}
\date{}
\begin{document}

\maketitle

\begin{abstract}
Screening classifiers are increasingly used to identify qualified candidates in a variety of selection processes.
In this context, it has been recently shown that if a classifier is calibrated, one can identify the smallest set of candidates which contains, in expectation, a desired number of qua\-li\-fied candidates using a threshold decision rule.
This lends support to focusing on calibration as the only requirement for screening classifiers.
In this paper, we argue that screening policies that use calibrated classifiers may suffer from an understudied type of within-group unfairness---they may unfairly treat qualified members \emph{within} demographic groups of interest.
Further, we argue that this type of unfairness can be avoided if classifiers satisfy within-group monotonicity, a natural monotonicity property within each group.
Then, we introduce an efficient post-processing algorithm based on dynamic programming to minimally modify a given calibrated classifier so that its probability estimates satisfy within-group monotonicity. 
We validate our algorithm using US Census survey data and show that within-group monotonicity can often be achieved at a small cost in terms of prediction granularity and shortlist size. 
\end{abstract}

\section{Introduction}
\label{sec:introduction}
As many selection processes receive hundreds or even thousands of applications, it has 
become increasingly common to rely on automated screening tools to shortlist a tractable 
set of promising candidates.
These shortlisted candidates then move forward in the selection process and are evaluated 
in detail, possibly multiple times, until one or more qualified candidates are selected.
%
The benefits and harms posed by auto\-ma\-ted screening have been investigated in many 
high-stakes domains, including medicine~\cite{Etzioni2003,shen2019}, recruiting~\cite{cowgill2018bias,raghavan2020mitigating} and content moderation~\cite{gorwa2020algorithmic}.
%
In the machine learning literature, algorithmic screening has been studied together with other 
high-stakes decision making problems as a supervised learning problem~\cite{corbett2017algorithmic, kilbertus2020fair, sahoo2021reliable}.
%
Under this view, algorithmic screening consists of designing both a screening classifier, which 
estimates the pro\-ba\-bi\-li\-ty that a candidate is qualified, and a screening policy, which shortlists 
candidates using the candidates'{} probability values estimated by the screening classifier.
%
Only very recently, a line of work has focused specifically on algorithmic screening~\cite{Wang2022ImprovingSP, jin2022selection, wang2023fairness}. 
Therein,~\cite{Wang2022ImprovingSP} argue that, to increase the efficiency of the selection process without decreasing the quality of the shortlisted candidates, 
the focus should be on screening policies that find the smallest shortlist of candidates 
containing a desired average number of qualified candidates with high probability without making any distributional 
assumptions on the candidates.
Further, this work has shown that, if the screening classifier is calibrated~\cite{Dawid1982TheWB}, such distribution-free guarantees can be achieved using threshold decision rules as screening policies, and the more granular the predictions of the classifier, the smaller the shortlists provided by such policies.

In this work, our starting point is the realization that any threshold decision rule that uses calibrated 
screening classifiers may be biased against qualified candidates \emph{within} demographic groups of interest.
More specifically, it may shortlist one or more candidates from a group who are less likely to be qualified
than one or more rejected candidates from the same group.
Unfortunately, this type of within-group unfairness may perpetuate historical biases against minority groups 
since it may preclude the \emph{best} candidates from the groups---the candidates who are more likely to be 
qualified---to move forward in the selection process and have a chance to be selected~\cite{yang2019balanced}.

\xhdr{Our contributions.}
We first show that to avoid such within-group unfairness, screening classifiers need to satisfy a natural monotonicity property within each of the groups of interest, which we refer to as within-group monotonicity.
Then, we develop a set partitioning post-processing framework to minimally modify any calibrated classifier such that it
satisfies within-group monotonicity. 
Along the way, we make the following contributions:
%
\begin{enumerate}
    \item[I.] We show that the problem is NP-hard using a re\-duc\-tion from a variation of the partition problem~\cite{karp1972reducibility}, which we refer to as the equal average partition problem and prove it is NP-complete. 
    However, we identify a natural class of partitions---contiguous partitions---under which the problem is tractable.
    %
    \item[II.] While the structure of our problem for contiguous partitions resembles isotonic regression~\cite{barlow1972isotonic}, we show that the classical Pool Adjacent Violators (PAV) algorithm may fail even to find a locally optimal solution. 
    \item[III.] We derive a dynamic programming algorithm for contiguous partitions that is guaranteed to find an optimal solution to our problem in polynomial time.
    \item[IV.] We show that within-group calibration~\cite{pleiss2017fairness} 
    implies within-group monotonicity.
    However, we show that it is often impossible to modify a classifier to satisfy the former and, whenever possible, the predictions of the resulting classifier are coarse.
\end{enumerate}
%
Finally, we create multiple instances of a simulated scree\-ning process using US Census survey data to validate and complement our methodological contributions and theo\-re\-ti\-cal results. 
The results show that the probability that an individual from a minority group suffers from within-group unfairness may be significant and within-group monotonicity can be achieved at a small cost in terms of prediction granularity and shortlist size.

\xhdr{Related work.} 
There is an extensive and rapidly growing line of work addressing group bias and discrimination in the machine 
learning literature~\cite{hardt2016equality,friedler2016possibility,zafar2017fairness,kim2019multiaccuracy,beutel2019putting,lahoti2020fairness}. 
This line of work has applications in a variety of important domains, 
including ranking~\cite{celis2017ranking,yang2017measuring,biega2018equity,singh2018fairness,sing2019policy},
health care~\cite{garb1997race,williams2009discrimination}, 
criminal justice~\cite{dieterich2016compas,Flores2016,angwin2016machine,feller2016computer,chouldechova2017fair,dressel2018accuracy}
and recommender systems~\cite{sweeney2013discrimination,datta2014automated,beutel2019fairness,wang2021practical,prost2022simpson}.
However, it has predominantly focused on preventing discrimination \emph{across} groups of interest, \eg, designing machine learning models whose predictive performance (\eg, accuracy, false positive 
rate) is invariant across groups. 
In contrast, we focus on preventing unfairness \emph{within} groups.

Within the above machine learning literature, there are a few notable exceptions~\cite{zehlike2017fa,speicher2018unified, yang2019balanced,garcia2021maxmin,zehlike2022fair}, 
which stu\-died similar notions to within-group mo\-no\-to\-ni\-ci\-ty (in the context
of ranking) and within-group unfairness.
Among them, the works by~\cite{zehlike2017fa, zehlike2022fair} and~\cite{speicher2018unified} are the most related to ours.
~\cite{zehlike2017fa, zehlike2022fair} introduces a notion of in-group monotonicity
that is similar to ours.
However, it comprises only the top-$k$ 
ranked candidates in a specific pool of candidates (\ie, in our work, the shortlisted candidates), rather than every candidate in a population of
interest, and unconditional qua\-li\-ty scores, rather than group conditional 
quality scores.
Moreover, their formulation is fundamentally different and their technical contributions are orthogonal to ours.
\cite{speicher2018unified} addresses within-group unfairness as a measure of how unequally members within a group benefit from algorithmic decisions. In contrast, our notion of within-group monotonicity asks for accurately ranking individuals belonging to a group in terms of how worthy they are of receiving a beneficial decision rather than equally benefiting them.
In this context, it is also worth high\-lighting the notion of within-group calibration~\cite{pleiss2017fairness,kleinberg2018inherent}, 
which implies within-group monotonicity, as discussed previously. 
Within-group calibration asks for equally well-calibrated probability estimates \emph{across} groups so that a decision maker cannot use group membership to interpret these estimates.
However, in the context of screening, our results show that within-group calibration may be an unne\-cessa\-ri\-ly strong requirement.
Our work also relates to a line of work devoted to the study of calibration in supervised learning~\cite{zadrozny2001obtaining,Zadrozny2002TransformingCS,guo2017calibration,kumar2018trainable,Krishnan2020ImprovingMC,Karandikar2021SoftCO}.
Here, the main focus has been the design of classifiers with low ca\-li\-bra\-tion error using calibration-aware training or post-hoc re-calibration. 
However, there have been also very recent efforts to ensure calibration errors are bias-free~\cite{Broecker11,Ferro2012ABD,roelofs2022mitigating}.
%
Here, we do not aim to minimize calibration error but ensure a calibrated classifier satisfies within-group monotonicity.

\section{Screening, Calibration and Within-Group Fairness}
\label{sec:formulation}
Given a candidate with a feature vector $x \in \Xcal$, we assume the candidate belongs to one demographic group of interest $z \in \Zcal$ and can be qualified ($y = 1$) or un\-qua\-li\-fied ($y = 0$) for the selection objective\footnote{\scriptsize We do not require a candidate'{}s group membership $z$ to be included in or be inferable from their feature vector $x$.}\textsuperscript{,}\footnote{\scriptsize In practice, one measures qualification using proxy variables, which need to be chosen carefully not to perpetuate historical biases~\cite{bogen2018help,garr2019diversity,tambe2019artificial}.}.
Next, let $f : \Xcal\rightarrow \Rf \subseteq [0, 1]$ be a screening classifier that maps a candidate'{}s feature vector $x \in \Xcal$ to a quality score $f(x)$, where the higher the quality score $f(x)$, the more the classifier believes the candidate is qualified. 
%
%
Then, given a pool of $m$ candidates, a screening policy 
$\pi \,:\, [0, 1]^{m} \rightarrow \Pcal(\{0, 1\}^{m})$ maps the candidates'{} quality scores 
to a pro\-ba\-bi\-li\-ty distribution over shortlisting decisions $\{s_i\}_{i \in [m]}$.
Here, each decision $s_i$ specifies whether the corresponding candidate is shortlisted ($s_i = 1$) or is not shortlisted ($s_i = 0$).
%

In high-stakes applications, screening classifiers $f$ are usually demanded to provide calibrated quality scores~\cite{brier1950verification, Gneiting2007, gupta2020distribution}, \ie, $f$ is calibrated iff, for every $a \in \Rf$, it holds that $\Pr(Y = 1 \given f(X) = a) = a$. 
%
%
In this context, Wang et al.~\cite{Wang2022ImprovingSP} have recently shown that, if the classifier $f$ is calibrated, the optimal screening policy $\pi^{*}_f$ that is guaranteed to shortlist, in expectation, the smallest set of candidates with a desired number of qualified candidates with high probability is given by a simple threshold decision rule 
that take shortlisting decisions as
\begin{equation} \label{eq:decision-threshold-rule-algorithm}
s_i =
\begin{cases}
1 & \textnormal{if} \, f(x_i) > t_f, \\
\text{Bernoulli}(\theta_f) & \textnormal{if} \, f(x_i) = t_f \\
0 & \textnormal{otherwise},
\end{cases}
\end{equation}
where $t_f$ and $\theta_f$ depend on the classifier and data distribution.
%
%
%
These results lend support to focusing on calibration as the only requirement for screening classifiers. 
%
In this work, we argue that scree\-ning policies given by threshold decision rules using calibrated classifiers
may suffer from an understudied type of unfairness---they may be biased against qualified members \emph{within} demographic groups.
More formally, the following proposition shows that any threshold decision rule may be biased against qualified members within demographic groups\footnote{\scriptsize All proofs can be found in the Appendix~\ref{app:proofs}.}:
\begin{proposition}\label{prop:within-group-discrimination}
Let $\pi$ be a screening policy given by a threshold decision rule using a calibrated classifier $f$ with threshold $t$.
Assume there exist $a, b \in \Rf$, with $a < t < b$, and $z \in \Zcal$ such 
that $P(Y = 1 \given f(X) = a, Z = z) > P(Y = 1 \given f(X) = b, Z = z)$.
Then, it holds that
\begin{align*}
    \EE_{Y \sim P_{Y \given X, Z}, \, S \sim \pi} \left[ Y (1-S) \given f(X) = a, Z = z \right]
    > \EE_{Y \sim P_{Y \given X, Z}, \, S \sim \pi} \left[ Y S \given f(X) = b, Z = z \right].
\end{align*}
\end{proposition}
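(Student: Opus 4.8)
The plan is to unpack both expectations into closed-form expressions and then compare them term by term. First I would observe that since $f(x_i) = a < t$, the threshold decision rule in \eqref{eq:decision-threshold-rule-algorithm} forces $s_i = 0$ deterministically, so $Y(1-S) = Y$ on the event $\{f(X) = a\}$; hence the left-hand side equals $\EE[Y \given f(X) = a, Z = z] = P(Y = 1 \given f(X) = a, Z = z)$. Symmetrically, since $f(x_i) = b > t$, the rule forces $s_i = 1$ deterministically, so $Y S = Y$ on the event $\{f(X) = b\}$, and the right-hand side equals $\EE[Y \given f(X) = b, Z = z] = P(Y = 1 \given f(X) = b, Z = z)$.

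With both sides reduced to group-conditional positive rates, the claimed inequality is exactly the hypothesis $P(Y = 1 \given f(X) = a, Z = z) > P(Y = 1 \given f(X) = b, Z = z)$, and the proof is complete. The only modeling subtlety to spell out is that the screening policy $\pi$ may in general randomize jointly across candidates, so I would note that the marginal shortlisting decision for a candidate with score strictly above (resp.\ strictly below) the threshold is still deterministic regardless of the joint law, and that $Y$ is drawn from $P_{Y \given X, Z}$ independently of the policy's internal randomness $S$, so the nested expectation factorizes cleanly conditional on $f(X)$ and $Z$.

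There is essentially no hard step here; the proposition is a short unpacking exercise. The one place to be careful is the boundary case $f(x_i) = t$, which does not arise because the hypothesis requires $a < t$ strictly and $b > t$ strictly, so the $\mathrm{Bernoulli}(\theta_f)$ branch of \eqref{eq:decision-threshold-rule-algorithm} is never invoked for the candidates in question. I would state this explicitly to make the reduction to deterministic decisions airtight, and then conclude.
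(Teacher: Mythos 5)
Your proof is correct and follows exactly the same route as the paper's: reduce each side to the corresponding group-conditional positive rate by noting that the threshold rule is deterministic strictly above and strictly below $t$, then invoke the hypothesis. The paper's proof is a one-line version of this; your additional remarks about the boundary case $f(X)=t$ and the independence of $Y$ from the policy's internal randomness are reasonable clarifications but not a different argument.
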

The above result implies that there exist pools of applicants for which an optimal policy using a calibrated classifier may shortlist a candidate from a group who is less likely to be qualified than a rejected candidate from the same group.
Importantly, the assumption under which the above within-group unfairness appears is not just a theoretical construct---it has been observed empirically in multiple real-world domains
whenever the group membership $Z$ is a spurious confounding factor that causes both $X$ and $Y$~\cite{wagner1982simpson, pearl2000models}.
The case in which the assumption holds for \emph{every} group $z \in \Zcal$ and \emph{any} threshold decision rule is known as Simpson's paradox~\cite{blyth1972simpson}. Refer to Figure~\ref{fig:illustrative} for an illustrative example.

\begin{figure}
    \centering
    \subfloat[Overall]{\includegraphics[width=0.48\textwidth]{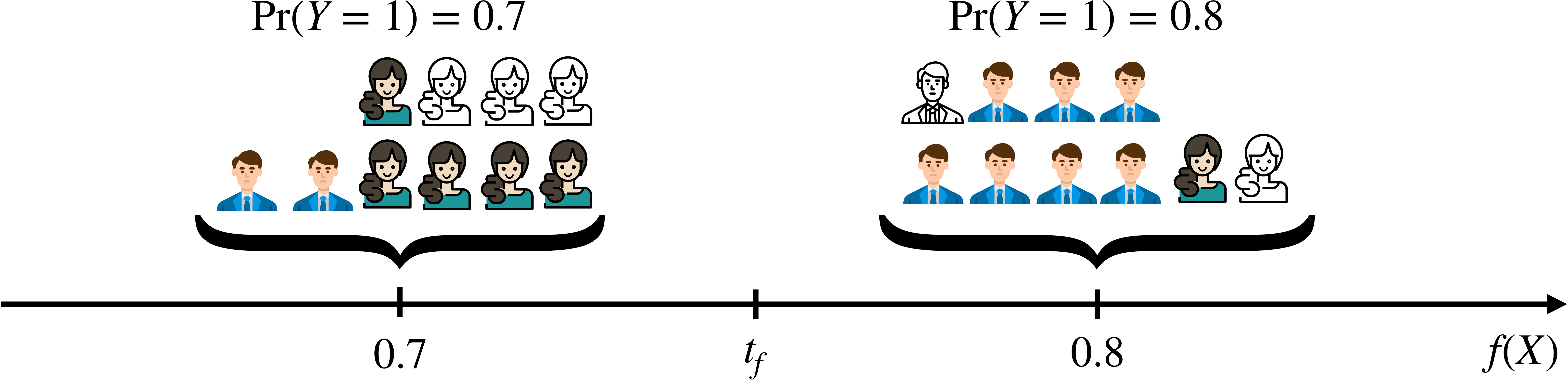}} \\
    \subfloat[Female]{\includegraphics[width=0.48\textwidth]{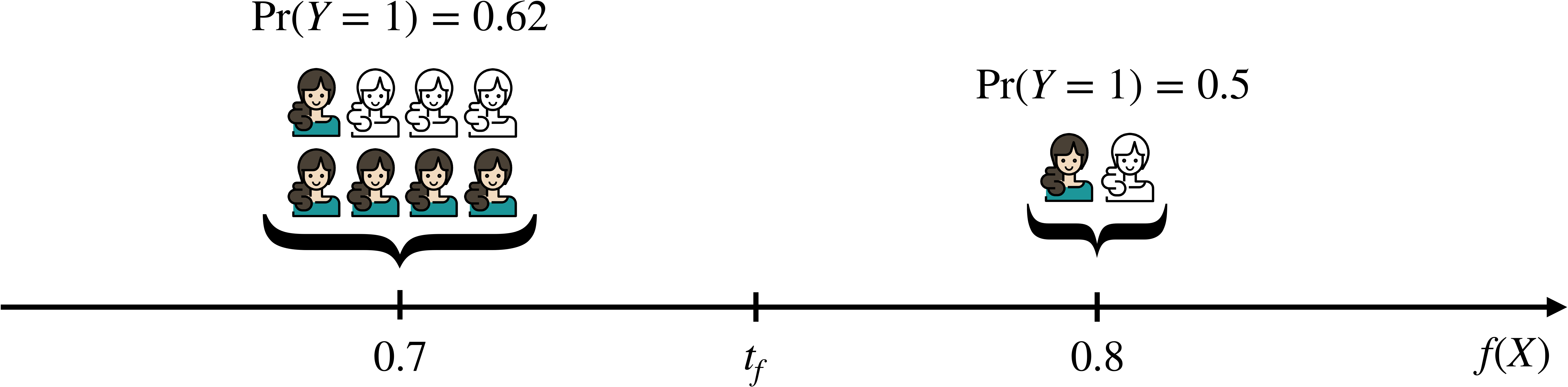}}
    \\
    \subfloat[Male]{\includegraphics[width=0.48\textwidth]{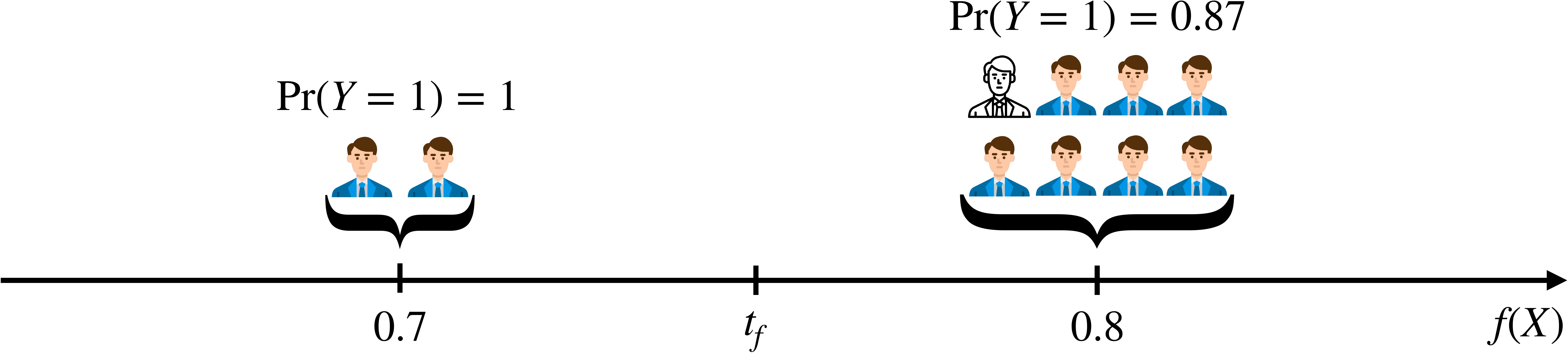}}
    \caption{An illustrative example of within-group unfairness. Panel (a) shows that candidates who are shortlisted ($f(X) > t_f$) are more likely to be qualified ($Y = 1$) than those who are rejected ($f(X) < t_f$). 
    However, panels (b) and (c) show that, after conditioning on their gender, candidates who are rejected ($f(X) < t_f$) are more likely to be qualified than those who are short listed ($f(X) > t_f$).
    Qualified candidates are shown in color.
    }
    \label{fig:illustrative}
\end{figure}
%

To avoid the above within-group unfairness, we introduce and study within-group monotonicity:
%
%
%
\begin{definition}\label{def:within-group-monotonicity}
Given a set of groups $\Zcal$, a classifier $f$ is within-group monotone if, for any $z \in \Zcal$ and $a, b \in \Rf$ such that $a < b$, $\Pr(Z=z\given f(X) = a) > 0$ and $\Pr(Z=z\given f(X) = b) > 0$, it holds that
\begin{align*}
    \Pr\rbr{Y=1\given f(X) = a, Z=z} \leq \Pr\rbr{Y=1\given f(X) = b, Z=z}.
\end{align*}
\end{definition}
In what follows, we will design a post-processing framework that, given a calibrated classifier, modifies it minimally so that it is within-group monotone, as shown in Figure~\ref{fig:wgm_illustration}. As a result, any screening policy given by a threshold decision rule using the modified classifier will not suffer from within-group unfairness.
Here note that we favor a post-processing approach, rather than an in-processing one, because post-processing approaches can be applied to any black-box classifier without asking for retraining or introducing training overhead~\cite{hardt2016equality}. Furthermore, in-processing approaches commonly need access to the feature defining group membership to ensure group-level fairness~\cite{woodworth2017learning}, which may not be available to the classifier due to privacy, legal or regulatory reasons.
Whenever it is clear from the context, we do not specify the set of groups $\Zcal$ with respect to which a classifier is within-group calibrated or monotone.
%
%
%
%
%
%
%
\begin{figure}
    \centering
    \includegraphics[width=0.6\textwidth]{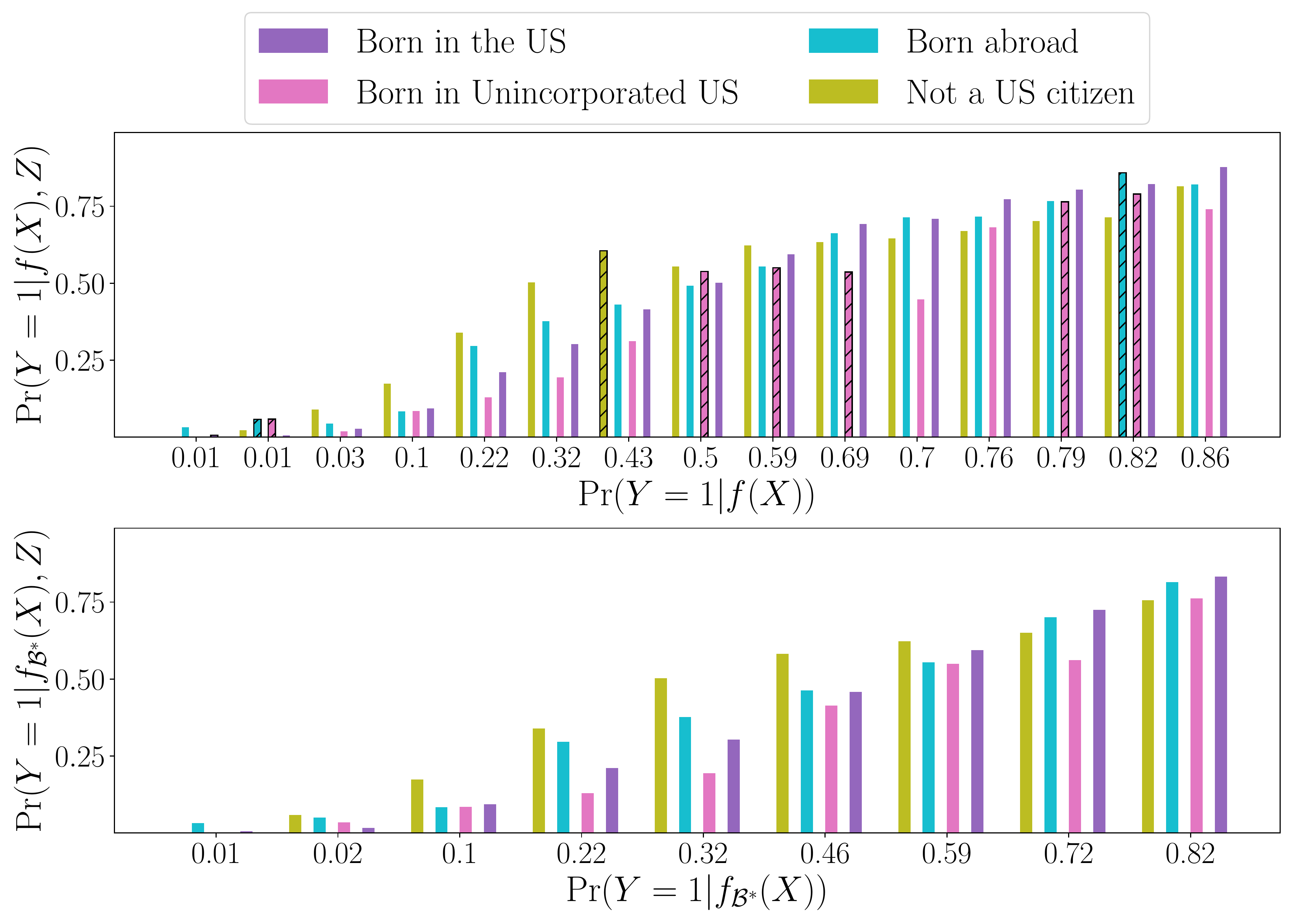}
    \caption{Quality score values $a = P(Y = 1 \given f(X) = a)$ and group conditional quality score values $a_z = P(Y = 1 \given f(X) = a, Z = z)$ of a (approximately) calibrated screening classifier $f$ with finite range trained on US Census survey data and its within-group monotone counterpart $f_{\Bcal^{*}}$ found by our post-processing framework.
    The demographic groups of interest $\Zcal$ are defined using US citizen status and the hatched bars indicate within-group monotonicity violations. Note that there exist no such violations in $f_{\Bcal^{*}}$ (second row).
    }
    \label{fig:wgm_illustration}
\end{figure}

\section{A Set Partitioning Post-Processing Framework}
\label{sec:framework}
Let $f$ be a calibrated classifier with $\Rf = \cbr{a_1, \ldots, a_n}$ and $\Pr\rbr{f(X) = a_i} = \rho_i$.
Here, note that we focus on calibrated classifiers with finite range, \ie, $\abr{\Rf} = n < \infty$, since it is \-impossible~to~find non-atomic calibrated classifiers from data\footnote{\scriptsize Given a (non-atomic) classifier $f$, there exists a variety of methods to discretize and calibrate its predictions~\cite{zadrozny2001obtaining,Zadrozny2002TransformingCS,gupta2020distribution}.
However, this is out of the scope of our work.
Moreover, for ease of exposition, we assume that $f$ is \emph{perfectly} calibrated and we have access to true value of the relevant probabilities $\rho_i$, $a_i$, $\rho_{z \given i}$ and $a_{i, z}$. 
However, our methodology can be adapted to work with approximately calibrated $f$ and noisy probability estimates as long as the estimation errors can be bounded (with high probability).
}, even asymptotically~\cite{10.5555/1036843.1036912,10.1214/20-EJS1749}.
Here, assume that $a_i < a_j$ for any $i < j$ without~loss of ge\-ne\-ra\-li\-ty.
Further, for every demographic group of interest $z \in \Zcal$, let $\Pr\rbr{Y = 1 \given f(X) = a_i, Z = z} = a_{i,z}$ and $\Pr\rbr{Z =z \given f(X) = a_i} = \rho_{z \given i}$, and note that, by definition, we have that $a_i = \sum_{z\in\Zcal}\rho_{z\given i}a_{i,z}$.
%
%
Then, our goal is to modify $f$ minimally so that it is within-group monotone.

To this end, we first note that the classifier $f$ induces a partition of $\Xcal$ 
into $n$ disjoint regions or bins $\{\Xcal_1, \ldots, \Xcal_n\}$, 
where each bin $\Xcal_i$ is characterized by $a_i$ and $\rho_i$.
Building upon this observation, we look at the problem from the perspective of set 
partitioning and seek to \emph{merge} a small number of these induced bins to achieve within-group monotonicity. 
More formally, let $\Pscr$ be the set of all partitions of the bin indices $\{1, \ldots, n\}$. Every $\Bcal\in\Pscr$ is a partition of the bin indices into a collection of nonempty and disjoint equivalence classes $\{ \Acal_1, \ldots, \Acal_{|\Bcal|} \}$, which we call cells. For each $x \in \Xcal$, denote the index of the bin it belongs to as $i(x) = \{ i \given f(x) = a_i\}$
%
%
and represent a cell in $\Bcal$ containing index $i(x)$ by $\sbr{i(x)}_\Bcal$, where we drop the subscript $\Bcal$ whenever it is clear from the context. 
Further, we know that the equivalence relation $\sim_\Bcal$ implies that, for all $i(x') \in \sbr{i(x)}$, we have that $i(x) \sim_{\Bcal} i(x')$.
Then, we can use the partition\footnote{\scriptsize We use partition instead of partition on the bin indices whenever it is clear from the context.} $\Bcal$ to define the modified classifier $f_{\Bcal} : \Xcal \rightarrow \Rbf = \{a_{\Acal}\}_{\Acal \in \Bcal}$, where 
\begin{equation*} 
    a_{\Acal} = \frac{\sum_{j \in \Acal} a_j\rho_j}{\sum_{j \in \Acal} \rho_j} \quad \text{and} \quad f_{\Bcal}(x) = a_{\sbr{i(x)}}.
\end{equation*}
%
%
Without loss of generality, we keep the cells induced by the partition $\Bcal$ in increasing order with respect to $a_{\Acal}$, \ie, $a_{\Acal_i} \leq a_{\Acal_j}$ for any $i < j$.
Next, note that, by definition, $f_{\Bcal}$ is calibrated, \ie,
\begin{equation*}
\Pr\rbr{Y = 1 \given f_{\Bcal}(X) = a_{\Acal}} = \frac{ \sum_{j \in \Acal} a_j \rho_{j}}{\sum_{j \in \Acal} \rho_{j}} = a_{\Acal},
\end{equation*}
and we have that
\begin{align*} 
 \Pr\rbr{Y = 1 \given f_{\Bcal}(X) = a_{\Acal}, Z = z} = \frac{ \sum_{j \in \Acal} \rho_j \rho_{z \given j} a_{j,z}}{\sum_{j \in \Acal} \rho_j \rho_{z \given j}} 
 := a_{\Acal,z}. 
\end{align*}
Moreover, the larger the size of the partition $\Bcal$, the more fine-grained the predictions of the classifier $f_{\Bcal}$~\cite{Gneiting2007,Wang2022ImprovingSP}.
Therefore, we can naturally think of reducing the problem to finding a partition $\Bcal$ of maximum size such that $f_\Bcal$ is within-group monotone\footnote{\scriptsize Maximizing the size of the partition $|\Bcal|$ is equivalent to minimizing the distance $d(f, f_{\Bcal}) = n - |\Bcal|$. Thus, $f_{\Bcal^{*}}$ can be viewed as the \emph{closest} within-group monotone classifier $f_{\Bcal}$ under a prediction-only access model~\cite{blasiok2022unifying}.}, \ie,
\begin{align*} 
    \underset{\Bcal\in\Pscr}{\text{maximize}} \,\, |\Bcal| \quad \text{subject to} \quad a_{\Acal_i,z}\leq a_{\Acal_j,z}
    \forall \Acal_i, \Acal_j \in \Bcal 
    \text{ such that } a_{\Acal_i} < a_{\Acal_j}, \forall z \in \Zcal. 
\end{align*}
%
%
%
However, such a problem formulation presents difficulties both in terms of tractability and soundness. 
First, we cannot expect to find such a partition in polynomial time:
%
\begin{theorem} \label{thm:np-hardness}
    Given a calibrated classifier $f$, the problem of finding the partition $\Bcal\in\Pscr$ of maximum size such that $f_{\Bcal}$ is within-group monotone is NP-hard.
\end{theorem}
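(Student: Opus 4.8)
The plan is to follow the strategy announced before the statement: introduce the \emph{equal average partition} problem (EAP), show it is NP-complete, and then reduce it to the problem of finding a maximum-size within-group monotone partition. An instance of EAP is a multiset of positive rationals $\{q_1,\dots,q_m\}$, and the question is whether there is a nonempty proper subset $A$ of $\{1,\dots,m\}$ whose average equals the average of its complement. Writing $\bar q=\frac{1}{m}\sum_j q_j$, this is equivalent to asking for a nonempty proper $A$ with $\sum_{j\in A}(q_j-\bar q)=0$, i.e.\ a nonempty proper subset whose average is exactly the global average $\bar q$.

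Membership of EAP in NP is immediate, since such a subset $A$ is a polynomially checkable certificate. For NP-hardness I would reduce from \textsc{Partition}. The obstacle here is that EAP does not by itself constrain $|A|$, so the naive map $q_j=w_j$ fails; I would fix this by padding the \textsc{Partition} instance with a suitable number of large ``dummy'' elements and rescaling, chosen so that in any equal-average split each side is forced to contain exactly one dummy and exactly half of the original elements, and so that the residual equation on the original elements becomes precisely ``the selected elements sum to half the total''. A small, index-dependent perturbation of the resulting values then makes them pairwise distinct without affecting the equivalence, and every number produced has polynomially many bits. (Equivalently, one can first reduce \textsc{Partition} to its balanced, equal-cardinality variant and then to EAP.)

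For the reduction from EAP to our problem, take an EAP instance with (without loss of generality, distinct) values $q_1<\dots<q_m$ and global average $\bar q$, and build a calibrated classifier with $n=m$ bins and two groups $\Zcal=\{z_1,z_2\}$ as follows: set $\rho_j=1/m$ for every bin; let $a_j$ be the image of $q_j$ under a fixed increasing affine map into a fixed subinterval of $(0,\frac{1}{2})$, say $(\frac{1}{4},\frac{3}{8})$, so the $a_j$ are distinct and increasing; set the group-membership probabilities to be the \emph{same in every bin}, $\rho_{z_1\given j}=\varepsilon$ and $\rho_{z_2\given j}=1-\varepsilon$ for a small fixed rational $\varepsilon$ (e.g.\ $\varepsilon=\frac{1}{10}$); and set the group-conditional quality scores to the affine functions $a_{j,z_1}=\frac{1}{2}-a_j$ and $a_{j,z_2}=(a_j-\varepsilon\,a_{j,z_1})/(1-\varepsilon)$, which preserves $a_j=\varepsilon\,a_{j,z_1}+(1-\varepsilon)a_{j,z_2}$ and, for $\varepsilon$ small enough, keeps every quantity in $[0,1]$. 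Since both $\rho_j$ and $\rho_{z\given j}$ are constant across bins, for every cell $\Acal$ the quantity $a_{\Acal}$ is just the average of $\{a_j:j\in\Acal\}$ and $a_{\Acal,z}$ the average of $\{a_{j,z}:j\in\Acal\}$, so $a_{\Acal,z_1}=\frac{1}{2}-a_{\Acal}$ and $a_{\Acal,z_2}=(a_{\Acal}-\varepsilon\,a_{\Acal,z_1})/(1-\varepsilon)$; that is, the group-conditional cell scores are the \emph{same affine functions of $a_{\Acal}$} as the bin-level scores were of $a_j$. Consequently group $z_1$'s conditional score is strictly \emph{decreasing} in $a_{\Acal}$, so any two cells with distinct values $a_{\Acal}<a_{\Acal'}$ violate within-group monotonicity for $z_1$, whereas group $z_2$'s conditional score is increasing in $a_{\Acal}$ and never causes a violation. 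Hence $f_{\Bcal}$ is within-group monotone if and only if all cells of $\Bcal$ share a common value $a_{\Acal}=\bar a$, which by the choice of the affine map holds if and only if every cell of $\Bcal$ is an equal-average subset of $\{q_1,\dots,q_m\}$. Therefore a within-group monotone partition of size at least $2$ exists if and only if the EAP instance is a YES-instance; since the one-cell partition is always within-group monotone, the maximum size is exactly $1$ on NO-instances and at least $2$ on YES-instances, so computing it would decide EAP, and the problem is NP-hard.

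I expect the bulk of the work to be in the EAP NP-completeness step, specifically in designing the dummy-element/rescaling gadget so that equal-average splits are forced to be balanced and to encode exactly a \textsc{Partition} solution, while simultaneously keeping all values pairwise distinct and of polynomial size. The reduction to our problem is comparatively clean; its one delicate design choice is making both the group-conditional quality scores \emph{affine} in the bin score and the group-membership probabilities bin-independent, since this is exactly what collapses the within-group monotonicity condition (otherwise subtle, because cells need not be intervals of $\{1,\dots,m\}$) into the single requirement that all cell values coincide.
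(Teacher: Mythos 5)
Your proposal is correct and follows essentially the same route as the paper: reduce to the two-group case where both group-conditional scores are affine in the bin score (one anti-correlated with it) and the group-membership probabilities are bin-independent, so that a within-group monotone partition of size $\geq 2$ exists iff the values admit an equal-average split, and then establish NP-completeness of the equal average partition problem by a padding/dummy reduction from (balanced) \textsc{Partition}. The paper realizes the same idea with the concrete choice $a_{i,z_1}=s_i$, $a_{i,z_2}=1-s_i$ and proves the EAP hardness in full by adding two copies of $n\sigma$ to an equal-cardinality \textsc{Partition} instance, which is exactly the gadget you sketch.
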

To prove the above result in Appendix~\ref{app:np-hardness}, we first show 
that, by finding the partition $\Bcal$ of maximum size such that $f_{\Bcal}$ is within-group monotone, we can decide whether there exists a partition $\Bcal'$ of size $\abr{\Bcal'} = 2$ such that $f_{\Bcal'}$ is within-group monotone. 
Then, we show that the latter decision problem is NP-complete by a reduction from a variation of the partition problem~\cite{karp1972reducibility}, which we refer to as the equal average partition problem and prove it is NP-complete.

Second, even if the size of the partition $\Bcal$ is large, the shortlists provided by threshold decision rules using $f_{\Bcal}$ may differ greatly from those using $f$. 
The reason is that, in general, we may end up merging very different bins to ensure monotonicity within groups and, as a consequence, $f_{\Bcal}$ may rank (pairs of) candidates \emph{strictly differently}. 
More specifically, $f_{\Bcal}$ may \emph{not} satisfy the following monotonicity property with respect to $f$:
%
%
%
\begin{definition}
A classifier $f'$ is monotone with respect to $f$ if, for all $f(x_1), f(x_2) \in \Rf$ such that $f(x_1) < f(x_2)$, it holds that $f'(x_1) \leq f'(x_2)$.
\end{definition}
%
%
%
To guarantee that $f_\Bcal$ is monotone with respect to $f$, we need to restrict our attention to the set of contiguous partitions $\Bscr\subseteq\Pscr$ of $\{1, \ldots, n\}$, \ie, for any $\Bcal \in \Bscr$, if $i(x_1)< i(x_2)< i(x_3)$ and $i(x_1) \sim_\Bcal i(x_3)$, then it also holds that $i(x_1) \sim_\Bcal i(x_2)$ and $i(x_2) \sim_\Bcal i(x_3)$. 
More formally, we have the following result:
\begin{proposition} \label{prop:monotone-contiguous}
Given a classifier $f$ with $\Rf = \{a_1, \ldots, a_n\}$, $f_{\Bcal}$ is monotone with respect to $f$ iff $\Bcal$ 
is a contiguous partition on $\{1, \ldots, n\}$.
\end{proposition}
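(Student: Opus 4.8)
The plan is to prove both directions of the equivalence by reasoning directly about which bin indices get assigned the same value under $f_{\Bcal}$ and how those values are ordered.

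\textbf{($\Leftarrow$) Contiguous implies monotone with respect to $f$.} Suppose $\Bcal$ is contiguous. Take $x_1, x_2$ with $f(x_1) < f(x_2)$, i.e.\ $i(x_1) < i(x_2)$ (using the convention $a_i < a_j$ for $i<j$). If $i(x_1) \sim_{\Bcal} i(x_2)$, then $f_{\Bcal}(x_1) = a_{[i(x_1)]} = a_{[i(x_2)]} = f_{\Bcal}(x_2)$ and we are done. If they lie in different cells, I want to show $a_{[i(x_1)]} \le a_{[i(x_2)]}$; in fact I will argue it is strict. The key observation is that for a contiguous partition the cells are intervals of consecutive indices, so cell $\Acal_p = \{\ell_p, \ell_p+1, \ldots, r_p\}$ and the cell containing $i(x_1)$ ends at some index $r$ with $r < i(x_2)$, while the cell containing $i(x_2)$ starts at some index $\ell > r$. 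Then
\begin{equation*}
a_{[i(x_1)]} = \frac{\sum_{j \in [i(x_1)]} a_j \rho_j}{\sum_{j \in [i(x_1)]} \rho_j}
\end{equation*}
is a weighted average of $\{a_j\}$ over indices all $\le r$, hence $a_{[i(x_1)]} \le a_r < a_\ell \le a_{[i(x_2)]}$, where the middle strict inequality uses $r < \ell$ and the strictly increasing order of the $a_i$, and the outer inequalities are the elementary fact that a weighted average lies between its extreme values. This gives $f_{\Bcal}(x_1) < f_{\Bcal}(x_2)$, so $f_{\Bcal}$ is monotone with respect to $f$.

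\textbf{($\Rightarrow$) Monotone with respect to $f$ implies contiguous.} I will prove the contrapositive. Suppose $\Bcal$ is \emph{not} contiguous, so there are indices $i_1 < i_2 < i_3$ with $i_1 \sim_{\Bcal} i_3$ but $i_2 \not\sim_{\Bcal} i_1$ (hence $i_2 \not\sim_{\Bcal} i_3$ as well, since $i_1\sim_\Bcal i_3$). Pick $x_1, x_2, x_3$ with $i(x_k) = i_k$. The cell $\Acal := [i_1] = [i_3]$ contains both $i_1$ and $i_3$ but not $i_2$, and the cell $\Acal' := [i_2]$ is disjoint from $\Acal$. Now $f_{\Bcal}(x_1) = f_{\Bcal}(x_3) = a_{\Acal}$, while $f_{\Bcal}(x_2) = a_{\Acal'}$. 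Consider the pair $(x_1, x_2)$: we have $f(x_1) < f(x_2)$, so monotonicity forces $a_{\Acal} \le a_{\Acal'}$. Consider the pair $(x_2, x_3)$: we have $f(x_2) < f(x_3)$, so monotonicity forces $a_{\Acal'} \le a_{\Acal}$. Together these give $a_{\Acal} = a_{\Acal'}$. This does not yet contradict the \emph{definition} of monotonicity (which allows ties), so the remaining step is to rule out this equality using the convention, stated in the paper, that cells are kept in strictly distinct order — or, more robustly, to note that $a_{\Acal}$ is a strict weighted average over indices that include $i_1 < i_2$ and $a_{\Acal'}$ is one over indices that include $i_2$, and derive $a_{\Acal} < a_{\Acal'}$ or $a_{\Acal'} < a_{\Acal}$ outright. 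Concretely, since $i_1 \in \Acal$ with $i_1 < i_2$, every index of $\Acal$ that is less than $i_2$ contributes a value $a_j \le a_{i_2 - 1} < a_{i_2}$; splitting $\Acal$ at $i_2$ and using that $\Acal$ has at least one index $i_3 > i_2$, one shows $a_{\Acal}$ and $a_{\Acal'}$ cannot be forced equal unless the $a_j$ fail to be strictly increasing. The cleanest route, and the one I would write up, is to appeal to the convention $a_{\Acal_i} \le a_{\Acal_j}$ only for $i<j$ together with the strict ordering $a_1 < \cdots < a_n$ to conclude that distinct cells straddling each other as above cannot have equal averages, yielding a genuine violation of monotonicity with respect to $f$.

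\textbf{Main obstacle.} The forward direction's only subtlety is the boundary case of \emph{ties}: Definition of ``monotone with respect to $f$'' permits $f'(x_1) = f'(x_2)$, so the bare inequalities from monotonicity give $a_{\Acal} = a_{\Acal'}$ rather than an immediate contradiction. I expect the bulk of the care in the write-up to go into showing this equality is impossible — either by invoking the strict ordering convention on cell averages that the paper has already adopted, or by a short direct computation showing that interleaved cells $\Acal \ni i_1 < i_3$ and $\Acal' \ni i_2$ with $i_1 < i_2 < i_3$ cannot have $a_{\Acal} = a_{\Acal'}$ given $a_1 < \cdots < a_n$. Everything else — that contiguous cells are index-intervals, and that weighted averages respect the order of the underlying values — is routine.
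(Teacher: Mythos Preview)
Your argument mirrors the paper's closely. For contiguous $\Rightarrow$ monotone, both you and the paper reduce to the fact that contiguous cells are index intervals and that a weighted average lies between its extreme values; the paper packages the interval-ordering step into a separate helper lemma, but the content is identical.

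For monotone $\Rightarrow$ contiguous, you argue by contrapositive and the paper by contradiction, and both arrive at the same two-way case split on the sign of $a_{[i(x_1)]}-a_{[i(x_2)]}$. You are in fact more careful than the paper here: you explicitly flag the tie case $a_{[i(x_1)]}=a_{[i(x_2)]}$ as the obstacle, whereas the paper's proof treats only the two strict inequalities and is silent on equality.

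However, your proposed resolution of that tie case does not work. Interleaved cells \emph{can} have equal averages: take $n=3$, $a_1=0.1$, $a_2=0.3$, $a_3=0.5$ and $\rho_1=\rho_2=\rho_3=1/3$; then the non-contiguous partition $\Bcal=\{\{1,3\},\{2\}\}$ gives $a_{\{1,3\}}=a_{\{2\}}=0.3$, so $f_\Bcal$ is constant and therefore monotone with respect to $f$. Neither the paper's ordering convention on cell averages (which is stated with $\leq$, not $<$) nor any ``short direct computation'' from $a_1<\cdots<a_n$ can exclude this. The obstacle you isolated is genuine and is shared by the paper's own proof; as stated, the forward implication needs an additional genericity hypothesis (distinct cells have distinct averages) that neither argument supplies.
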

Surprisingly, while $\abr{\Bscr} = 2^{n-1}$, we will show in the next section that it is possible to find the optimal contiguous partition $\Bcal^{*} = \argmax_{\Bcal \in \Bscr} |\Bcal|$ such that $f_{\Bcal^{*}}$ is within-group monotone in polynomial time using dynamic programming.

\section{Optimal Set Partitioning via Dynamic Programming}
\label{sec:algorithm}

Since the structure of our problem resembles isotonic regression,
one may think of using a simple variation of the many times re-discovered Pool Adjacent Violators (PAV) algorithm~\cite{Ayer1955ANED,EedenConstancevan1958Taeo,10.1093/biomet/46.3-4.317,10.2307/2332806} to find the optimal (contiguous) partition.
However, in what follows, we first show that the PAV algorithm may not find
the optimal partition---it is not even guaranteed to find a partition 
satisfying an intuitive type of local optimality.
Then, building on the reasons why the PAV algorithm may not find the optimal 
partition, we derive an efficient algorithm based on dynamic 
programming that is guaranteed to find the optimal partition.
\begin{algorithm}[t]
\begin{algorithmic}[1]
    \STATE{{\bf Input:} $\cbr{a_{1,z}, \ldots, a_{n, z}}_{z\in\Zcal}$}
    \STATE{{\bf Initialize:} $\Bcal_{\text{pav}}=\cbr{\cbr{1}, \ldots, \cbr{n}}$}
    \vspace{1mm}
    \WHILE{$\exists \Acal_{i-1}, \Acal_{i} \in \Bcal_{\text{pav}} \text{ and } z\in\Zcal \text{ such that } a_{\Acal_i,z}<a_{\Acal_{i-1},z}$}
            \STATE{$\Bcal_{\text{pav}} = \Bcal_{\text{pav}} \setminus \cbr{\Acal_{i-1},\Acal_{i}}$}
            \STATE{$\Bcal_{\text{pav}} = \Bcal_{\text{pav}} \cup \cbr{\Acal_{i-1}\cup \Acal_{i}}$}
    \ENDWHILE
    \STATE{{\bf return} $\Bcal_{\text{pav}}$}
\end{algorithmic}
\caption{It returns a partition $\Bcal_{\text{pav}}$ such that $f_{\Bcal_{\text{pav}}}$ is within-group monotone.} 
\label{alg:pav}
\end{algorithm}

\subsection{Pool Adjacent Violators (PAV) Algorithm}
%
%
%
In comparison with the original PAV algorithm, the only difference is that,
in our setting, one needs to check for monotonicity violations across multiple 
sets of conditional predictors, one per group $z \in \Zcal$, rather than 
only one set of predictors. 
However, the main idea underpinning the PAV algorithm remains the same, \ie, 
as long as there are monotonicity violations between two adjacent cells, the 
algorithm merges the corresponding cells into one.
%
%
%
Algorithm~\ref{alg:pav} 
summarizes the overall procedure, which has complexity
$\Ocal(n^2\times \abr{\Zcal})$ and is guaranteed to return a partition $\Bcal_{\text{pav}}$ such that $ f_{\Bcal_{\text{pav}}}$ is within-group monotone, as formalized by the following 
Proposition:
\begin{proposition} \label{prop:pav-within-group-monotonicity}
Algorithm~\ref{alg:pav} returns a partition $\Bcal_{\text{pav}} \in \Bscr$ such that the classifier $f_{\Bcal_{\text{pav}}}$ is within-group monotone.
\end{proposition}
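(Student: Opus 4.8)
The plan is to establish two facts about the output of Algorithm~\ref{alg:pav}: that the returned partition $\Bcal_{\text{pav}}$ is contiguous, i.e. $\Bcal_{\text{pav}} \in \Bscr$, and that the induced classifier $f_{\Bcal_{\text{pav}}}$ is within-group monotone. The second fact is almost immediate from the termination condition of the while loop, so the real content is in showing the loop always merges \emph{adjacent} cells in a way that preserves contiguity, and then checking that the loop actually terminates so that the termination condition is ever reached.

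First I would argue termination and contiguity together by induction on the iterations of the while loop. The invariant I would carry is: at the start of every iteration, $\Bcal_{\text{pav}}$ is a contiguous partition of $\{1,\ldots,n\}$, ordered so that its cells are consecutive blocks of indices $\Acal_1 = \{1,\ldots,k_1\}, \Acal_2 = \{k_1+1,\ldots,k_2\}, \ldots$. The base case holds since the initialization $\{\{1\},\ldots,\{n\}\}$ is the finest contiguous partition. For the inductive step, the loop picks a pair of adjacent cells $\Acal_{i-1}, \Acal_i$ (adjacent in the sorted order, hence consisting of consecutive index blocks by the invariant) exhibiting a violation $a_{\Acal_i,z} < a_{\Acal_{i-1},z}$ for some $z$, and replaces them by their union $\Acal_{i-1}\cup\Acal_i$, which is again a block of consecutive indices. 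Thus the result is still a contiguous partition (one can invoke Proposition~\ref{prop:monotone-contiguous} here, or just check the definition directly), establishing the invariant. Since each iteration strictly decreases $|\Bcal_{\text{pav}}|$ by one and the size is bounded below by $1$, the loop performs at most $n-1$ iterations and terminates; the $\Ocal(n^2\times|\Zcal|)$ bound then follows from re-scanning the at most $n$ adjacent pairs across all $|\Zcal|$ groups each iteration. Upon termination, $\Bcal_{\text{pav}} \in \Bscr$.

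The one subtlety I would be careful about is whether merging a pair to fix a violation for one group $z$ can be done while the ordering "$a_{\Acal_i} < a_{\Acal_j}$" used in the definition of within-group monotonicity stays consistent with the sorted order the algorithm maintains. Here I would note that the algorithm sorts cells by their (unconditional) calibrated value $a_{\Acal}$, and that within-group monotonicity (Definition~\ref{def:within-group-monotonicity}) is exactly the requirement $a_{\Acal_i,z}\le a_{\Acal_j,z}$ for all $z$ whenever $a_{\Acal_i} < a_{\Acal_j}$; the case of ties $a_{\Acal_i} = a_{\Acal_j}$ imposes no constraint, so it suffices that the while-loop condition is negated at termination: for every pair of adjacent cells and every $z$, $a_{\Acal_i,z} \ge a_{\Acal_{i-1},z}$. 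Since $\Bcal_{\text{pav}}$ is contiguous, its cells are linearly ordered by index blocks and adjacent-pair monotonicity for each fixed $z$ propagates transitively to all pairs, giving $a_{\Acal_i,z} \le a_{\Acal_j,z}$ for every $i < j$ and every $z$. Hence $f_{\Bcal_{\text{pav}}}$ is within-group monotone.

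I expect the main obstacle to be purely expository rather than mathematical: making precise the bookkeeping that "adjacent cells are consecutive index blocks" so that the merge step visibly preserves contiguity, and being explicit that the sorted-by-$a_{\Acal}$ order the algorithm uses coincides (up to ties, which are unconstrained) with the index order on a contiguous partition. Once that correspondence is pinned down, both claims reduce to the elementary observations above — strict decrease of $|\Bcal_{\text{pav}}|$ for termination, and negation of the loop guard plus transitivity along the chain of cells for within-group monotonicity.
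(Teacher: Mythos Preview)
Your proposal is correct and, on the central point, matches the paper: the paper's proof also reduces within-group monotonicity to the absence of \emph{adjacent}-cell violations at termination, arguing (via a ``nearest violating triplet'' contradiction) that a non-adjacent violation would force a closer adjacent one --- which is just your transitivity-along-the-chain argument phrased contrapositively. Where you go beyond the paper is in explicitly handling termination (strict decrease of $|\Bcal_{\text{pav}}|$) and the invariant that every iterate is a contiguous partition, both of which the paper leaves implicit even though $\Bcal_{\text{pav}}\in\Bscr$ is part of the statement; your treatment is therefore more complete, and your care about the coincidence of the index order with the $a_{\Acal}$ order (up to ties) is a genuine point the paper glosses over.
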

Unfortunately, while the original PAV algorithm does enjoy global optimality guarantees for the isotonic regression problem\footnote{\scriptsize In 
the isotonic regression problem~\cite{barlow1972isotonic}, given a set of response variables $\{ y_i \}_{i \in [n]}$, the goal is to find 
a set of predictor values $\{ x_i \}_{i \in [n]}$, with $x_i \leq x_{i+1}$ for 
all $i \in [n]$, such that $\sum_{i} \ell(x_i, y_i)$ is minimized, where 
$\ell(x_i, y_i)$ is a loss measuring how well $x_i$ approximates $y_i$.} under multiple choices of loss functions~\cite{yu2016exact, jordan2019optimal}, this is not true for our problem. There exist many instances for which Algorithm~\ref{alg:pav} fails to find the optimal 
partition $\Bcal^{*}$, \eg, refer to Figure~\ref{fig:exp_violations_1} in Appendix~\ref{app:exp-partitions}.
%
%
In fact, Algorithm~\ref{alg:pav} does not even enjoy a type of intuitive local optimality guarantee 
based on the notion of dominance~\cite{Wang2022ImprovingSP}:
\begin{definition}
Let $f$ and $f'$ be calibrated classifiers. Classifier $f$ dominates $f'$ if, 
for any $x_1,x_2\in \Xcal$ such that $f(x_1) = f(x_2)$, it holds that 
$f'(x_1) = f'(x_2)$.
\end{definition}
More specifically, if $f_{\Bcal}$ dominates $f_{\Bcal'}$, it can be shown that the expected size of the shortlists provided by the optimal screening policies using $f_{\Bcal}$ are not larger than those using $f_{\Bcal'}$~\cite[Corollary 4.3]{Wang2022ImprovingSP}) and it clearly holds that $|\Bcal| \geq |\Bcal'|$.
%
%
For example, 
let $\Rf = \{a_1, a_2, a_3\}$, $\Zcal = \cbr{z_1,z_2}$ and 
$\rho_i\rho_{z\given i}=\frac{1}{6}$ for all $i \in \{1, 2, 3\}$ and 
$z\in\Zcal$.
Further, let $a_{1,z_2}=a_{2,z_1}=a_{3,z_2}=\alpha$, 
$a_{1,z_1}=2\alpha$, $a_{2,z_2}=3\alpha$ and $a_{3,z_1}=4\alpha$, 
where $\alpha\in[0,0.25]$. 
Then, 
Algorithm~\ref{alg:pav} returns $\Bcal_{\text{pav}} = \cbr{\cbr{1,2,3}}$, however, $f_{\Bcal_{\text{pav}}}$ is dominated by $f_{\Bcal}$, with $\Bcal = \cbr{\cbr{1},\cbr{2,3}}$, which is also within-group monotone. Refer to Appendix~\ref{app:pav-suboptimality} for details.
%
%

The reason why Algorithm~\ref{alg:pav} may fail to find the optimal partition is that, 
whenever it tries to fix a monotonicity violation between two adjacent cells $\Acal_{i-1}$ and $\Acal_i$, 
it does so by mer\-ging them. However, in our problem, the optimal fix may require 
merging cells $\Acal_{i}$ and $\Acal_{i+1}$.
Building on this insight, we will design an efficient algorithm based on dynamic programming that provably finds the optimal~partition\-. 
\begin{algorithm}[t]
\begin{algorithmic}[1]
    \STATE{{\bf Input:} $\cbr{a_{1,z}, \ldots, a_{n, z}}_{z\in\Zcal}$}
    \STATE{{\bf Initialize:} $\Bcal_{l,r} = \cbr{}$ $\forall l,r\in\cbr{2, \ldots, n}$, $\Bcal_{1,r} = \cbr{1, \ldots, r}$ $\forall r \in \cbr{1, \ldots, n}$}
    \FOR{$l \in \cbr{2,\ldots,n}$}
        \FOR{$r \in \cbr{l,\ldots,n}$}
            \STATE{$\Scal_{l,r} = \cbr{k | k<l, a_{\cbr{k, \ldots, l-1},z}\leq a_{\cbr{l, \ldots, r},z}~\forall{z\in\Zcal}}$}\quad \COMMENT{Refer to Lemma.~\ref{lem:recursive-approach}}
            \IF{$\Scal_{l,r}=\emptyset$}
                \STATE{\textbf{Continue}}\quad \COMMENT{\textit{In this case $\Bscr_{l,r} = \emptyset$}}
            \ENDIF
            \STATE{$k^* = \argmax_{k\in\Scal_{l,r}}\abr{\Bcal_{k,l-1}}$}
            \STATE{$\Bcal_{l,r} = \Bcal_{k^*,l-1} \cup \cbr{\cbr{l, \ldots, r}}$} 
        \ENDFOR
    \ENDFOR
    \STATE{$l^* = \argmax_{i\in\cbr{1, \ldots, n}} \abr{\Bcal_{i,n}}$}
    \STATE{{\bf return} $\Bcal_{l^*,n}$}
\end{algorithmic}
\caption{It returns the optimal partition $\Bcal^{*}$ such that $f_{\Bcal^{*}}$ is within-group monotone.}
\label{alg:optimal}
\end{algorithm}

\subsection{An Optimal Dynamic Programming Algorithm}
%
%
Our starting point is the following observation, which allows us to break down the problem
of finding the optimal partition $\Bcal^{*}$ into several subproblems.
Let $\Bscr_r$ be the set of con\-ti\-guous partitions of the bin indices $\cbr{1, \ldots, r}$, with $r \leq n$, and
%
%
%
$\Bscr_{l,r} \subseteq \Bscr_r$ be the subset of those partitions such that, for any $\Bcal = \{ \Acal_{1}, \ldots, \Acal_{|\Bcal|}\} \in \Bscr_{l,r}$, it holds that $\Acal_{|\Bcal|} = \{l, \ldots, r\}$ and $f_{\Bcal \cup \Bcal'}$ is within-group monotone on the region of the feature space defined by $\cup_{i \leq r} \Xcal_i$, where $\Bcal'$ is any partition of the bin indices $\{r+1, \ldots, n\}$\footnote{\scriptsize Note that it may be impossible to satisfy both conditions simultaneously if, for example, the Simpon'{}s paradox~\cite{Simpson1951TheIO} holds, \ie, for every group $z \in \Zcal$ and every pair of indices $i < j$, we have that $a_{i,z} > a_{j,z}$. In those cases, we may have that $\Bscr_{l,r}=\emptyset$ for all $1<l\leq r$.}. 
Then, it clearly holds that the optimal partition $\Bcal^{*} \in \cup_{l=1}^{n}\Bscr_{l,n}$ and thus we can break the problem of finding $\Bcal^{*}$ into $n$ subproblems, \ie, finding the optimal partition $\Bcal^{*}_{l, n} = \argmax_{\Bcal \in \Bscr_{l,n}} |\Bcal|$ within in each subset $\Bscr_{l,n}$.
From now on, with a slight abuse of notation, we will write $f_{\Bcal}$ instead of $f_{\Bcal \cup \Bcal'}$ whenever $\Bcal'$ refers to any partition of the bin indices not in $\Bcal$ and it is clear from the context.


Next, we realize that we can efficiently find the optimal partition $\Bcal^{*}_{l, n}$ in each subset $\Bscr_{l,n}$ recursively using dynamic programming. 
%
%
The key idea of the recursion is that any partition $\Bcal \in \Bscr_{l,r}$ 
needs to satisfy the following necessary and sufficient conditions:
%
%
\begin{lemma}\label{lem:recursive-approach}
Given any $\Bcal \in \Bscr_r$, it holds that $\Bcal \in \Bscr_{l,r}$ if and only if $\exists k<l$ such that $\Bcal\setminus\cbr{\R{l}{r}} \in\Bscr_{k,l-1}$ and $a_{\cbr{k, \ldots, l-1},z}\leq a_{\cbr{l, \ldots, r},z}$ $\forall z \in \Zcal$. 
\end{lemma}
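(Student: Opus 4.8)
The plan is to unpack the definition of $\Bscr_{l,r}$ and show that the within-group monotonicity of $f_\Bcal$ on the region $\cup_{i\le r}\Xcal_i$ decomposes into exactly two pieces: the monotonicity of $f$ restricted to the bins $\{1,\dots,l-1\}$ (partitioned according to $\Bcal\setminus\{\R{l}{r}\}$), and a single ``boundary'' inequality between the last cell $\Acal_{|\Bcal|-1}$ of $\Bcal\setminus\{\R{l}{r}\}$ and the cell $\R{l}{r}$. Because $\Bcal$ is contiguous and its last cell is fixed to be $\R{l}{r}$ by the definition of $\Bscr_{l,r}$, the cell immediately preceding it must be of the form $\R{k}{l-1}$ for some $k<l$; this $k$ is the index the lemma quantifies over. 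I would first record this structural fact, so that ``$\Bcal\setminus\{\R{l}{r}\}\in\Bscr_{k,l-1}$'' even makes sense.

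For the forward direction, suppose $\Bcal\in\Bscr_{l,r}$. By definition the ordered cells of $\Bcal$ are $\Acal_1,\dots,\Acal_{|\Bcal|}$ with $\Acal_{|\Bcal|}=\R{l}{r}$, and contiguity forces $\Acal_{|\Bcal|-1}=\R{k}{l-1}$ for the appropriate $k<l$. Within-group monotonicity of $f_\Bcal$ on $\cup_{i\le r}\Xcal_i$ means $a_{\Acal_i,z}\le a_{\Acal_j,z}$ for all $z\in\Zcal$ and all $i<j$ among the first $|\Bcal|$ cells (using Proposition~\ref{prop:monotone-contiguous} to know the $a_{\Acal}$ are already in increasing order, so ``$a_{\Acal_i}<a_{\Acal_j}$'' reduces to $i<j$ once we also observe ties can be absorbed). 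Restricting to $i<j\le|\Bcal|-1$ gives exactly that $f_{\Bcal\setminus\{\R{l}{r}\}}$ is within-group monotone on $\cup_{i\le l-1}\Xcal_i$, i.e.\ $\Bcal\setminus\{\R{l}{r}\}\in\Bscr_{k,l-1}$; taking $j=|\Bcal|$ and $i=|\Bcal|-1$ gives the boundary inequality $a_{\R{k}{l-1},z}\le a_{\R{l}{r},z}$ for all $z$. For the converse, assume such a $k$ exists. Then $\Bcal=\bigl(\Bcal\setminus\{\R{l}{r}\}\bigr)\cup\{\R{l}{r}\}$ is contiguous on $\{1,\dots,r\}$ with last cell $\R{l}{r}$, so it lies in $\Bscr_r$ and has the required shape; it remains to verify within-group monotonicity across \emph{all} pairs of its cells. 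Pairs entirely inside $\Bcal\setminus\{\R{l}{r}\}$ are handled by the hypothesis $\Bcal\setminus\{\R{l}{r}\}\in\Bscr_{k,l-1}$; the pair $(\Acal_{|\Bcal|-1},\Acal_{|\Bcal|})=(\R{k}{l-1},\R{l}{r})$ is handled by the boundary inequality; and any pair $(\Acal_i,\R{l}{r})$ with $i<|\Bcal|-1$ follows by transitivity, since $a_{\Acal_i,z}\le a_{\R{k}{l-1},z}\le a_{\R{l}{r},z}$.

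The step I expect to require the most care is the transitivity argument in the converse direction, together with the bookkeeping about the ordering of cells: I must make sure that being within-group monotone only ``across adjacent cells'' genuinely propagates to all pairs, which is where the per-group chain of inequalities $a_{\Acal_1,z}\le a_{\Acal_2,z}\le\cdots$ is used, and that the ``for all $z$'' quantifier is threaded consistently through each link. A secondary subtlety is the interaction with Proposition~\ref{prop:monotone-contiguous} and the convention (stated in the excerpt) that cells are kept in increasing order of $a_\Acal$: I would note that the definition of $\Bscr_{l,r}$ pins $\R{l}{r}$ as the \emph{last} cell, so I should either argue the boundary inequality is automatically consistent with $a_{\R{k}{l-1}}\le a_{\R{l}{r}}$ (it is, by averaging, once the conditional inequalities hold and $a=\sum_z\rho_{z\mid i}a_{i,z}$), or simply carry the ordering through as part of the definition without needing to re-derive it. Everything else is a direct translation between the set-partition language and the inequalities defining within-group monotonicity.
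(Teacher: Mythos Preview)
Your proposal is correct and follows essentially the same approach as the paper. The paper's proof also splits into the two directions, uses contiguity (via Proposition~\ref{prop:monotone-contiguous}) to identify the penultimate cell as $\R{k}{l-1}$, and in the converse handles pairs $(\Acal_i,\R{l}{r})$ with $i<|\Bcal|-1$ by exactly the transitivity chain $a_{\Acal_i,z}\le a_{\R{k}{l-1},z}\le a_{\R{l}{r},z}$ (phrased there as a short contradiction). The only cosmetic difference is that the paper factors the forward boundary inequality into a separate auxiliary lemma, whereas you read it off directly from the pair $(\Acal_{|\Bcal|-1},\Acal_{|\Bcal|})$.
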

%
%
%
Consequently, we can efficiently find all the partitions in the subsets $\Bscr_{l,r}$ 
iterating through $l$ using the partitions in the subsets $\Bscr_{k,l-1}$ with $k < l$. 
Finally, by construction, it clearly holds that, if $\Bcal^{*}_{l, r} = \Bcal' \cup \cbr{\R{l}{r}}$, with $\Bcal' \in \Bscr_{k,l-1}$, is the optimal partition in $\Bscr_{l, r}$ then $\Bcal' = \Bcal^{*}_{k, l-1}$ is the optimal partition in $\Bscr_{k, l-1}$.
As a result, at each step of the recursion, we only need to store the optimal partition $\Bcal^{*}_{l,r}$, not all partitions in $\Bscr_{l,r}$.
%

Algorithm~\ref{alg:optimal} summarizes the overall procedure, which has complexity 
$\Ocal(n^3 \times \abr{\Zcal})$ and is guaranteed to find the optimal partition $\Bcal^{*}$, as formalized by the following theorem:
%
%
\begin{theorem}\label{thm:optimal}
Algorithm~\ref{alg:optimal} returns $\Bcal^{*} = \argmax_{\Bcal \in \Bscr} |\Bcal|$ such that $f_{\Bcal^{*}}$ is within-group monotone.
\end{theorem}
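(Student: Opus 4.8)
The plan is to prove correctness of Algorithm~\ref{alg:optimal} by establishing, via induction on the iteration order, that after the double loop completes, $\Bcal_{l,r}$ stored by the algorithm equals the optimal partition $\Bcal^{*}_{l,r} = \argmax_{\Bcal \in \Bscr_{l,r}} |\Bcal|$ whenever $\Bscr_{l,r} \neq \emptyset$, and that $\Bcal_{l,r} = \emptyset$ (the sentinel) exactly when $\Bscr_{l,r} = \emptyset$. Once this invariant is in hand, the final line $l^{*} = \argmax_{i} |\Bcal_{i,n}|$ together with the decomposition $\Bscr = \cup_{l=1}^{n} \Bscr_{l,n}$ (noted in the text just before Lemma~\ref{lem:recursive-approach}) immediately yields that $\Bcal_{l^{*},n}$ is a maximum-size partition in $\Bscr$, i.e.\ $\Bcal^{*}$, and that $f_{\Bcal^{*}}$ is within-group monotone since it lies in some $\Bscr_{l,n}$ and all members of $\Bscr_{l,n}$ induce within-group monotone classifiers by definition of $\Bscr_{l,n}$.

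For the base case, I would handle the cells of the form $\{1,\ldots,r\}$: the algorithm initializes $\Bcal_{1,r} = \{\{1,\ldots,r\}\}$, and this single-cell partition is trivially the only, hence optimal, element of $\Bscr_{1,r}$, which is nonempty since merging all of $\{1,\ldots,r\}$ into one bin collapses the group-conditional quality scores within that region and therefore poses no within-group monotonicity violation inside $\cup_{i\le r}\Xcal_i$. For the inductive step, fix $l \ge 2$ and $r \ge l$ and assume the claim holds for all $\Bcal_{k,l-1}$ with $k < l$ (these are computed in an earlier outer-loop iteration). Lemma~\ref{lem:recursive-approach} tells us that $\Bscr_{l,r} \neq \emptyset$ iff there is some $k \in \Scal_{l,r}$ with $\Bscr_{k,l-1} \neq \emptyset$, and that every $\Bcal \in \Bscr_{l,r}$ has the form $\Bcal' \cup \{\{l,\ldots,r\}\}$ with $\Bcal' \in \Bscr_{k,l-1}$ for such a $k$; conversely any such $\Bcal' \cup \{\{l,\ldots,r\}\}$ lies in $\Bscr_{l,r}$. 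Hence $\max_{\Bcal \in \Bscr_{l,r}} |\Bcal| = 1 + \max_{k \in \Scal_{l,r}} \max_{\Bcal' \in \Bscr_{k,l-1}} |\Bcal'| = 1 + \max_{k \in \Scal_{l,r},\, \Bscr_{k,l-1}\neq\emptyset} |\Bcal^{*}_{k,l-1}|$ by the inductive hypothesis, which is exactly $1 + |\Bcal_{k^{*},l-1}|$ where $k^{*} = \argmax_{k \in \Scal_{l,r}} |\Bcal_{k,l-1}|$ — note that $\Bcal_{k,l-1} = \emptyset$ sentinels contribute size $0$ and are thus never selected as the argmax unless every candidate is empty, in which case the algorithm's \textbf{Continue} correctly leaves $\Bcal_{l,r} = \emptyset$. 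So the partition the algorithm stores, $\Bcal_{k^{*},l-1} \cup \{\{l,\ldots,r\}\}$, attains the maximum size and lies in $\Bscr_{l,r}$, i.e.\ it is $\Bcal^{*}_{l,r}$.

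I would then close with the short argument that the case $\Scal_{l,r} = \emptyset$ (or all $\Bcal_{k,l-1}$ empty) forces $\Bscr_{l,r} = \emptyset$, again by Lemma~\ref{lem:recursive-approach}, so the sentinel value is returned correctly and never mistaken for a real partition in the final $\argmax$ over $i \in \{1,\ldots,n\}$ (which always has at least the nonempty candidate $\Bcal_{1,n}$). Finally I would remark on the complexity: the double loop has $O(n^2)$ iterations, and computing $\Scal_{l,r}$ requires comparing the aggregated group-conditional scores $a_{\{k,\ldots,l-1\},z}$ and $a_{\{l,\ldots,r\},z}$ over all $k < l$ and all $z \in \Zcal$, which is $O(n \times |\Zcal|)$ per iteration (the aggregates $a_{\{k,\ldots,l-1\},z}$ can be maintained with prefix sums of $\rho_j \rho_{z\given j}$ and $\rho_j \rho_{z\given j} a_{j,z}$), giving the stated $O(n^3 \times |\Zcal|)$ bound.

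The main obstacle I anticipate is being careful about the sentinel bookkeeping — that $\Bscr_{l,r} = \emptyset$ is faithfully mirrored by $\Bcal_{l,r} = \emptyset$ throughout the recursion — since the recursion in Lemma~\ref{lem:recursive-approach} is only sound when we restrict $k$ to indices for which $\Bscr_{k,l-1}$ is itself nonempty; conflating "no valid split point" with "split exists but sub-partition infeasible" would break the argument, so the induction must explicitly track the emptiness predicate alongside optimal size. A secondary subtlety is the precise reading of $\Bscr_{l,r}$: its defining condition quantifies over \emph{all} extensions $\Bcal'$ of the bin indices $\{r+1,\ldots,n\}$, but because within-group monotonicity of $f_{\Bcal \cup \Bcal'}$ on $\cup_{i\le r}\Xcal_i$ depends only on the group-conditional scores of cells contained in $\{1,\ldots,r\}$ — and the last cell $\{l,\ldots,r\}$ summarizes everything to its right relevant for a threshold comparison at index $r$ — this "for all $\Bcal'$" collapses to a condition purely on $\Bcal$, which is what makes the decomposition and Lemma~\ref{lem:recursive-approach} applicable; I would state this reduction explicitly rather than leave it implicit.
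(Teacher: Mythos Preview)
Your proposal is correct and follows essentially the same inductive argument as the paper's own proof: both establish by induction that each stored $\Bcal_{l,r}$ is optimal in $\Bscr_{l,r}$, invoking Lemma~\ref{lem:recursive-approach} to decompose $\Bscr_{l,r}$ in terms of the $\Bscr_{k,l-1}$, and then read off $\Bcal^{*}$ from the final $\argmax$ over $l$. Your treatment is, if anything, more careful than the paper's in explicitly tracking the sentinel/emptiness predicate and in spelling out the $O(n^{3}\times|\Zcal|)$ complexity via prefix sums.
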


\xhdr{Remark}
In many domains, allowing for a pre-specified, application-dependent level of within-group monotonicity violations may be acceptable. Such tolerance levels, whether global or group-specific, can easily be integrated into our algorithm without introducing any computational overhead. More specifically, let $\tau_z\in[0,1]$ be the pre-specified maximum level of within-group monotonicity violations for each group $z\in\mathcal{Z}$, i.e., a classifier $f$ needs to satisfy that $\Pr(Y=1|f(X) = a, Z=z) \leq \Pr(Y=1|f(X) = b, Z=z) + \tau_z$ for all $z\in\mathcal{Z}$ and $a<b$. Then, one only needs to modify the condition in line 5 in Algorithm~\ref{alg:optimal} to $a_{\{k, \ldots, l-1\},z}\leq a_{\{l, \ldots, r\},z} + \tau_z~\forall{z\in\mathcal{Z}}$. Here, note that this modification does not add to the time or space complexity of our algorithm. At the same time, a similar proof as the proof of Theorem 4.4 shows that the algorithm can return the partition of maximum size such that the classifier induced by this partition is within-group monotone with a slack of $\tau_z$ for all groups $z\in\mathcal{Z}$. Note that such relaxations will result in partitions of larger sizes or, equivalently, more fine-grained classifiers and may be imposed by the domain expert to tradeoff the prediction power and within-group fairness.

\section{Within-Group Monotonicity vs Within-Group Calibration}
\label{sec:group-calibration}
\begin{algorithm}[t]
\begin{algorithmic}[1]
    \STATE{{\bf Input:} $\cbr{a_{1,z}, \ldots, a_{n, z}}_{z\in\Zcal}$}
    \STATE{{\bf Initialize:} $\Bcal_{\text{cal},i}=\cbr{}$ $\forall i\in\R{1}{n}$}
    \IF{$a_{1,z} = a_1~\forall z\in\Zcal$}
        \STATE{$\Bcal_{\text{cal},1} = \cbr{\cbr{a_1}}$}
    \ENDIF
    \vspace{1mm}
    \FOR{$r \in \cbr{2, \ldots, n}$}\label{inalg:first_for}
        \STATE{$\Scal_r = \cbr{i\in\R{2}{r}\given a_{\R{i}{r},z} = a_{\R{i}{r}}~\forall z\in\Zcal}$}
        \STATE{$k^* = \argmax_{k\in\Scal_r}\abr{\Bcal_{\text{cal}, k-1}}$}
        \IF{$\Bcal_{\text{cal}, k^*-1} \neq \emptyset$}
            \STATE{$\Bcal_{\text{cal},r} = \Bcal_{\text{cal},k^*-1}\cup \cbr{\R{k^*}{r}}$}
        \ELSIF{$a_{\R{1}{r}} = a_{\R{1}{r},z}~\forall z\in\Zcal$}
                \STATE{$\Bcal_{\text{cal},r} = \cbr{\R{1}{r}}$}
        \ENDIF
    \ENDFOR
    \STATE{{\bf return} $\Bcal_{\text{cal},n}$} 
\end{algorithmic}
\caption{It returns the optimal partition $\Bcal_{\text{\text{cal}}}^{*}$ such that $f_{\Bcal_{\text{\text{cal}}}^{*}}$ within-group calibrated.}
\label{alg:multicalibrated}
\end{algorithm}

Within-group calibration, or calibration within groups\footnote{\scriptsize There also exists a generalized, stronger notion of within-group calibration called multicalibration~\cite{hebert2018multicalibration,jung2021moment}, which requires predictions to be calibrated within every group that can be identified within a specified class of computations.},
%
requires that the probability that a candidate is qualified is independent of their group 
membership conditioned on their quality score.
More specifically, it is defined as fo\-llows~\cite{pleiss2017fairness,kleinberg2018inherent}:
\begin{definition}\label{def:within-group-calibration}
    Given a set of groups $\Zcal$, a classifier $f$ is within-group calibrated iff, for 
every $z \in \Zcal$ and $a \in \Rf$ such that $\Pr(Z=z\given f(X) = a) > 0$, it holds that $\Pr(Y = 1 \given f(X) = a, Z = z) = a$.
\end{definition}
As discussed previously, within-group calibration implies within-group monotonicity. 
%
%
Then, to minimally modify a calibrated classifier $f$ so that it becomes within-group monotone, one may think of finding the optimal partition $\Bcal^{*}_{\text{\text{cal}}} = \argmax_{\Bcal \in \Bscr} |\Bcal|$ such that $f_{\Bcal}$ is within-group calibrated.
In what follows, we will first show that, perhaps surprisingly, finding $\Bcal^{*}_{\text{\text{cal}}}$ is computationally \emph{easier}\footnote{\scriptsize Using a similar proof technique as in Theorem~\ref{thm:np-hardness}, it can be proven that the problem of finding the partition $\Bcal \in \Pscr$ of maximum size such that $f_{\Bcal}$ is within-group calibrated is NP-hard. 
Therefore, in general, the computational complexity is not lower.} than finding $\Bcal^{*}$.
%
%
%
However, we will further show that, in many cases, $\Bcal_{\text{\text{cal}}}^{*}$ may not exist and, when it does exist, the size of $\Bcal_{\text{\text{cal}}}^{*}$ may be much smaller than the size of $\Bcal^{*}$, leading to less fine-grained predictions. 
%
%
%

To find the optimal $\Bcal^{*}_{\text{\text{cal}}}$, we proceed recursively.
Let $\Bscr_r$ be the set of contiguous partitions of the bin indices $\cbr{1, \ldots, r}$, with $r \leq n$.
Then, iterating through $r$, we find the optimal partitions $\Bcal^{*}_{\text{\text{cal}}, r} = \argmax_{\Bcal \in \Bscr_r} |\Bcal|$ such that $f_{\Bcal^{*}_{ \text{\text{cal}}, r}}$ is within-group calibrated in $\cup_{i \leq r} \Xcal_i$. 
%
%
In this case, the key idea of the recursion is that any partition $\Bcal \in \Bscr_r$ such that $f_{\Bcal}$ is within-calibrated on $\cup_{i \leq r}\Xcal_i$ needs to satisfy the following
necessary and sufficient condition: 
%
%
\begin{lemma}\label{lem:multical-recursive}
Given any $\Bcal \in \Bscr_r$, it holds that $f_{\Bcal}$ is within-calibrated on $\cup_{i \leq r}\Xcal_i$ if and only if $\exists l < r$ such that $\Bcal \backslash \cbr{\R{l}{r}} \in \Bscr_{l-1}$ and $f_{\Bcal \backslash \cbr{\R{l}{r}}}$ is within-group calibrated on $\cup_{i \leq l-1} \Xcal_i$ and $a_{\R{l}{r},z} = a_{\R{l}{r}}$ $\forall z\in\Zcal$.
%
\end{lemma}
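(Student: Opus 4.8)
The plan is to turn within-group calibration of $f_{\Bcal}$ into a conjunction of per-cell conditions, and then to exploit the interval structure of contiguous partitions to peel off the cell containing index $r$.

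First I would record the structural observation that, since $\Bcal \in \Bscr_r$ is a contiguous partition of $\R{1}{r}$, the cell $\Acal^{\star} \in \Bcal$ containing $r$ is necessarily a suffix interval $\R{l}{r}$ with $l = \min \Acal^{\star}$: if $l \sim_{\Bcal} r$ and $l < k < r$, contiguity forces $k \sim_{\Bcal} l$. Hence $\Bcal' := \Bcal \setminus \{\R{l}{r}\}$ is a contiguous partition of $\R{1}{l-1}$, i.e., $\Bcal' \in \Bscr_{l-1}$; conversely, the hypothesis $\Bcal \setminus \{\R{l}{r}\} \in \Bscr_{l-1}$ together with $\Bcal \in \Bscr_r$ forces $\R{l}{r}$ to be exactly the missing cell of $\Bcal$. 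Thus the partition-theoretic parts of the claimed equivalence are settled by this observation, and the content lies in the calibration conditions.

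Next I would unfold Definition~\ref{def:within-group-calibration}: ``$f_{\Bcal}$ is within-group calibrated on $\cup_{i \leq r}\Xcal_i$'' is equivalent to requiring, for every cell $\Acal \in \Bcal$ and every $z \in \Zcal$ with $\Pr(Z = z \given f(X) = a_{\Acal}) > 0$, that $a_{\Acal,z} = a_{\Acal}$, using that $a_{\Acal}$ and $a_{\Acal,z}$ depend only on the bins belonging to $\Acal$ (so that restricting to $\cup_{i \leq l-1}\Xcal_i$ leaves the quantities attached to a cell $\Acal \subseteq \R{1}{l-1}$, and the matching positivity clauses, unchanged). Given this reformulation, the forward implication is immediate: if $f_{\Bcal}$ is within-group calibrated, applying the per-cell condition to the cells of $\Bcal'$ (all of whose bins lie in $\R{1}{l-1}$) shows $f_{\Bcal'}$ is within-group calibrated on $\cup_{i \leq l-1}\Xcal_i$, and applying it to $\Acal^{\star} = \R{l}{r}$ yields $a_{\R{l}{r},z} = a_{\R{l}{r}}$ for all relevant $z$. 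For the converse, I would use that the cells of $\Bcal$ are exactly the cells of $\Bcal'$ together with $\R{l}{r}$: within-group calibration of $f_{\Bcal'}$ supplies $a_{\Acal,z} = a_{\Acal}$ for every cell $\Acal$ of $\Bcal'$, the third hypothesis handles $\R{l}{r}$, so every cell of $\Bcal$ satisfies the per-cell condition and therefore $f_{\Bcal}$ is within-group calibrated on $\cup_{i \leq r}\Xcal_i$.

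I do not expect a genuine obstacle; the care needed is almost entirely bookkeeping: (i) matching the positivity clauses $\Pr(Z = z \given \cdot) > 0$ across the passage between $\Bcal$ and $\Bcal'$ (and, if two distinct cells happen to carry the same value $a_{\Acal}$, reading the calibration condition per cell, consistently with how $f_{\Bcal}$ is used throughout); (ii) the base case $l = 1$, where $\Bcal' = \emptyset \in \Bscr_0$ and within-group calibration on $\cup_{i \leq 0}\Xcal_i = \emptyset$ holds vacuously, corresponding to $\Bcal = \{\R{1}{r}\}$; and (iii) reconciling the ``$l < r$'' in the statement with a possibly singleton last cell (where $l = r$ and $\Bscr_{l-1} = \Bscr_{r-1}$), for which the argument is verbatim the same.
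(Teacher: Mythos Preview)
Your proposal is correct and follows essentially the same approach as the paper: both arguments reduce within-group calibration of $f_{\Bcal}$ to the per-cell conditions $a_{\Acal,z}=a_{\Acal}$, then peel off the last cell $\R{l}{r}$ to obtain the equivalence. Your write-up is in fact more explicit than the paper's about why contiguity forces the cell containing $r$ to be a suffix interval, and your remarks on the positivity clauses, the $l=1$ base case, and the $l=r$ boundary (which indeed sits slightly awkwardly with the ``$l<r$'' in the statement) are sound observations the paper glosses over.
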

As a consequence, we can efficiently find all partitions $\Bcal$ in the subsets $\Bscr_{r}$ such that $f_{\Bcal}$ is within-group calibrated 
%
%
iterating through $r$ using the partitions $\Bcal'$ in the subsets $\Bscr_{l}$ with $l < r$ such that $f_{\Bcal'}$ is within-group calibrated.
%
%
%
%
%
Finally, by construction, it clearly holds that if the optimal partition $\Bcal^{*}_{\text{\text{cal}}, r} = \Bcal' \cup \cbr{\R{l}{r}}$, with $\Bcal' \in \Bscr_{l-1}$, is the optimal partition in $\Bscr_{r}$ then $\Bcal' = \Bcal^{*}_{\text{\text{cal}}, l-1}$ is the optimal partition in $\Bscr_{l-1}$.
As a result, at each step of the recursion, we only need to store the optimal partition $\Bcal^{*}_{r}$, not all partitions $\Bcal \in \Bscr_{r}$ such that $f_{\Bcal}$ is within-group calibrated, and reuse it to find all $\Bcal^{*}_{r'}$ with $r' > r$.

Algorithm~\ref{alg:multicalibrated} summarizes the overall procedure, which has complexity $\Ocal(n^2\times \abr{\Zcal})$ and is guaranteed to find the optimal partition $\Bcal_{\text{\text{cal}}}^{*}$, if such a partition exists, as formalized by the following theorem:
\begin{theorem}\label{thm:multicalibrated}
Algorithm~\ref{alg:multicalibrated} returns $\Bcal_{\text{\text{cal}}}^* = \argmax_{\Bcal\in\Bscr}\abr{\Bcal}$ such that $f_{\Bcal_{\text{\text{cal}}}^*}$ is within-group calibrated if such partition exists or $\emptyset$ otherwise.
\end{theorem}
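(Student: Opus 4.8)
The plan is to establish, by strong induction on $r\in\R{1}{n}$, the loop invariant that after iteration $r$ the quantity $\Bcal_{\text{cal},r}$ is a maximum-size partition in $\Bscr_r$ whose induced classifier is within-group calibrated on $\cup_{i\leq r}\Xcal_i$, and that $\Bcal_{\text{cal},r}=\emptyset$ precisely when $\Bscr_r$ contains no such partition. Instantiating this at $r=n$ proves the theorem, since $\Bscr_n=\Bscr$ and $\cup_{i\leq n}\Xcal_i=\Xcal$, so within-group calibration on $\cup_{i\leq n}\Xcal_i$ is exactly within-group calibration. The engine of the induction is the optimal-substructure characterization of Lemma~\ref{lem:multical-recursive}, which decomposes any admissible partition by its (rightmost) cell. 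For the base case $r=1$, the sole partition of $\cbr{1}$ is the one whose only cell is $\cbr{1}$, and its classifier is within-group calibrated iff $a_{1,z}=a_1$ for all $z\in\Zcal$; lines~3--5 set $\Bcal_{\text{cal},1}$ to that partition in this case and leave it empty otherwise, so the invariant holds.

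For the inductive step, fix $r\geq 2$, assume the invariant for all $r'<r$, and take any $\Bcal\in\Bscr_r$ with $f_\Bcal$ within-group calibrated on $\cup_{i\leq r}\Xcal_i$. By Lemma~\ref{lem:multical-recursive}, the last cell of $\Bcal$ is $\R{l}{r}$ for some $l\in\R{1}{r}$ with $a_{\R{l}{r},z}=a_{\R{l}{r}}$ for every $z\in\Zcal$, and either $l=1$ and $\Bcal=\cbr{\R{1}{r}}$ (the prefix condition being vacuous), or $l\geq 2$ and $\Bcal\setminus\cbr{\R{l}{r}}\in\Bscr_{l-1}$ with its classifier within-group calibrated on $\cup_{i\leq l-1}\Xcal_i$. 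In the case $l\geq 2$ we have $\abr{\Bcal}=\abr{\Bcal\setminus\cbr{\R{l}{r}}}+1$, and because the feasibility condition $a_{\R{l}{r},z}=a_{\R{l}{r}}$ on the last cell does not depend on $\Bcal\setminus\cbr{\R{l}{r}}$, a standard exchange argument shows the largest admissible $\Bcal$ with this particular last cell is obtained by appending $\R{l}{r}$ to a maximum-size within-group calibrated partition of $\R{1}{l-1}$, which by the inductive hypothesis is $\Bcal_{\text{cal},l-1}$ when $\Bcal_{\text{cal},l-1}\neq\emptyset$ (and no such $\Bcal$ exists otherwise). Since the set of $l\geq 2$ satisfying $a_{\R{l}{r},z}=a_{\R{l}{r}}$ for all $z$ is exactly $\Scal_r$ as computed in line~7 (using the closed forms for $a_{\R{l}{r}}$ and $a_{\R{l}{r},z}$ from Section~\ref{sec:framework}), maximizing over $l\geq 2$ is precisely the choice $k^*=\argmax_{k\in\Scal_r}\abr{\Bcal_{\text{cal},k-1}}$ together with the update $\Bcal_{\text{cal},r}=\Bcal_{\text{cal},k^*-1}\cup\cbr{\R{k^*}{r}}$ guarded by $\Bcal_{\text{cal},k^*-1}\neq\emptyset$ in lines~7--10; such a partition has size at least $2$, hence is never beaten by the unique $l=1$ candidate, which has size $1$.

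It remains to handle the case where no $k\in\Scal_r$ has $\Bcal_{\text{cal},k-1}\neq\emptyset$ (including $\Scal_r=\emptyset$, where lines~8--10 are vacuous). By the inductive hypothesis this means no within-group calibrated partition of any prefix $\R{1}{l-1}$ with $l\in\Scal_r$ exists, so by Lemma~\ref{lem:multical-recursive} the only remaining admissible candidate in $\Bscr_r$ is $\cbr{\R{1}{r}}$, which is feasible iff $a_{\R{1}{r},z}=a_{\R{1}{r}}$ for all $z\in\Zcal$; lines~11--13 select it exactly in that case and otherwise leave $\Bcal_{\text{cal},r}=\emptyset$, which is correct since then $\Bscr_r$ contains no within-group calibrated partition at all. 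In every branch $\Bcal_{\text{cal},r}$ is by construction a contiguous partition of $\R{1}{r}$, so the invariant is restored; evaluating it at $r=n$ finishes the proof.

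The step I expect to be most delicate is the bookkeeping that ties the two branches together with the non-existence case: one must verify (i) that the last-cell feasibility condition is genuinely decoupled from the predecessor partition, which is what legitimizes greedily pairing $\R{k^*}{r}$ with the stored optimum $\Bcal_{\text{cal},k^*-1}$ on top of Lemma~\ref{lem:multical-recursive}; (ii) that $\Scal_r$ in line~7 really coincides with $\{\,l\in\R{2}{r}:a_{\R{l}{r},z}=a_{\R{l}{r}}\ \forall z\in\Zcal\,\}$; and (iii) that whenever $\Bscr_r$ admits no within-group calibrated partition, \emph{all} branches fail and leave $\Bcal_{\text{cal},r}$ empty, so the ``$\emptyset$ otherwise'' clause is met. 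Everything else is routine; in particular the claimed $\Ocal(n^2\times\abr{\Zcal})$ running time follows by maintaining, as $l$ decreases, the running group-conditional sums that define $a_{\R{l}{r},z}$, so that each $\Scal_r$ is assembled in $\Ocal(n\times\abr{\Zcal})$ time.
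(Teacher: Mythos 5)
Your proof is correct and follows essentially the same route as the paper's: strong induction on $r$, using Lemma~\ref{lem:multical-recursive} to decompose any admissible partition by its last cell and then arguing that the algorithm's $\arg\max$ over $\Scal_r$ recovers the optimal choice. One small improvement on your side: you correctly split the cases according to whether any $k\in\Scal_r$ has $\Bcal_{\text{cal},k-1}\neq\emptyset$ (i.e., whether the IF-branch in lines~9--10 actually fires), whereas the paper's write-up splits on whether $\Bcal_{\text{cal},r'}$ is nonempty for \emph{some} $r'<r$, which is not quite the condition the algorithm tests; your sharper case analysis covers the possibility that some prefix optimum is nonempty yet no feasible $k\in\Scal_r$ points to it, so the ELSIF branch is the one that fires.
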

Unfortunately, there are many cases in which $\Bcal_{\text{\text{cal}}}^*$ does not exist, \eg, 
%
%
%
this will happen if $f$ systematically undervalues the probability that individuals from a group are qualified, in comparison with individuals from another group:
%
\begin{proposition}\label{prop:multicalibrated-bin-diff}
Let $\Zcal = \cbr{z,z'}$, $\rho_{z\given i} = \rho_{z'\given i}$ and $a_{i,z}<a_{i,z'}$ for all $i\in\R{1}{n}$. Then, there exists no $\Bcal \in \Bscr$ such that $f_{\Bcal}$ is within-group calibrated.
\end{proposition}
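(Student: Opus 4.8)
The plan is to show that \emph{no single cell} of \emph{any} partition can yield within-group calibrated predictions, which immediately rules out the existence of a within-group calibrated $f_{\Bcal}$. First I would use the assumption $\rho_{z\given i} = \rho_{z'\given i}$ together with $\sum_{z\in\Zcal}\rho_{z\given i} = 1$ and $\abr{\Zcal}=2$ to conclude that $\rho_{z\given i} = \rho_{z'\given i} = \tfrac12$ for every $i$, and hence, using $a_i = \sum_{z\in\Zcal}\rho_{z\given i}a_{i,z}$, that $a_i = \tfrac12\rbr{a_{i,z} + a_{i,z'}}$.

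Next, fix any $\Bcal\in\Bscr$ and any cell $\Acal\in\Bcal$. Plugging $\rho_{z\given j}=\rho_{z'\given j}=\tfrac12$ into the definitions of $a_{\Acal,z}$ and $a_{\Acal}$, I would verify that
\[
a_{\Acal,z} = \frac{\sum_{j\in\Acal}\rho_j a_{j,z}}{\sum_{j\in\Acal}\rho_j}, \qquad a_{\Acal,z'} = \frac{\sum_{j\in\Acal}\rho_j a_{j,z'}}{\sum_{j\in\Acal}\rho_j},
\]
and that $\Pr\rbr{Z=z\given f_{\Bcal}(X) = a_{\Acal}} = \tfrac12 > 0$, so the within-group calibration condition is indeed non-vacuous on $\Acal$. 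Combining the two displays with $a_j = \tfrac12\rbr{a_{j,z}+a_{j,z'}}$ then gives $a_{\Acal} = \tfrac12\rbr{a_{\Acal,z} + a_{\Acal,z'}}$. Hence $f_{\Bcal}$ being within-group calibrated on $\Acal$, i.e. $a_{\Acal,z} = a_{\Acal} = a_{\Acal,z'}$, would force $a_{\Acal,z} = a_{\Acal,z'}$.

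Finally, I would observe that $\rho_j > 0$ for every $j$ (because each $a_j$ lies in $\Rf$, so $\rho_j = \Pr\rbr{f(X)=a_j} > 0$), so $a_{\Acal,z}$ and $a_{\Acal,z'}$ are convex combinations of $\cbr{a_{j,z}}_{j\in\Acal}$ and $\cbr{a_{j,z'}}_{j\in\Acal}$ respectively, with \emph{identical, strictly positive} weights. Since $a_{j,z} < a_{j,z'}$ for every $j\in\Acal$ and $\Acal\neq\emptyset$, this yields the strict inequality $a_{\Acal,z} < a_{\Acal,z'}$, contradicting $a_{\Acal,z} = a_{\Acal,z'}$. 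Therefore no $\Bcal\in\Bscr$ can make $f_{\Bcal}$ within-group calibrated. There is essentially no hard step here; the only things to be careful about are checking that the within-group calibration condition is active on every cell (which the equal-group-weight assumption guarantees) and that the averaging weights are strictly positive, so that the inequality between the two conditional averages is strict.
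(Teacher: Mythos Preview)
Your proof is correct and follows essentially the same approach as the paper: both argue by contradiction, fix an arbitrary cell $\Acal$, and use the equal-weight assumption together with the strict pointwise inequality $a_{j,z}<a_{j,z'}$ to derive $a_{\Acal,z}<a_{\Acal,z'}$, contradicting within-group calibration. If anything, you are more careful than the paper in explicitly deriving $\rho_{z\given i}=\tfrac12$, verifying that the calibration condition is non-vacuous on every cell, and justifying why the inequality between the weighted averages is strict.
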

%
In the above situation, $f$ may actually be within-group monotone and thus $\abr{\Bcal^*} = n$.
%
%
Even if $\Bcal_{\text{\text{cal}}}^*$ exists, there are
examples where 
$|\Bcal^{*}| - |\Bcal_{\text{\text{cal}}}^*| = n-1$. 
\begin{figure*}[t]
\includegraphics[width=0.48\textwidth,right]{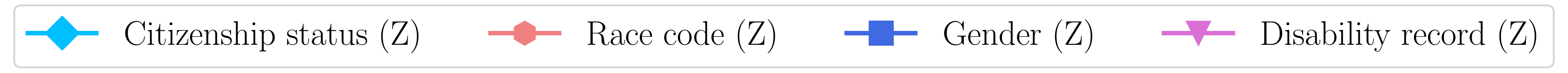}\\[-2.6ex]
\centering
 \subfloat[$p_{d \given z}$ vs. $\Pr(Z = z)$]{
\includegraphics[width=0.48\textwidth]{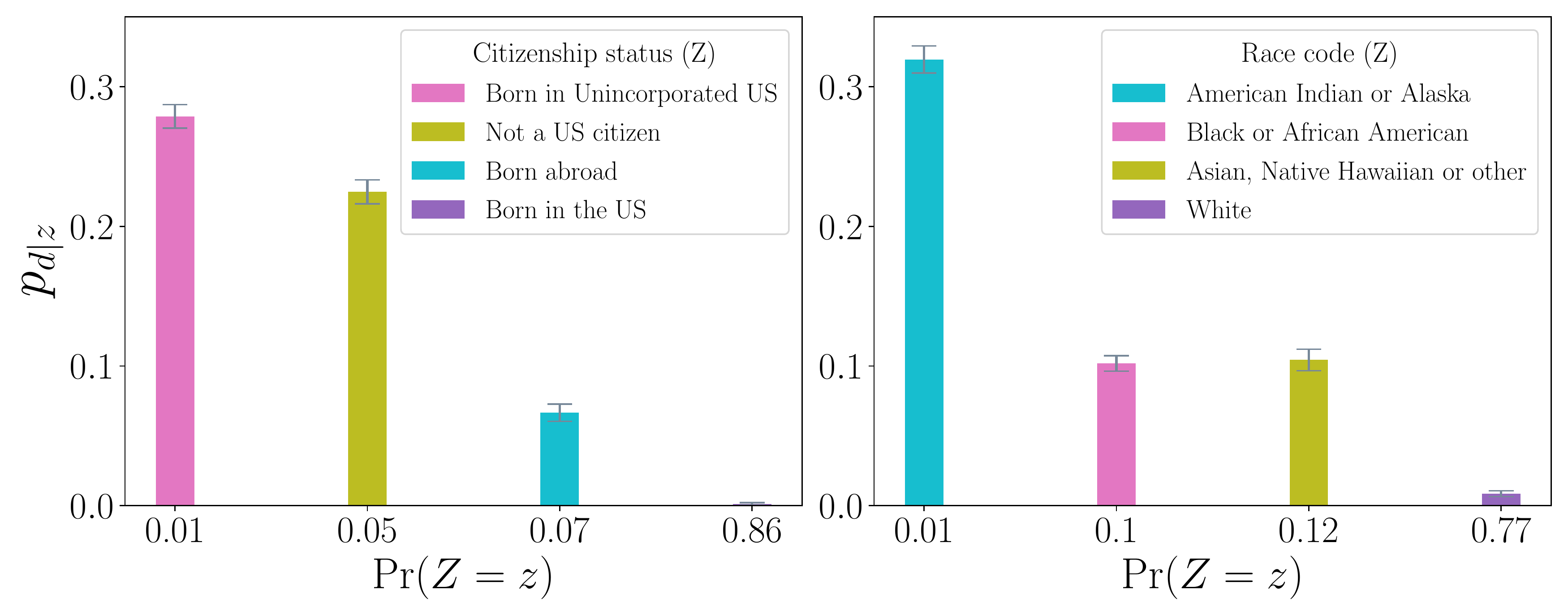}\label{fig:pdz}}
 \subfloat[$p_{d}$ vs. $n$]{
\includegraphics[width=0.24\textwidth]{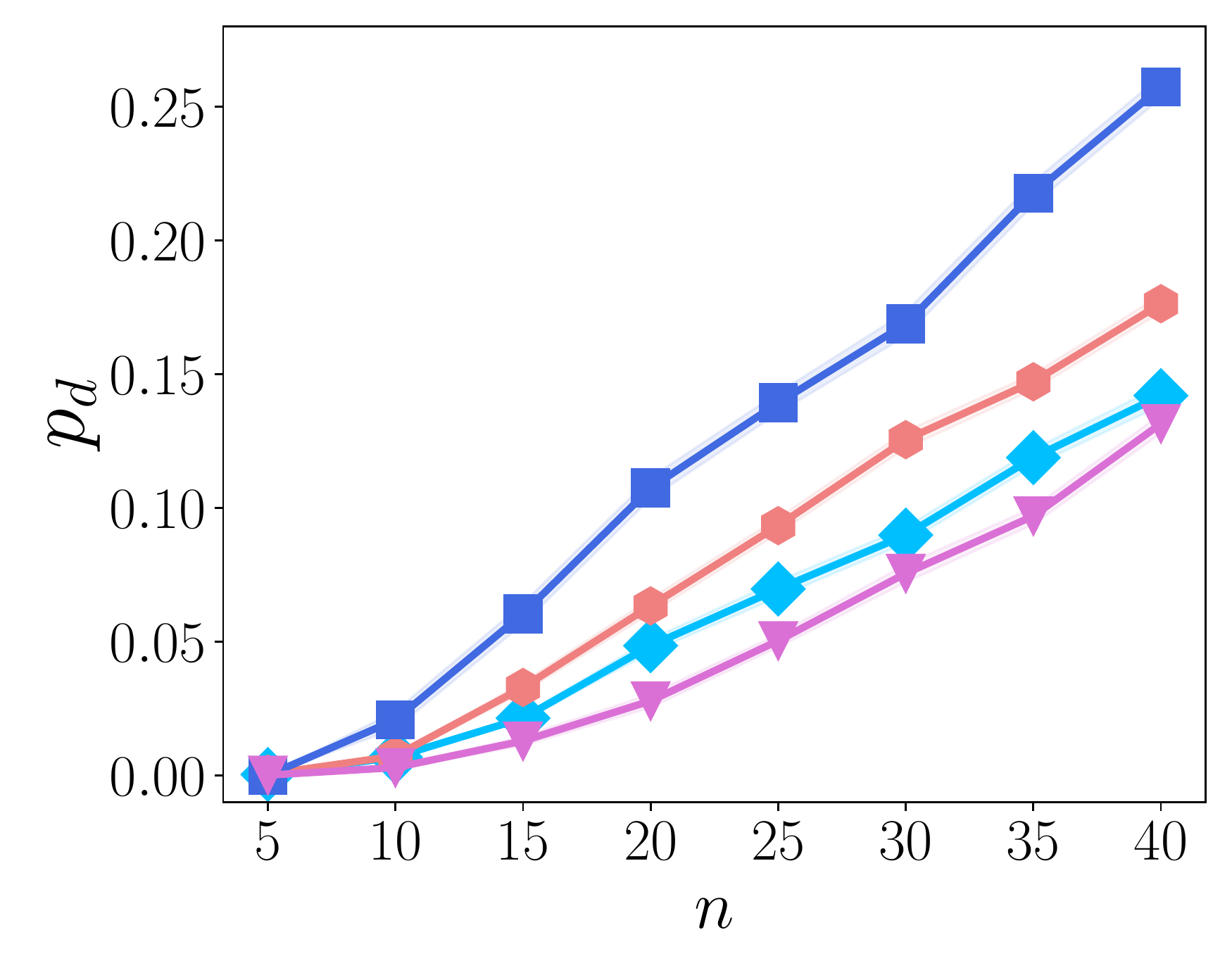} \label{fig:pd}}
 \subfloat[$p_{d \given \Dcal_{\text{pool}}}$ vs. $n$]{
\includegraphics[width=0.24\textwidth]{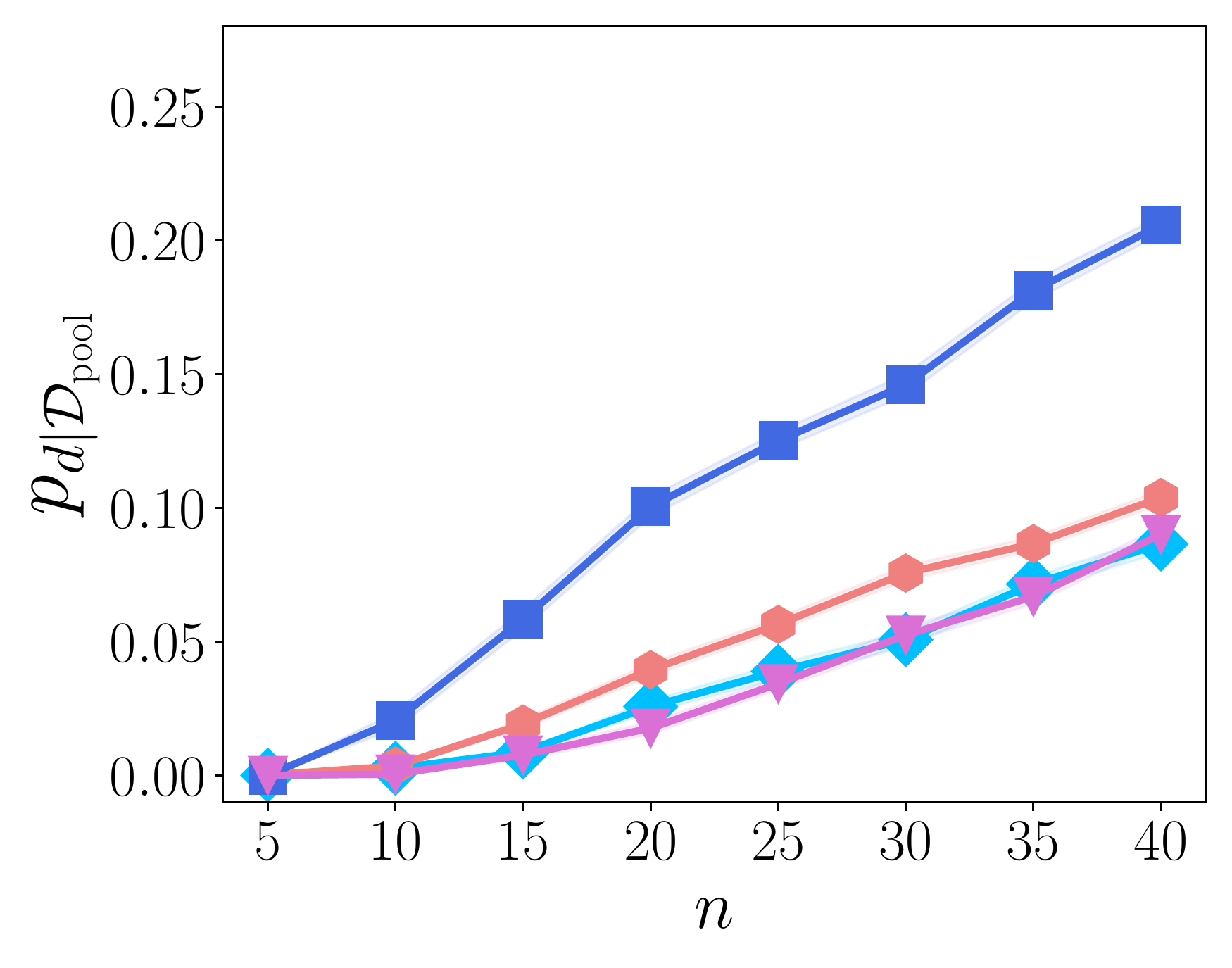} \label{fig:pdm}}
\caption{Probability that an individual suffers from within-group unfairness. 
Panel (a) shows the probability $p_{d \given z}$ that an individual from group $z$ may 
suffer from within-group unfairness against $\Pr(Z = z)$ for $n = 15$.
Panel (b) shows the probability $p_{d}$ that an individual may suffer from within-group 
unfairness. 
Panel (c) shows the probability $p_{d \given \Dcal_{\text{pool}}}$ that an individual suffers from 
within-group unfairness in a test pool $\Dcal_{\text{pool}}$ of size $m$, averaged
across all test pools, against $n = \abr{\Rf}$.}
\label{fig:exp_discriminations}
\end{figure*}

\section{Experiments Using Survey Data}
\label{sec:real}
In this section, we create multiple instances of a simulated screening process using US Census survey data to
first investigate how frequently within-group unfairness occurs and then compare the partitions, as well as induced scree\-ning classifiers, provided by Algorithms~\ref{alg:pav},~\ref{alg:optimal} and \ref{alg:multicalibrated}\footnote{\scriptsize We ran all experiments on a machine equipped with 48 Intel(R) Xeon(R) 2.50GHz CPU cores and 256GB memory.}. 

\xhdr{Experimental setup.}
We use a dataset consisting of $\sim$$3.2$ million individuals from the US Census~\cite{ding2021retiring}. 
Each individual is represented by sixteen features and one label $y\in\cbr{0,1}$ indicating whether the individual is employed $(y=1)$ or not $(y=0)$. For our experiments, we think of employment as a (imperfect) proxy of qualification\footnote{\scriptsize Note that the label used as the proxy for qualification closely depends on the application domain. In an academic hiring scenario, the label “Educational Attainment” could serve as a proxy for qualification while “Years of Working Experience” might be a better proxy in hiring scenarios for craft professions.}.
The features contain demographic information such as age, marital status or 
gender~\cite[Appendix B4]{ding2021retiring}. 
We run four sets of experiments where, in each of them, we use a different feature 
(US citizen status, race, gender, or disability record) to define the demographic groups of interest $\Zcal$\footnote{\scriptsize For space reasons, in this section, we focus mainly on groups $z\in\Zcal$ based on US citizenship status and race. However, Appendix~\ref{app:exp-other-z} shows similar results for groups defined based on gender and disability record.}.

For the experiments, we randomly split the dataset into two equally-sized and disjoint 
subsets. 
We use the first subset for trai\-ning and calibration and the second subset for testing.
More specifically, for each experiment, we create the training and calibration sets 
$\Dcal_{\text{tr}}$ and $\Dcal_{\text{cal}}$ by picking $100{,}000$ and $50{,}000$ individuals
at random (without replacement) from the first subset. 
We use $\Dcal_{\text{tr}}$ to train a logistic regression model $f_{LR}$\footnote{\scriptsize The classifier $f_{LR}$ achieves a test accuracy of $\sim$$74$\% at predicting whether an individual is qualified.} and 
use $\Dcal_{\text{cal}}$ to both (approximately) calibrate $f_{LR}$ using uniform mass binning (UMB)~\cite{Wang2022ImprovingSP, zadrozny2001obtaining}, \ie, discretize its outputs to $n$ 
calibrated quality scores,
and estimate the relevant pro\-ba\-bi\-li\-ties $\rho_i$, $a_i$, $\rho_{z \given i}$ and $a_{i, z}$ needed 
by Algorithms~\ref{alg:pav},~\ref{alg:optimal} and \ref{alg:multicalibrated}. 
The resulting (approximately) calibrated classifier serves as our screening classifier $f$.
For testing, we create a set $\{ \Dcal_{\text{pool}}^i \}_{i=1}^{100}$ of $100$ pools, each with $m=100$ in\-di\-vi\-duals picked at random from the second subset, and create (the smallest) shortlists with at least $k$ qualified individuals using\- the screening classifiers $f_{\Bcal_{\text{pav}}}$, $f_{\Bcal^{*}}$ and $f_{\Bcal^{*}_{\text{cal}}}$ induced by the partitions found by Algorithms~\ref{alg:pav},~\ref{alg:optimal} and \ref{alg:multicalibrated}, respectively.
Here, since we find that, in most experiments, no within-group calibrated classifier exists,
we allow $f_{\Bcal^{*}_{\text{cal}}}$ to be within-group $\epsilon$-calibrated\footnote{\scriptsize Given a set of groups $\Zcal$, a classifier $f$ is within-group $\epsilon$-calibrated iff, for every $z \in \Zcal$ and $a \in \Rf$ such that $\Pr(Z=z\given f(X) = a) > 0$, it holds that $\abr{\Pr(Y = 1 \given f(X) = a, Z = z) - a}\leq \epsilon$.} within 
Algorithm~\ref{alg:multicalibrated} and use binary search to find the smallest $\epsilon \in (0, 1)$ such that $f_{\Bcal^{*}_{\text{cal}}}$ exists\footnote{\scriptsize Refer to Appendix~\ref{app:exp-epsilon-calibration} for additional experiments on within-group $\epsilon$-calibration.}.
Throughout the experiments, we estimate the ave\-rage and the standard error of the reported quantities by repeating each experiment $100$ times. 

\xhdr{Within-group unfairness occurs frequently between individuals from minority groups, especially with fine-grained classifiers.}
We start by estimating the pro\-ba\-bi\-li\-ty $p_{d \given z}$ that an individual from a demographic group of interest $z \in \Zcal$ may suffer from within-group unfairness, \ie, 
%
    $p_{d \given z} = \frac{1}{\Pr(Z = z)} \sum_{i \in \R{1} {n}} \rho_i \rho_{z \given i} v_{i}$,
where $v_i = \II\left[ \exists a_j \in \Rf \given a_i < a_j \wedge a_{i,z} > a_{j,z} \right]$.
Figure~\ref{fig:pdz} summarizes the results for 
a screening classifier $f$ with $n = 15$ bins.
We find that individuals who belong to minority groups are much more likely to suffer from within-group unfairness than those who belong to a ma\-jo\-ri\-ty group. 
For example, the probability that an individual who is not a US citizen may suffer from within-group unfairness is $p_{d \given z} > 0.3$ while it is almost impossible that an individual born in the US is treated unfairly within their group.
Further, we investigate to what extent the probability $p_d = \sum_{z\in\Zcal} P(Z = z) p_{d \given z}$ that an individual may suffer from within-group unfairness depends on the number of bins $n$ of $f$.
Figure~\ref{fig:pd} shows that the more fine-grained a classifier is, the higher the probability that an 
individual may suffer from within-group unfairness, \eg, for $n \leq 10$, $p_d < 0.05$ while, for $n = 40$, 
$p_d > 0.12$ across all sets of groups $\Zcal$.
%
Since the accuracy of a calibrated classifier is related to how fine-grained its predictions are~\cite{Wang2022ImprovingSP}, the above finding suggests that high accuracy may have a cost in terms of within-group unfairness.

Our results so far show that the probability that individuals \emph{may} suffer from within-group unfairness is significant. 
Next, we estimate the probability that in a test pool of size $m$, an individual \emph{does} suffer from within-group unfairness, \ie,
%
    $p_{d \given \Dcal_{\text{pool}}} = \frac{1}{m} \sum_{x \in \Dcal_{\text{pool}}} v_{x}$,
%
where $v_{x} = \II\left[ \exists x' \in \Dcal_{\text{pool}} \given a_{i(x)} <a_{i(x')} \wedge a_{i(x),z}>a_{i(x'),z} \right]$.
Figure~\ref{fig:pdm} shows that, on average across all test pools, the probability $p_{d \given \Dcal_{\text{pool}}}$ follows the same trend as $p_d$, however, it is slightly lower in value because each of the test pools is not representative of the entire population.
%
However, note that, as $m \rightarrow \infty$, one can readily conclude that $p_{d \given \Dcal_{\text{pool}}} \rightarrow p_d$.

\begin{figure}[t]
    \centering
    \hspace*{0.5cm}\includegraphics[width=0.6\textwidth]{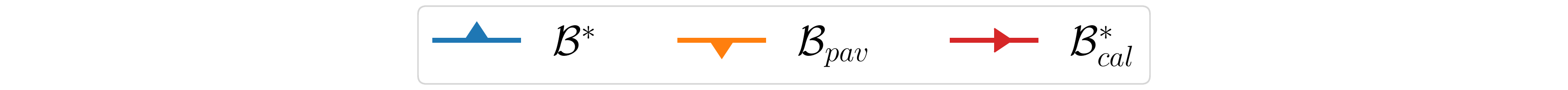}\\[-2.5ex]
    \subfloat[Citizenship status ($Z$)]{\includegraphics[width=0.35\textwidth]{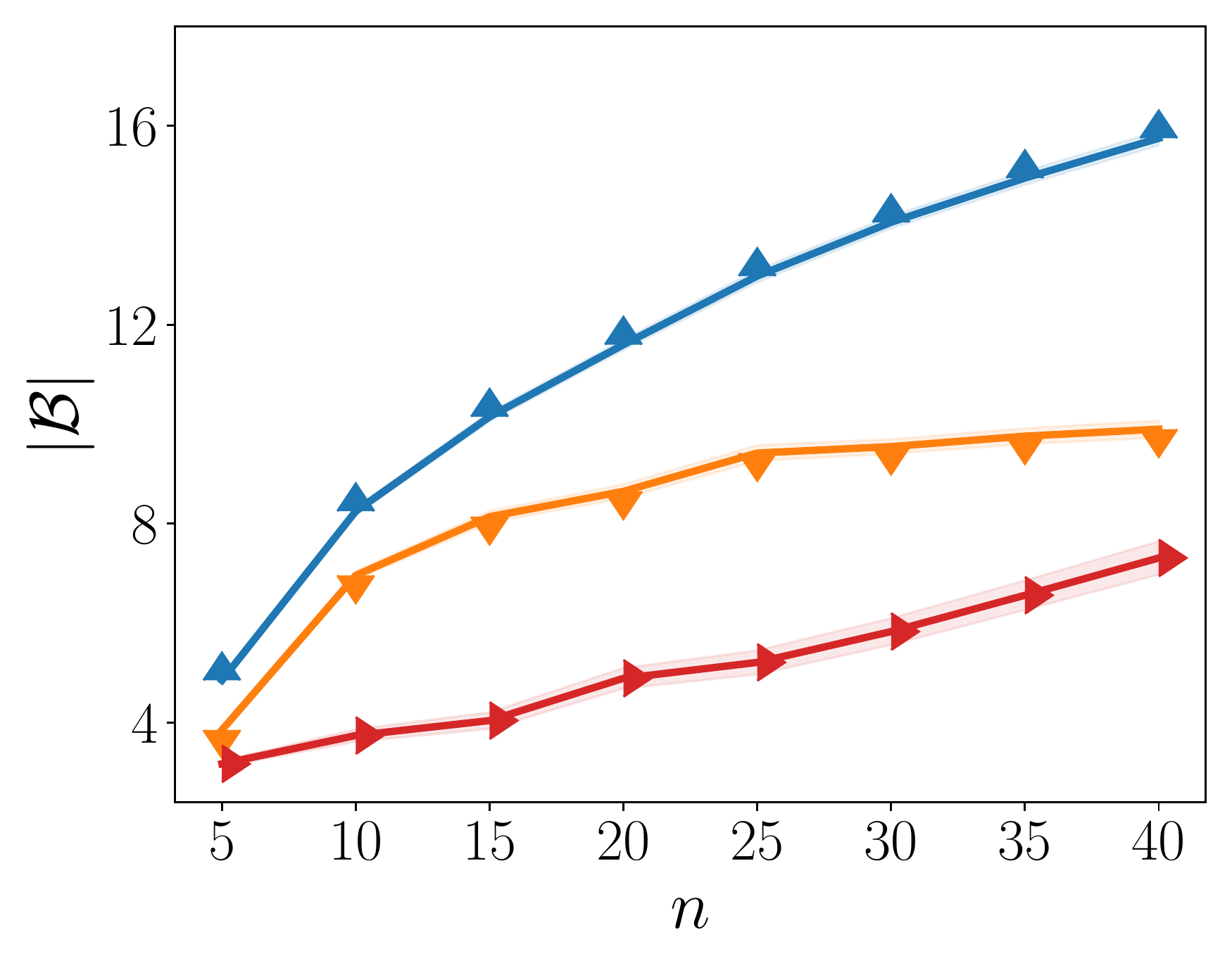}} \hspace{5mm}
    \subfloat[Race code ($Z$)]{\includegraphics[width=0.35\textwidth]{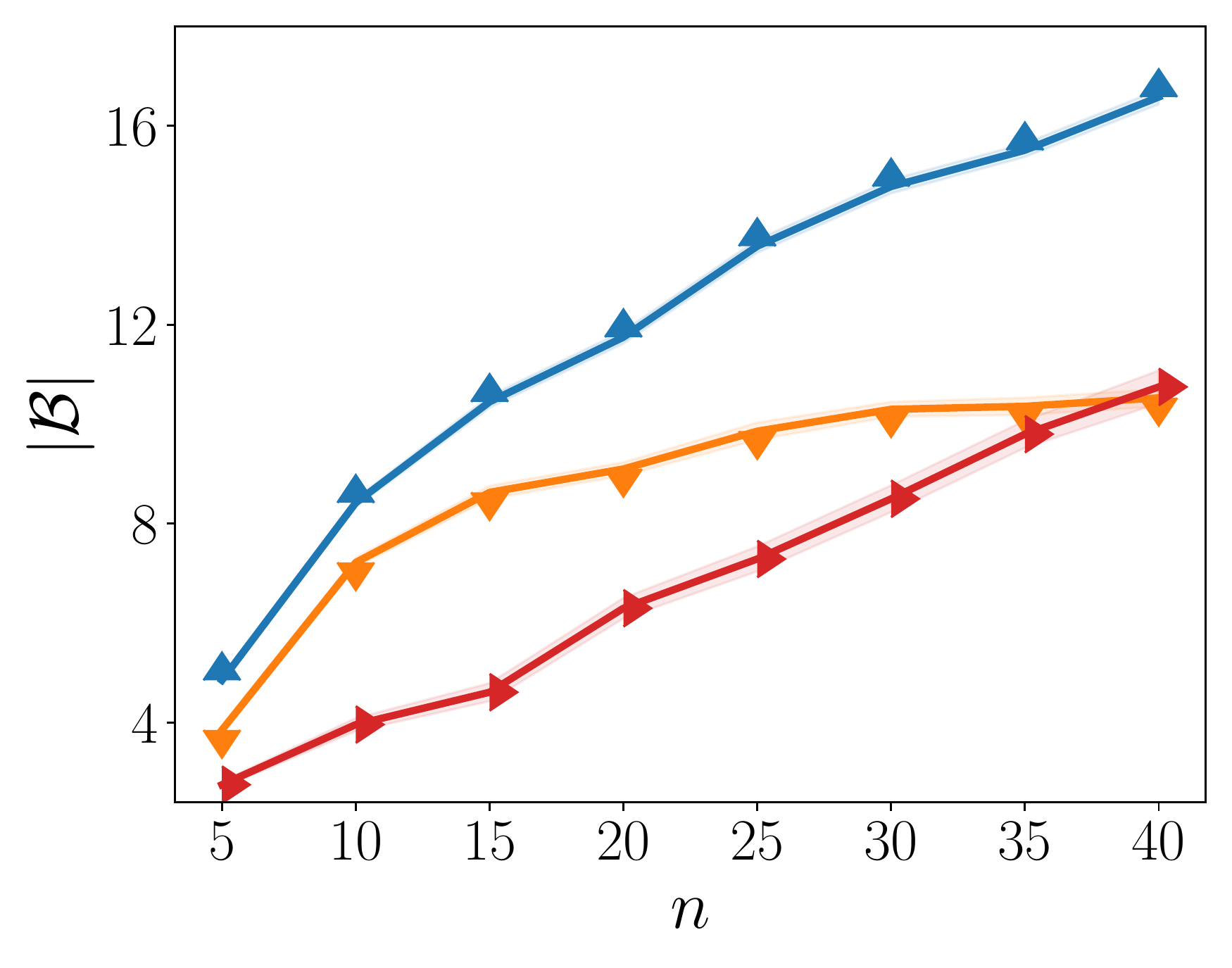}}
\caption{Size of the partitions $\Bcal_{\text{pav}}$, $\Bcal^{*}$ and $\Bcal^{*}_{\text{cal}}$
returned by Algorithms~\ref{alg:pav},~\ref{alg:optimal} and \ref{alg:multicalibrated}, respectively (higher is better).} \label{fig:partition-size}
\end{figure}

\xhdr{Algorithm~\ref{alg:optimal} consistently provides larger partitions, which result in more fine-grained classifiers and smaller shortlists, than Algorithms~\ref{alg:pav} and \ref{alg:multicalibrated}.}
We experiment with several screening classifiers $f$ with a varying number of bins $n$ and 
compare the size of the partitions $\Bcal$ provided by each of the algorithms, \ie, the number of bins of the 
modified classifiers $f_{\Bcal}$.
Figure~\ref{fig:partition-size} shows that the optimal partition $\Bcal^{*}$ 
is always greater in size than the partitions $\Bcal^{*}_{\text{cal}}$ and $\Bcal_{\text{pav}}$.
Moreover, it also shows that, as $n$ increases, the growth in the size of the partitions $\Bcal^*$ and $\Bcal_{pav}$ di\-mi\-ni\-shes because the occurrence of within-group unfairness increases, as shown in Figure~\ref{fig:exp_discriminations}.
Further, we use both the original classifier $f$ and the modified classifiers $f_{\Bcal^*}$, $f_{\Bcal_{\text{pav}}}$ and $f_{\Bcal^{*}_{\text{cal}}}$ to shortlist the minimum number of individuals among those in each of the simulated test pools $\{ \Bcal^{i}_{\text{pool}} \}$ such that, in expectation, there are at least $k$ qualified shortlisted individuals 
per pool.
To this end, for each test pool and classifier, we sort the candidates in decreasing order with respect to the corresponding quality score and, starting from the first, we keep shortlisting individuals in order until the 
sum of the quality scores reaches $k$ ~\cite[Appendix, A.3]{Wang2022ImprovingSP}). 
Figure~\ref{fig:shortlist-size} shows that the shortlists created using $f_{\Bcal^{*}}$ are consistently smaller than those created using $f_{\Bcal_\text{pav}}$ and $f_{\Bcal^{*}_{\text{cal}}}$ for $k = 5$. Moreover, it also shows that the price to pay for achieving within-group monotonicity, \ie, the difference in size between the shortlists created using $f$ and $f_{\Bcal^{*}}$, is small. We found qualitatively similar results for other $k$ values.
Appendix~\ref{app:exp-partitions} takes a closer look at the (group conditional) score values of $f$, $f_{\Bcal^{*}}$, $f_{\Bcal_\text{pav}}$ and $f_{\Bcal^{*}_{\text{cal}}}$.

\xhdr{Remark.} Note the shortlists created using $f_{\Bcal^{*}}$ will be larger than those created using $f$ and this imposes more burden on the decision maker in selecting the desired number of qualified candidates (\eg, they have to interview more candidates). However, it ensures none of the members within demographic groups are unfairly treated. Therefore, it shifts the costs of using a poor screening classifier from the applicants to the decision-maker. If $\abr{\Bcal^{*}}$ is too small, it may be a sign that the decision maker has to reconsider using $f$ as the screening classifier.
\begin{figure}[t]
    \centering
    \hspace*{0.5cm}\includegraphics[width=0.6\textwidth]{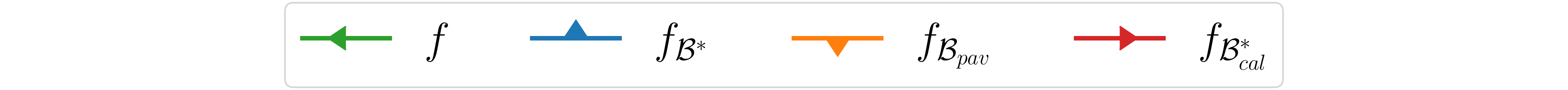}\\[-2.5ex]
    \subfloat[Citizenship status ($Z$)]{\includegraphics[width=0.35\textwidth]{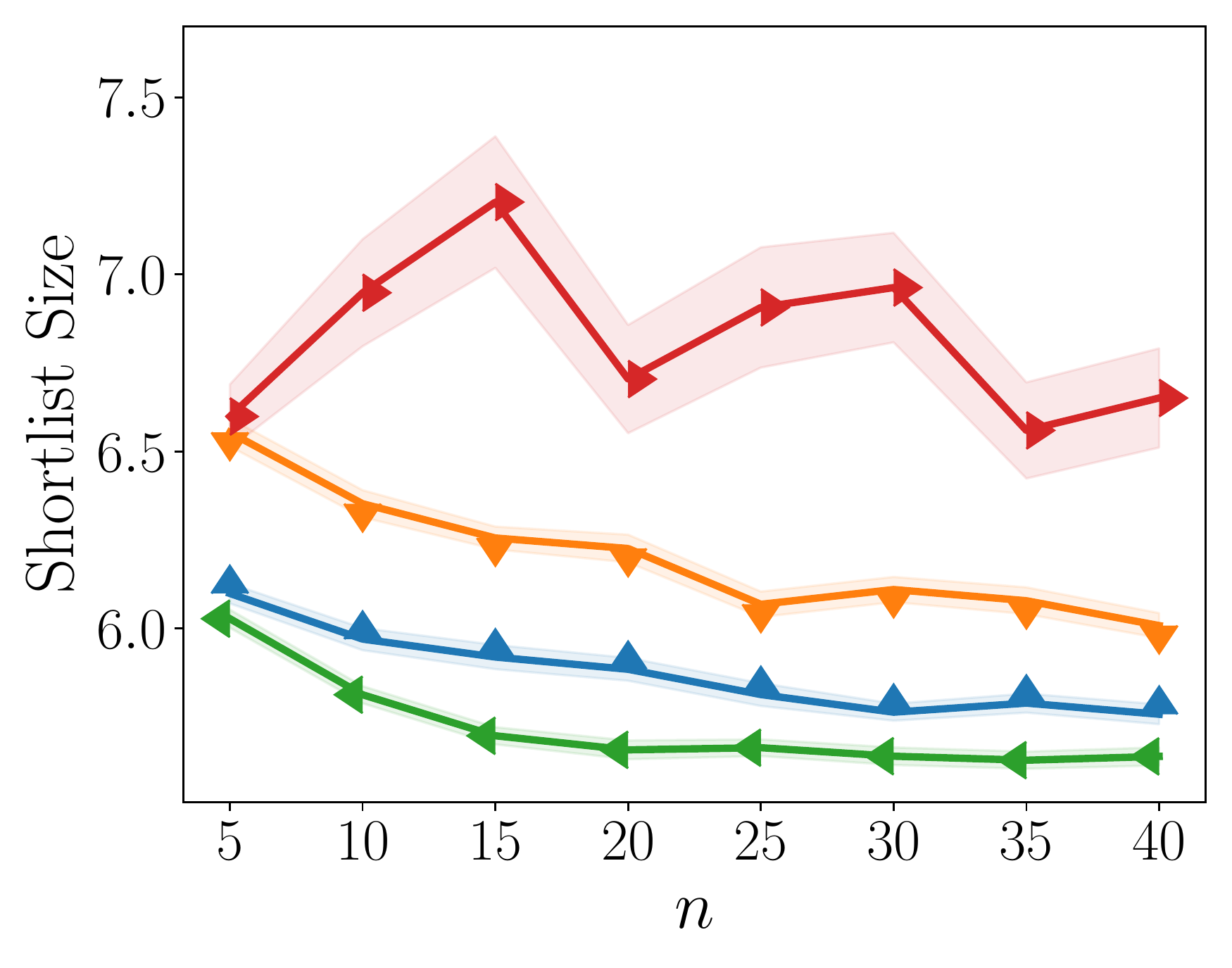}} \hspace{5mm}
    \subfloat[Race code ($Z$)]{\includegraphics[width=0.35\textwidth]{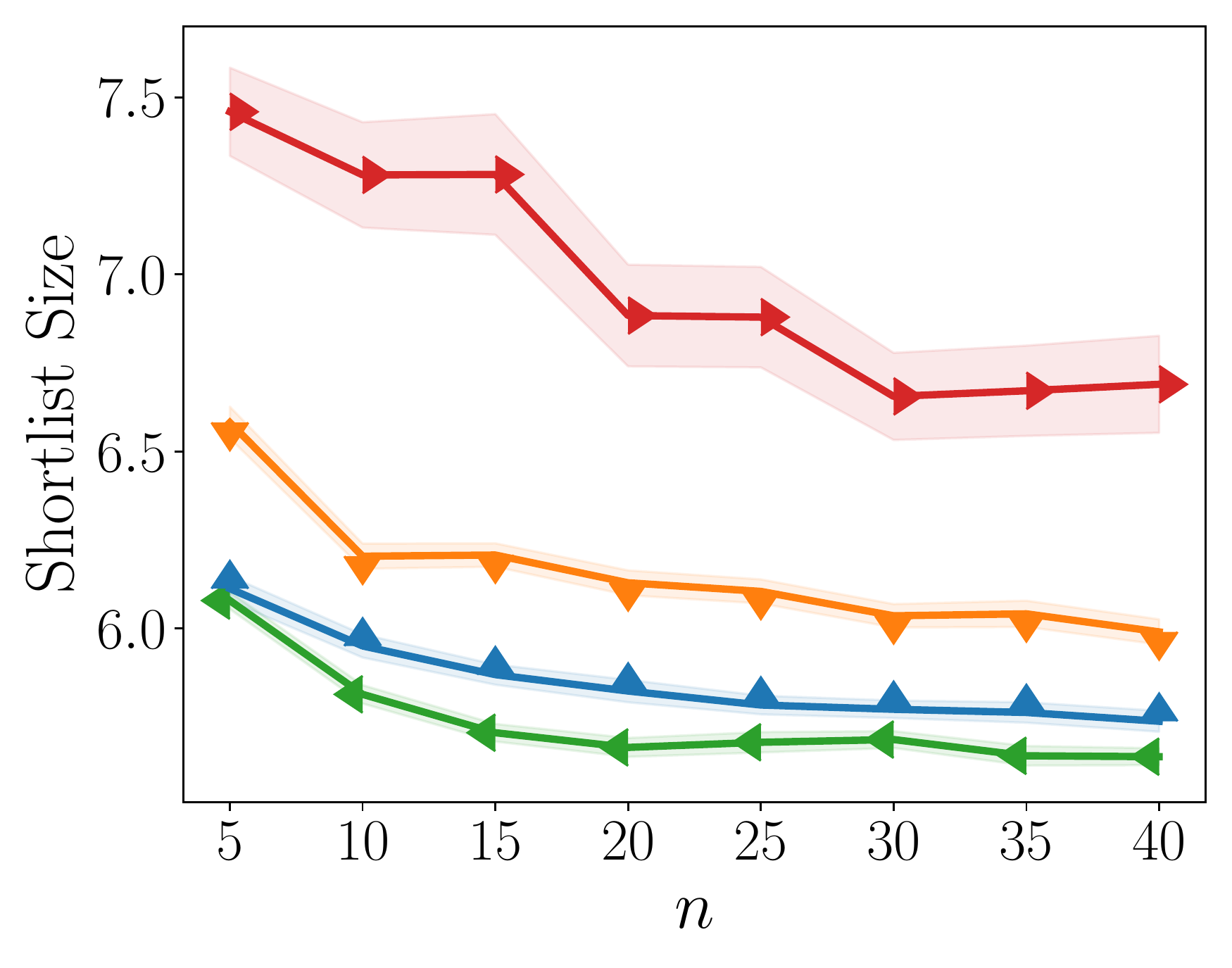}}
\caption{Size of the shortlists created using the original classifier $f$ and the modified classifiers $f_{\Bcal_{\text{pav}}}$, $f_{\Bcal^*}$ and $f_{\Bcal^{*}_{\text{cal}}}$ induced by the partitions found by Algorithms~\ref{alg:pav},~\ref{alg:optimal} and \ref{alg:multicalibrated}, respectively, for $k = 5$ (lower is better).} \label{fig:shortlist-size}
\end{figure}

\section{Conclusions}
\label{sec:conclusions}
In this work, we have first shown that optimal screening policies using calibrated classifiers may suffer from an understudied type of within-group unfairness. 
%
Then, we have developed a polynomial time algorithm based on dynamic programming to minimally modify any given calibrated classifier so that it satisfies within-group monotonicity, a natural monotonicity property that prevents the occurrence of within-group unfairness.
Finally, we have shown that within-group monotonicity can be achieved at a small cost in terms of prediction granularity and shortlist size.

Our work opens up many interesting avenues for future work. 
For example, it would be interesting to design classifiers that are within-group monotone with respect to every group that can be identified within a specified class of computations~\cite{hebert2018multicalibration}.
Moreover, in some scenarios, it might be sufficient to control the probability that an individual suffers from within-group unfairness.
Further, it would be important to investigate how within-group monotonicity interacts with group fairness~\cite{hardt2016equality,zafar2017fairness}.
Finally, it would be interesting to design post-processing algorithms using a sample access model~\cite{blasiok2022unifying}, rather than a prediction-only access model, and optimize other quality measures different from the partition size.

%
\section*{Acknowledgements} 
We would like to thank Nina Corvelo Benz, Eleni Straitouri,
 and Luke Wang for fruitful discussions and constructive feedback during different stages of the project. Gomez-Rodriguez acknowledges support from the European Research Council (ERC) under the European Union'{}s Horizon 2020 research and innovation programme (grant agreement No. 945719).

{ 
\bibliographystyle{unsrt}
\bibliography{within-group-monotonicity}

\begin{thebibliography}{10}

\bibitem{Etzioni2003}
Ruth Etzioni, Nicole Urban, Scott Ramsey, Martin McIntosh, Stephen Schwartz,
  Brian Reid, Jerald Radich, Garnet Anderson, and Leland Hartwell.
\newblock Early detection: The case for early detection.
\newblock {\em Nature reviews. Cancer}, 3:243--52, 05 2003.

\bibitem{shen2019}
Li~Shen, Laurie Margolies, Joseph Rothstein, Eugene Fluder, Russell McBride,
  and Weiva Sieh.
\newblock Deep learning to improve breast cancer detection on screening
  mammography.
\newblock {\em Scientific Reports}, 9:1--12, 08 2019.

\bibitem{cowgill2018bias}
Bo~Cowgill.
\newblock Bias and productivity in humans and algorithms: Theory and evidence
  from resume screening.
\newblock {\em Columbia Business School, Columbia University}, 29, 2018.

\bibitem{raghavan2020mitigating}
Manish Raghavan, Solon Barocas, Jon Kleinberg, and Karen Levy.
\newblock Mitigating bias in algorithmic hiring: Evaluating claims and
  practices.
\newblock In {\em Conference on Fairness, Accountability, and Transparency},
  page 469–481, 2020.

\bibitem{gorwa2020algorithmic}
Robert Gorwa, Reuben Binns, and Christian Katzenbach.
\newblock Algorithmic content moderation: Technical and political challenges in
  the automation of platform governance. big data \& society 7, 1 (2020),
  2053951719897945, 2020.

\bibitem{corbett2017algorithmic}
Sam Corbett-Davies, Emma Pierson, Avi Feller, Sharad Goel, and Aziz Huq.
\newblock Algorithmic decision making and the cost of fairness.
\newblock In {\em Proceedings of the 23rd acm sigkdd international conference
  on knowledge discovery and data mining}, pages 797--806, 2017.

\bibitem{kilbertus2020fair}
Niki Kilbertus, Manuel Gomez-Rodriguez, Bernhard Sch{\"o}lkopf, Krikamol
  Muandet, and Isabel Valera.
\newblock Fair decisions despite imperfect predictions.
\newblock In {\em International Conference on Artificial Intelligence and
  Statistics}, pages 277--287, 2020.

\bibitem{sahoo2021reliable}
Roshni Sahoo, Shengjia Zhao, Alyssa Chen, and Stefano Ermon.
\newblock Reliable decisions with threshold calibration.
\newblock In {\em Advances in Neural Information Processing Systems}, 2021.

\bibitem{Wang2022ImprovingSP}
Lequn Wang, Thorsten Joachims, and Manuel Gomez-Rodriguez.
\newblock Improving screening processes via calibrated subset selection.
\newblock In {\em Proceedings of the 39th International Conference on Machine
  Learning}, 2022.

\bibitem{jin2022selection}
Ying Jin and Emmanuel~J Cand{\`e}s.
\newblock Selection by prediction with conformal p-values.
\newblock {\em arXiv preprint arXiv:2210.01408}, 2022.

\bibitem{wang2023fairness}
Lequn Wang and Thorsten Joachims.
\newblock Fairness in the first stage of two-stage recommender systems.
\newblock In {\em Proceedings of the 16th International Conference on Web
  Search and Data Mining}, 2023.

\bibitem{Dawid1982TheWB}
A.~Philip Dawid.
\newblock The well-calibrated bayesian.
\newblock {\em Journal of the American Statistical Association}, 1982.

\bibitem{yang2019balanced}
Ke~Yang, Vasilis Gkatzelis, and Julia Stoyanovich.
\newblock Balanced ranking with diversity constraints.
\newblock In {\em Proceedings of the Twenty-Eighth International Joint
  Conference on Artificial Intelligence, IJCAI}, 2019.

\bibitem{karp1972reducibility}
Richard~M Karp.
\newblock Reducibility among combinatorial problems.
\newblock In {\em Complexity of Computer Computations: Proceedings of a
  symposium on the Complexity of Computer Computations}, pages 85--103.
  Springer, 1972.

\bibitem{barlow1972isotonic}
Richard~E Barlow and Hugh~D Brunk.
\newblock The isotonic regression problem and its dual.
\newblock {\em Journal of the American Statistical Association},
  67(337):140--147, 1972.

\bibitem{pleiss2017fairness}
Geoff Pleiss, Manish Raghavan, Felix Wu, Jon Kleinberg, and Kilian~Q
  Weinberger.
\newblock On fairness and calibration.
\newblock {\em Advances in neural information processing systems}, 30, 2017.

\bibitem{hardt2016equality}
Moritz Hardt, Eric Price, and Nati Srebro.
\newblock Equality of opportunity in supervised learning.
\newblock {\em Advances in neural information processing systems}, 29, 2016.

\bibitem{friedler2016possibility}
Sorelle~A Friedler, Carlos Scheidegger, and Suresh Venkatasubramanian.
\newblock On the (im) possibility of fairness.
\newblock {\em arXiv preprint arXiv:1609.07236}, 2016.

\bibitem{zafar2017fairness}
Muhammad~Bilal Zafar, Isabel Valera, Manuel Gomez-Rodriguez, and Krishna~P
  Gummadi.
\newblock Fairness constraints: Mechanisms for fair classification.
\newblock In {\em Artificial intelligence and statistics}, pages 962--970,
  2017.

\bibitem{kim2019multiaccuracy}
Michael~P Kim, Amirata Ghorbani, and James Zou.
\newblock Multiaccuracy: Black-box post-processing for fairness in
  classification.
\newblock In {\em Proceedings of the 2019 AAAI/ACM Conference on AI, Ethics,
  and Society}, pages 247--254, 2019.

\bibitem{beutel2019putting}
Alex Beutel, Jilin Chen, Tulsee Doshi, Hai Qian, Allison Woodruff, Christine
  Luu, Pierre Kreitmann, Jonathan Bischof, and Ed~H Chi.
\newblock Putting fairness principles into practice: Challenges, metrics, and
  improvements.
\newblock In {\em Proceedings of the 2019 AAAI/ACM Conference on AI, Ethics,
  and Society}, pages 453--459, 2019.

\bibitem{lahoti2020fairness}
Preethi Lahoti, Alex Beutel, Jilin Chen, Kang Lee, Flavien Prost, Nithum Thain,
  Xuezhi Wang, and Ed~Chi.
\newblock Fairness without demographics through adversarially reweighted
  learning.
\newblock {\em Advances in neural information processing systems}, 33:728--740,
  2020.

\bibitem{celis2017ranking}
L~Elisa Celis, Damian Straszak, and Nisheeth~K Vishnoi.
\newblock Ranking with fairness constraints.
\newblock {\em arXiv preprint arXiv:1704.06840}, 2017.

\bibitem{yang2017measuring}
Ke~Yang and Julia Stoyanovich.
\newblock Measuring fairness in ranked outputs.
\newblock In {\em Proceedings of the 29th international conference on
  scientific and statistical database management}, pages 1--6, 2017.

\bibitem{biega2018equity}
Asia~J Biega, Krishna~P Gummadi, and Gerhard Weikum.
\newblock Equity of attention: Amortizing individual fairness in rankings.
\newblock In {\em The 41st international acm sigir conference on research \&
  development in information retrieval}, pages 405--414, 2018.

\bibitem{singh2018fairness}
Ashudeep Singh and Thorsten Joachims.
\newblock Fairness of exposure in rankings.
\newblock In {\em Proceedings of the 24th ACM SIGKDD International Conference
  on Knowledge Discovery \& Data Mining}, pages 2219--2228, 2018.

\bibitem{sing2019policy}
Ashudeep Singh and Thorsten Joachims.
\newblock Policy learning for fairness in ranking.
\newblock In H.~Wallach, H.~Larochelle, A.~Beygelzimer, F.~d\textquotesingle
  Alch\'{e}-Buc, E.~Fox, and R.~Garnett, editors, {\em Advances in Neural
  Information Processing Systems}, 2019.

\bibitem{garb1997race}
Howard~N Garb.
\newblock Race bias, social class bias, and gender bias in clinical judgment.
\newblock {\em Clinical Psychology: Science and Practice}, 4(2):99, 1997.

\bibitem{williams2009discrimination}
David~R Williams and Selina~A Mohammed.
\newblock Discrimination and racial disparities in health: evidence and needed
  research.
\newblock {\em Journal of behavioral medicine}, 32(1):20--47, 2009.

\bibitem{dieterich2016compas}
William Dieterich, Christina Mendoza, and Tim Brennan.
\newblock Compas risk scales: Demonstrating accuracy equity and predictive
  parity.
\newblock {\em Northpointe Inc}, 7(4), 2016.

\bibitem{Flores2016}
Anthony W~Flores, Kristin Bechtel, and Christopher Lowenkamp.
\newblock False positives, false negatives, and false analyses: A rejoinder to
  “machine bias: There’s software used across the country to predict future
  criminals. and it’s biased against blacks.”.
\newblock {\em Federal probation}, 2016.

\bibitem{angwin2016machine}
Julia Angwin, Jeff Larson, Surya Mattu, and Lauren Kirchner.
\newblock Machine bias: There’s software used across the country to predict
  future criminals. and it’s biased against blacks. propublica, may 23, 2016.

\bibitem{feller2016computer}
Avi Feller, Emma Pierson, Sam Corbett-Davies, and Sharad Goel.
\newblock A computer program used for bail and sentencing decisions was labeled
  biased against blacks. it’s actually not that clear.
\newblock {\em The Washington Post}, 17, 2016.

\bibitem{chouldechova2017fair}
Alexandra Chouldechova.
\newblock Fair prediction with disparate impact: A study of bias in recidivism
  prediction instruments.
\newblock {\em Big data}, 5(2):153--163, 2017.

\bibitem{dressel2018accuracy}
Julia Dressel and Hany Farid.
\newblock The accuracy, fairness, and limits of predicting recidivism.
\newblock {\em Science advances}, 4(1):eaao5580, 2018.

\bibitem{sweeney2013discrimination}
Latanya Sweeney.
\newblock Discrimination in online ad delivery.
\newblock {\em Communications of the ACM}, 56(5):44--54, 2013.

\bibitem{datta2014automated}
Amit Datta, Michael~Carl Tschantz, and Anupam Datta.
\newblock Automated experiments on ad privacy settings: A tale of opacity,
  choice, and discrimination.
\newblock {\em arXiv preprint arXiv:1408.6491}, 2014.

\bibitem{beutel2019fairness}
Alex Beutel, Jilin Chen, Tulsee Doshi, Hai Qian, Li~Wei, Yi~Wu, Lukasz Heldt,
  Zhe Zhao, Lichan Hong, Ed~H Chi, et~al.
\newblock Fairness in recommendation ranking through pairwise comparisons.
\newblock In {\em Proceedings of the 25th ACM SIGKDD International Conference
  on Knowledge Discovery \& Data Mining}, pages 2212--2220, 2019.

\bibitem{wang2021practical}
Xuezhi Wang, Nithum Thain, Anu Sinha, Flavien Prost, Ed~H Chi, Jilin Chen, and
  Alex Beutel.
\newblock Practical compositional fairness: Understanding fairness in
  multi-component recommender systems.
\newblock In {\em Proceedings of the 14th ACM International Conference on Web
  Search and Data Mining}, pages 436--444, 2021.

\bibitem{prost2022simpson}
Flavien Prost, Ben Packer, Jilin Chen, Li~Wei, Pierre Kremp, Nicholas Blumm,
  Susan Wang, Tulsee Doshi, Tonia Osadebe, Lukasz Heldt, et~al.
\newblock Simpson's paradox in recommender fairness: Reconciling differences
  between per-user and aggregated evaluations.
\newblock {\em arXiv preprint arXiv:2210.07755}, 2022.

\bibitem{zehlike2017fa}
Meike Zehlike, Francesco Bonchi, Carlos Castillo, Sara Hajian, Mohamed Megahed,
  and Ricardo Baeza-Yates.
\newblock Fa* ir: A fair top-k ranking algorithm.
\newblock In {\em Proceedings of the 2017 ACM on Conference on Information and
  Knowledge Management}, pages 1569--1578, 2017.

\bibitem{speicher2018unified}
Till Speicher, Hoda Heidari, Nina Grgic-Hlaca, Krishna~P Gummadi, Adish Singla,
  Adrian Weller, and Muhammad~Bilal Zafar.
\newblock A unified approach to quantifying algorithmic unfairness: Measuring
  individual \&group unfairness via inequality indices.
\newblock In {\em Proceedings of the 24th ACM SIGKDD international conference
  on knowledge discovery \& data mining}, pages 2239--2248, 2018.

\bibitem{garcia2021maxmin}
David Garc{\'\i}a-Soriano and Francesco Bonchi.
\newblock Maxmin-fair ranking: individual fairness under group-fairness
  constraints.
\newblock In {\em Proceedings of the 27th ACM SIGKDD Conference on Knowledge
  Discovery \& Data Mining}, pages 436--446, 2021.

\bibitem{zehlike2022fair}
Meike Zehlike, Tom S{\"u}hr, Ricardo Baeza-Yates, Francesco Bonchi, Carlos
  Castillo, and Sara Hajian.
\newblock Fair top-k ranking with multiple protected groups.
\newblock {\em Information Processing \& Management}, 59(1):102707, 2022.

\bibitem{kleinberg2018inherent}
Jon Kleinberg.
\newblock Inherent trade-offs in algorithmic fairness.
\newblock In {\em Abstracts of the 2018 ACM International Conference on
  Measurement and Modeling of Computer Systems}, pages 40--40, 2018.

\bibitem{zadrozny2001obtaining}
Bianca Zadrozny and Charles Elkan.
\newblock Obtaining calibrated probability estimates from decision trees and
  naive bayesian classifiers.
\newblock In {\em Icml}, volume~1, pages 609--616. Citeseer, 2001.

\bibitem{Zadrozny2002TransformingCS}
Bianca Zadrozny and Charles~Peter Elkan.
\newblock Transforming classifier scores into accurate multiclass probability
  estimates.
\newblock {\em Proceedings of the eighth ACM SIGKDD international conference on
  Knowledge discovery and data mining}, 2002.

\bibitem{guo2017calibration}
Chuan Guo, Geoff Pleiss, Yu~Sun, and Kilian~Q Weinberger.
\newblock On calibration of modern neural networks.
\newblock In {\em International conference on machine learning}, pages
  1321--1330. PMLR, 2017.

\bibitem{kumar2018trainable}
Aviral Kumar, Sunita Sarawagi, and Ujjwal Jain.
\newblock Trainable calibration measures for neural networks from kernel mean
  embeddings.
\newblock In {\em International Conference on Machine Learning}, pages
  2805--2814. PMLR, 2018.

\bibitem{Krishnan2020ImprovingMC}
Ranganath Krishnan and Omesh Tickoo.
\newblock Improving model calibration with accuracy versus uncertainty
  optimization.
\newblock {\em arXiv preprint arXiv:2012.07923}, 2020.

\bibitem{Karandikar2021SoftCO}
Archit Karandikar, Nicholas Cain, Dustin Tran, Balaji Lakshminarayanan,
  Jonathon Shlens, Michael~Curtis Mozer, and Becca Roelofs.
\newblock Soft calibration objectives for neural networks.
\newblock In {\em Advances in Neural Information Processing Systems}, 2021.

\bibitem{Broecker11}
Jochen Bröcker.
\newblock Erratum to: Estimating reliability and resolution of probability
  forecasts through decomposition of the empirical score.
\newblock {\em Climate Dynamics - CLIM DYNAM}, 39:1--13, 08 2011.

\bibitem{Ferro2012ABD}
Christopher A.~T. Ferro and Thomas~E. Fricker.
\newblock A bias‐corrected decomposition of the brier score.
\newblock {\em Quarterly Journal of the Royal Meteorological Society}, 138,
  2012.

\bibitem{roelofs2022mitigating}
Rebecca Roelofs, Nicholas Cain, Jonathon Shlens, and Michael~C Mozer.
\newblock Mitigating bias in calibration error estimation.
\newblock In {\em International Conference on Artificial Intelligence and
  Statistics}, pages 4036--4054. PMLR, 2022.

\bibitem{bogen2018help}
Miranda Bogen and Aaron Rieke.
\newblock Help wanted: An examination of hiring algorithms, equity, and bias.
\newblock 2018.

\bibitem{garr2019diversity}
Stacia~Sherman Garr and Carole Jackson.
\newblock Diversity \& inclusion technology: The rise of a transformative
  market.
\newblock {\em Red Thread Research and Mercer}, 2019.

\bibitem{tambe2019artificial}
Prasanna Tambe, Peter Cappelli, and Valery Yakubovich.
\newblock Artificial intelligence in human resources management: Challenges and
  a path forward.
\newblock {\em California Management Review}, 61(4):15--42, 2019.

\bibitem{brier1950verification}
Glenn~W Brier et~al.
\newblock Verification of forecasts expressed in terms of probability.
\newblock {\em Monthly weather review}, 78(1):1--3, 1950.

\bibitem{Gneiting2007}
Tilmann Gneiting, Fadoua Balabdaoui, and Adrian Raftery.
\newblock Probabilistic forecasts, calibration and sharpness.
\newblock {\em Journal of the Royal Statistical Society: Series B (Statistical
  Methodology)}, 2007.

\bibitem{gupta2020distribution}
Chirag Gupta, Aleksandr Podkopaev, and Aaditya Ramdas.
\newblock Distribution-free binary classification: prediction sets, confidence
  intervals and calibration.
\newblock {\em Advances in Neural Information Processing Systems},
  33:3711--3723, 2020.

\bibitem{wagner1982simpson}
Clifford~H Wagner.
\newblock Simpson's paradox in real life.
\newblock {\em The American Statistician}, 36(1):46--48, 1982.

\bibitem{pearl2000models}
Judea Pearl.
\newblock Models, reasoning and inference.
\newblock {\em Cambridge, UK: Cambridge University Press}, 19(2), 2000.

\bibitem{blyth1972simpson}
Colin~R Blyth.
\newblock On simpson's paradox and the sure-thing principle.
\newblock {\em Journal of the American Statistical Association},
  67(338):364--366, 1972.

\bibitem{woodworth2017learning}
Blake Woodworth, Suriya Gunasekar, Mesrob~I Ohannessian, and Nathan Srebro.
\newblock Learning non-discriminatory predictors.
\newblock In {\em Conference on Learning Theory}, pages 1920--1953. PMLR, 2017.

\bibitem{10.5555/1036843.1036912}
Linda~C. van~der Gaag, Hans~L. Bodlaender, and Ad~Feelders.
\newblock Monotonicity in bayesian networks.
\newblock In {\em Proceedings of the 20th Conference on Uncertainty in
  Artificial Intelligence}, UAI '04, page 569–576. AUAI Press, 2004.

\bibitem{10.1214/20-EJS1749}
Rina~Foygel Barber.
\newblock {Is distribution-free inference possible for binary regression?}
\newblock {\em Electronic Journal of Statistics}, 14(2):3487 -- 3524, 2020.

\bibitem{blasiok2022unifying}
Jaros{\l}aw B{\l}asiok, Parikshit Gopalan, Lunjia Hu, and Preetum Nakkiran.
\newblock A unifying theory of distance from calibration.
\newblock {\em arXiv preprint arXiv:2211.16886}, 2022.

\bibitem{Ayer1955ANED}
Miriam~C. Ayer, Hugh~D. Brunk, George~M. Ewing, W.~T. Reid, and Edward
  Silverman.
\newblock An empirical distribution function for sampling with incomplete
  information.
\newblock {\em Annals of Mathematical Statistics}, 26:641--647, 1955.

\bibitem{EedenConstancevan1958Taeo}
Constance~van Eeden.
\newblock Testing and estimating ordered parameters of probability
  distributions / constance van eeden., 1958.

\bibitem{10.1093/biomet/46.3-4.317}
R.~E. Miles.
\newblock {The complete amalgamation into blocks, by weighted means, of a
  finite set of real numbers}.
\newblock {\em Biometrika}, 46(3-4):317--327, 1959.

\bibitem{10.2307/2332806}
D.~J. Bartholomew.
\newblock A test of homogeneity for ordered alternatives.
\newblock {\em Biometrika}, 46(1/2):36--48, 1959.

\bibitem{yu2016exact}
Yao-Liang Yu and Eric~P Xing.
\newblock Exact algorithms for isotonic regression and related.
\newblock In {\em Journal of Physics: Conference Series}, volume 699, page
  012016. IOP Publishing, 2016.

\bibitem{jordan2019optimal}
Alexander~I Jordan, Anja M{\"u}hlemann, and Johanna~F Ziegel.
\newblock Optimal solutions to the isotonic regression problem.
\newblock {\em arXiv preprint arXiv:1904.04761}, 2019.

\bibitem{Simpson1951TheIO}
English Simpson.
\newblock The interpretation of interaction in contingency tables.
\newblock {\em Journal of the royal statistical society series
  b-methodological}, 13:238--241, 1951.

\bibitem{hebert2018multicalibration}
Ursula H{\'e}bert-Johnson, Michael Kim, Omer Reingold, and Guy Rothblum.
\newblock Multicalibration: Calibration for the (computationally-identifiable)
  masses.
\newblock In {\em International Conference on Machine Learning}, pages
  1939--1948. PMLR, 2018.

\bibitem{jung2021moment}
Christopher Jung, Changhwa Lee, Mallesh Pai, Aaron Roth, and Rakesh Vohra.
\newblock Moment multicalibration for uncertainty estimation.
\newblock In {\em Conference on Learning Theory}, pages 2634--2678. PMLR, 2021.

\bibitem{ding2021retiring}
Frances Ding, Moritz Hardt, John Miller, and Ludwig Schmidt.
\newblock Retiring adult: New datasets for fair machine learning.
\newblock {\em Advances in Neural Information Processing Systems},
  34:6478--6490, 2021.

\bibitem{collins2007tackling}
Brian~W Collins.
\newblock Tackling unconscious bias in hiring practices: The plight of the
  rooney rule.
\newblock {\em NYUL Rev.}, 82:870, 2007.

\end{thebibliography}
}

\clearpage
\newpage

\appendix

%




\section{Proofs} \label{app:proofs}

\subsection{Proof of Proposition~\ref{prop:within-group-discrimination}}
By definition, the threshold decision rule $\pi$ outputs $S = 0$ if $f(X) = a$ and $S = 1$ if $f(X) = b$. As a result, it immediately follows that:
\begin{multline*}
    \EE_{Y \sim P_{Y \given X, Z}, \, S \sim \pi} \left[ Y (1-S) \given f(X) = a, Z = z \right] = \EE_{Y \sim P_{Y \given X, Z}} \left[ Y \given f(X) = a, Z = z \right] \\
    >  \EE_{Y \sim P_{Y \given X, Z}} \left[ Y \given f(X) = b, Z = z \right] =
    \EE_{Y \sim P_{Y \given X, Z}, \, S \sim \pi} \left[ Y S \given f(X) = b, Z = z \right].
\end{multline*}

\subsection{Proof of Theorem~\ref{thm:np-hardness}} \label{app:np-hardness}
We call a partition $\Bcal\in\Pscr$ valid if $f_\Bcal$ is within-group monotone.
We first show that, by finding a valid partition $\Bcal$ of maximum size, we can decide whether there exists a valid partition $\Bcal'$ of size $\abr{\Bcal'} = 2$.
Assume the valid partition $\Bcal$ of maximum size has size $\abr{\Bcal} = m$. Then, if 
$m \geq 2$, we can conclude that such a partition exists using Lemma~\ref{lem:dicision-to-original} and, if $m<2$, no such partition exists because $\Bcal$ is the valid partition 
of maximum size.
Now, since we prove in Lemma~\ref{lem:decision-np-hard} that this decision problem is
NP-complete, we can directly conclude that the problem of finding the valid partition of 
maximum size is NP-hard.

\begin{lemma}\label{lem:dicision-to-original}
    Assume the valid partition $\Bcal$ of maximum size has size $\abr{\Bcal}=k$. Then for every $k'\in\R{1}{k-1}$, there exist a valid partition $\Bcal'$ such that $\abr{\Bcal'} = k'$.
\end{lemma}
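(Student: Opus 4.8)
The plan is to prove Lemma~\ref{lem:dicision-to-original} by a ``splitting'' argument: starting from the valid partition $\Bcal$ of maximum size $k$, I will show that merging two adjacent cells always yields another valid partition (of size one smaller), and then iterate this $k-k'$ times to reach any target size $k'\in\R{1}{k-1}$. The key observation is that the set of partitions $\Bcal$ for which $f_\Bcal$ is within-group monotone is closed under merging \emph{adjacent} cells. Here ``adjacent'' should be understood with respect to the ordering of the cells by $a_{\Acal}$; note this argument does not even need contiguity, since a within-group monotone $f_\Bcal$ remains within-group monotone if we fuse two neighboring score levels.

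The main step is the following claim: if $f_\Bcal$ is within-group monotone with cells ordered as $a_{\Acal_1}\le\cdots\le a_{\Acal_{k}}$, and we form $\Bcal'$ by replacing $\Acal_i,\Acal_{i+1}$ with $\Acal_i\cup\Acal_{i+1}$, then $f_{\Bcal'}$ is still within-group monotone. To see this, observe that for every group $z$ the merged cell has group-conditional value
\begin{equation*}
a_{\Acal_i\cup\Acal_{i+1},z} = \frac{\rho_{\Acal_i,z}\,a_{\Acal_i,z} + \rho_{\Acal_{i+1},z}\,a_{\Acal_{i+1},z}}{\rho_{\Acal_i,z} + \rho_{\Acal_{i+1},z}},
\end{equation*}
where $\rho_{\Acal,z}=\sum_{j\in\Acal}\rho_j\rho_{z\given j}$, i.e.\ it is a convex combination of $a_{\Acal_i,z}$ and $a_{\Acal_{i+1},z}$ and hence lies between them. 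Since within-group monotonicity of $\Bcal$ gives $a_{\Acal_{i-1},z}\le a_{\Acal_i,z}\le a_{\Acal_{i+1},z}\le a_{\Acal_{i+2},z}$ for all $z$ (and the unconditional ordering $a_{\Acal_{i-1}}\le a_{\Acal_i\cup\Acal_{i+1}}\le a_{\Acal_{i+2}}$ holds by the same convexity), all the required inequalities for $\Bcal'$ follow: the new cell sits correctly between its neighbors in every group, and no other pairwise relation is affected. One should also double-check the edge cases $i=1$ and $i=k-1$ where one of the neighbors is absent, which is immediate, and handle the subtlety in the definition of within-group monotonicity where a group may have zero mass on a cell (then that cell imposes no constraint for that group, so merging only relaxes constraints).

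Given the claim, I finish by induction on $k-k'$: if $k'=k$ there is nothing to prove (take $\Bcal'=\Bcal$); if $k'<k$, apply the claim once to get a valid partition of size $k-1$, then invoke the inductive hypothesis with that partition playing the role of the maximum-size valid partition for target size $k'$ (note the hypothesis only needs \emph{some} valid partition of size $k-1$, not a maximum one, so the induction goes through cleanly). I expect the convexity/betweenness argument for the merged cell to be the only real content; the potential pitfall is bookkeeping around cells with zero group-conditional mass and making sure the cell ordering by $a_{\Acal}$ is preserved after merging, which follows because the merged cell's unconditional value is likewise a convex combination of its constituents' values.
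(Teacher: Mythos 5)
Your proof is correct and takes essentially the same route as the paper's: both rest on the observation that a contiguous coarsening of a valid partition (contiguous with respect to the cell ordering by $a_{\Acal}$) remains valid, with the paper invoking Proposition~\ref{prop:monotone-contiguous} and merging a whole block of top cells at once, while you merge adjacent pairs one at a time and justify validity directly via the convexity/betweenness of the merged cell's unconditional and group-conditional values. The iterative-merge formulation and the explicit convexity argument are stylistic rather than substantive differences, and both correctly yield a valid partition of each size $k' \in \{1, \ldots, k-1\}$.
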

\begin{proof}
    By Proposition~\ref{prop:monotone-contiguous}, we have that any contiguous partition $\Bcal'$ on $\R{1}{\abr{\Bcal}}$ is monotone with respect to $f_\Bcal$. Furthermore, due to the same proposition, $\Bcal'$ is also monotone with respect to the set $\cbr{a_{\Acal_i,z}}_{i\in\R{1}{\abr{\Bcal}}}$ for all $z\in\Zcal$. Since $\Bcal$ is valid, we have that $\cbr{a_{\Acal_i,z}}_{i\in\R{1}{\abr{\Bcal}}}$ is increasing for all $z\in\Zcal$. As a result, $\Bcal'$ is a valid partition. Thus, for any $k'\in\R{1}{k-1}$, we have that the contiguous partition $\Bcal' = \cbr{\Acal_1, \Acal_2,\ldots, \Acal_{\abr{\Bcal}-k'-1},\cup_{j\in\R{0}{k'}}\Acal_{\abr{\Bcal}-j}}$ is valid and $\abr{\Bcal'} = k'$. This concludes the proof.
\end{proof}

\begin{lemma}\label{lem:decision-np-hard}
    The problem of deciding whether there exists a valid partition $\Bcal$ such that $\abr{\Bcal}=2$ is NP-complete.
\end{lemma}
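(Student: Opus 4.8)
The plan is to prove membership in NP and then NP-hardness via a reduction through an auxiliary combinatorial problem. Membership in NP is immediate: a partition $\Bcal$ with $\abr{\Bcal}=2$ is a polynomial-size certificate, and checking that $f_\Bcal$ is within-group monotone only requires computing the two cell values $a_{\Acal}$ together with the $\abr{\Zcal}$ pairs of group-conditional values $a_{\Acal,z}$ and comparing them, which is polynomial. So the work lies in the hardness part.

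First I would isolate the combinatorial core, which I call the \emph{equal average partition problem}: given positive integers $c_1,\dots,c_N$, decide whether $\{1,\dots,N\}$ can be split into two non-empty cells $\Acal_1,\Acal_2$ with $\frac{1}{\abr{\Acal_1}}\sum_{i\in\Acal_1}c_i=\frac{1}{\abr{\Acal_2}}\sum_{i\in\Acal_2}c_i$. I would prove it NP-complete by reduction from \textsc{Partition}. The key is to linearize the averaging constraint: writing $\bar c$ for the overall average, such a partition exists iff some non-empty proper subset $S$ satisfies $\sum_{i\in S}(c_i-\bar c)=0$; clearing the common denominator $N$ and writing $d_i$ for the resulting integers, this is exactly asking whether the multiset $\{d_i\}$, which sums to zero, has a non-empty proper zero-sum sub-multiset. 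Given a \textsc{Partition} instance $a_1,\dots,a_m$ with $\sum_i a_i=2T$, put $d_i=a_i$ for $i\le m$ and $d_{m+1}=d_{m+2}=-T$ (so $\sum d_i=0$), then shift to positive integers via $c_i=d_i+M$ with $M=1+\max_i\abr{d_i}$; a non-empty proper zero-sum sub-multiset of $\{d_i\}$ must contain exactly one copy of $-T$ (zero copies is impossible since the $a_i$ are positive; two copies forces taking all the $a_i$, hence the whole set), so such a sub-multiset exists iff some subset of the $a_i$ sums to $T$. This reduction is polynomial, so the equal average partition problem is NP-complete.

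Next I would reduce the equal average partition problem to the decision problem of the lemma. Given $c_1,\dots,c_N$, first rescale them into $[0,C]$ for a fixed $C\le 1$, which preserves whether an equal average partition exists. Build a calibrated classifier with $N$ bins, uniform masses $\rho_i=1/N$, and three groups $\Zcal=\{z_1,z_2,z_3\}$ with $\rho_{z_1\given i}=\rho_{z_2\given i}=\frac14$ and $\rho_{z_3\given i}=\frac12$, setting $a_{i,z_1}=c_i$, $a_{i,z_2}=C-c_i$ (a ``reflection''), and $a_{i,z_3}=u_i$, where $u_1,\dots,u_N$ is a fixed, polynomial-time computable sequence of distinct values in $(0,\frac12]$ such that the multiset $\{u_i\}$ \emph{itself} admits no equal average partition; for instance $u_i=N^{\,i-N-1}$ works, since a base-$N$ digit argument shows no subset average can equal the overall average. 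Then $a_i=\sum_z\rho_{z\given i}a_{i,z}=\frac14 c_i+\frac14(C-c_i)+\frac12 u_i=\frac C4+\frac{u_i}{2}$, so the $a_i$ are distinct and lie in $[0,1]$, and all consistency requirements of a calibrated classifier with $\abr{\Rf}=N$ are met. For any two-cell partition, each of $a_{\Acal}$ and $a_{\Acal,z}$ is just the arithmetic mean over $\Acal$ of the corresponding values, because $\rho_i$ and $\rho_{z\given i}$ are uniform over $i$. Since $\{u_i\}$ has no equal average partition, the two cells always satisfy $a_{\Acal_1}\neq a_{\Acal_2}$; ordering so that $a_{\Acal_1}<a_{\Acal_2}$, validity of $f_\Bcal$ means $a_{\Acal_1,z}\le a_{\Acal_2,z}$ for $z\in\{z_1,z_2,z_3\}$. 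The $z_3$ inequality holds automatically; the $z_1$ inequality is $\mathrm{avg}(c,\Acal_1)\le\mathrm{avg}(c,\Acal_2)$ and the $z_2$ inequality is $C-\mathrm{avg}(c,\Acal_1)\le C-\mathrm{avg}(c,\Acal_2)$, i.e.\ $\mathrm{avg}(c,\Acal_1)\ge\mathrm{avg}(c,\Acal_2)$; together they force equality. Hence a valid partition of size $2$ exists iff $c_1,\dots,c_N$ admits an equal average partition, and the reduction is polynomial.

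I expect the main obstacle to be proving the equal average partition problem NP-complete: the cardinality in the denominator of an average makes a naive padding reduction from \textsc{Partition} fail, and the crucial idea is to recast the condition as a non-empty-proper-zero-sum-subset problem, which makes it linear. A secondary technical point is the choice of the auxiliary sequence $\{u_i\}$ in the screening reduction, which is what forces the two cell scores to differ so that within-group monotonicity is a genuine, non-vacuous constraint; without it the reduction would collapse.
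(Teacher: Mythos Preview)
Your proposal is correct and follows the same high-level route as the paper---both reduce through the \emph{equal average partition} (EAP) problem---but the two executions differ in both sub-reductions. For the NP-completeness of EAP, the paper reduces from \textsc{Equal Cardinality Partition} by adjoining two copies of a large element $n\sigma$ and arguing that any equal-average split must place one copy in each side; you instead reduce from ordinary \textsc{Partition} by recasting equal averages as a non-empty proper zero-sum subset of the centered values $d_i=c_i-\bar c$, then building $d_i$ from a \textsc{Partition} instance plus two copies of $-T$. Your argument is arguably more elementary (no case analysis on averages), at the cost of producing a multiset with repeated values. For the reduction from EAP to the screening decision problem, the paper uses only two groups with the reflection $a_{i,z_2}=1-a_{i,z_1}$ and lets $a_i$ be an affine function of $s_i$; validity then forces equal averages, and in the YES case the two cell scores $a_{\Acal_1},a_{\Acal_2}$ simply coincide, so the monotonicity constraint is vacuously met. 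You add a third group with values $u_i=N^{i-N-1}$ chosen so that \emph{no} two-cell split has equal $u$-averages; this makes the $a_i$ distinct regardless of repeats in the $c_i$, and forces $a_{\Acal_1}\neq a_{\Acal_2}$ so the within-group constraint is genuinely active. The paper's two-group construction is leaner but tacitly needs distinct EAP inputs to get $|\text{Range}(f)|=n$; your three-group gadget is slightly heavier but cleanly absorbs the repeated values produced by your own EAP reduction.
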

\begin{proof}
    First it is easy to see that, given a partition $\Bcal$, we can check whether the partition is valid and has size $|\Bcal| = 2$ in polynomial time. Therefore, the problem belongs to NP.

    Now, to show the problem is NP-complete, we perform a reduction from a variation of the classical partition problem~\cite{karp1972reducibility}, which we refer to as the equal average partition problem.
    The equal average partition problem seeks to decide whether a set of $n$ positive integers $\Scal = \cbr{s_1, \dots, s_n}$ can be partitioned into two subsets of equal average.
    In Theorem~\ref{thm:avg-np-hard}, we prove that the equal average partition problem is NP-complete, a result which may be of independent interest\footnote{Given the similarity of the equal average partition problem to the classical partition problem, we would have expected to find a proof of NP-completeness elsewhere. However, we failed to find such a proof in previous work.}.
    
    Without loss of generality, we assume $s_i\in[0,1]$ for all $s_i\in\Scal$\footnote{We can always divide every element in $\Scal$ by the largest member of $\Scal$ to ensure elements fall in $[0,1]$.} and, $s_i \leq s_j$ if $i < j$. For every $s_i\in\Scal$, we set $a_{i,z_1}=s_i$, $a_{i,z_2}=1-s_i$, $\rho_i = \frac{1}{n}$, $\rho_{z_1\given i} = \alpha$, $\rho_{z_2\given i} =1-\alpha$ for $\alpha\in(0.5,0.75]$. Note that we will have that $a_i = \alpha s_i + (1-\alpha) (1-s_i) = (2\alpha -1)s_i + (1-\alpha)\in [0,1]$. Note first that for any $\Acal\in\Bcal$
    \begin{align}\label{eq:in-np-proof}
        a_{\Acal,z_1} = \frac{ \sum_{j \in \Acal} \rho_j \rho_{z_1 \given j} a_{j,z_1}}{\sum_{j \in \Acal} \rho_j \rho_{z_1 \given j}}
        =\frac{ \sum_{j \in \Acal} \frac{\alpha}{n} a_{j,z_1}}{\sum_{j \in \Acal} \frac{\alpha}{n}}
        =\frac{ \sum_{j \in \Acal}  a_{j,z_1}}{\abr{\Acal}}
        =1 -\frac{ \sum_{j \in \Acal}  (1 - a_{j,z_1})}{\abr{\Acal}}
        =1 - a_{\Acal,z_2}.
    \end{align}
    , and
    \begin{align}\label{eq:in-np-proof-2}
        a_{\Acal} = \frac{\sum_{j\in\Acal}((2\alpha -1)a_{j,z_1} + 1 - \alpha)}{\abr{\Acal}}
        = (2\alpha -1)\frac{\sum_{j\in\Acal}a_{j,z_1}}{\abr{\Acal}} + 1-\alpha
        = (2\alpha -1)a_{\Acal,z_1} + 1-\alpha
    \end{align}
    Note that, whenever we have that $a_{\Acal,z_1}\leq a_{\Acal',z_1}$, it will also hold that $a_{\Acal}<a_{\Acal'}$ as $2\alpha -1>0$.
    
    Now, assume a valid partition $\Bcal$ with $\abr{\Bcal}=2$ exists and $\Bcal = \cbr{\Acal_1,\Acal_2}$. Without loss of generality, assume $a_{\Acal_1,z_1}\leq a_{\Acal_2,z_1}$. 
    Since $\Bcal$ is a valid partition, we should have also that $a_{\Acal_1,z_2}\leq a_{\Acal_2,z_2}$, furthermore,
    \begin{align}
        a_{\Acal_1,z_1}\leq a_{\Acal_2,z_1} \Rightarrow 1 - a_{\Acal_1,z_1}\geq 1 - a_{\Acal_2,z_1} \Rightarrow a_{\Acal_1,z_2}\geq a_{\Acal_2,z_2}
    \end{align}

    Since it simultaneously holds that $a_{\Acal_1,z_2}\geq a_{\Acal_2,z_2}$ and $a_{\Acal_1,z_2}\leq a_{\Acal_2,z_2}$, a valid partition $\Bcal$ with $|\Bcal| = 2$ exists if and only if $a_{\Acal_1,z_2}= a_{\Acal_2,z_2}$ and hence $a_{\Acal_1,z_1}= a_{\Acal_2,z_1}$. As $a_{\Acal_1,z_1}$ is the average of $s_j$ for $j\in\Acal_1$ and $a_{\Acal_2,z_1}$ is the average of $s_j$ for $j\in\Acal_2$ the partition $\Bcal$ can partition $\Scal$ into two subsets of equal average.

    We now prove that if no valid partition $\Bcal$ with $\abr{\Bcal}=2$ exists, there is no way of partitioning $\Scal$ into two subsets of equal average. For the sake of contradiction, assume $\Scal$ can be partitioned into $\Scal_1$ and $\Scal_2$ with equal averages $\kappa$. Define $\Acal_1 = \cbr{i\given s_i\in\Scal_1}$ and $\Acal_2 = \cbr{j\given s_j\in\Scal_2}$. Now if we build an instance of our problem based on $\Scal$ as described before and set $\Bcal=\cbr{\Acal_1,\Acal_2}$ (clearly we have that $\Bcal$ is a partition of $\R{1}{n}$) we have that $a_{\Acal_1,z_1} = a_{\Acal_2,z_1} = \kappa$, $a_{\Acal_1,z_2} = a_{\Acal_2,z_2} = 1-\kappa$ (refer to Eq.~\ref{eq:in-np-proof}) and $a_{\Acal_1} = a_{\Acal_2} = (2\alpha-1)\kappa + (1-\alpha)$ (refer to Eq.~\ref{eq:in-np-proof-2}). As a result, we have that $\Bcal$ is a valid partition of size $2$ which is a contradiction.
    This concludes the proof.
\end{proof}

\begin{theorem}\label{thm:avg-np-hard}
    Given a set of $n$ positive integers, the problem of deciding whether it can be partitioned into two non-empty subsets of equal average is NP-complete.
\end{theorem}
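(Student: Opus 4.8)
The plan is to show the problem lies in NP and is NP-hard, the latter via a polynomial-time reduction from the partition problem~\cite{karp1972reducibility}. Membership in NP is immediate: a bipartition $\Scal = \Scal_1 \sqcup \Scal_2$ into two nonempty parts is a polynomial-size certificate, and equality of the two averages can be checked in polynomial time by verifying $\mathrm{sum}(\Scal_1)\cdot|\Scal_2| = \mathrm{sum}(\Scal_2)\cdot|\Scal_1|$. Note also that asking for an equal-average bipartition is the same as asking for a single nonempty proper subset $\Scal_1 \subsetneq \Scal$ whose average equals $\mathrm{avg}(\Scal)$, since then the complement has the same average.

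The reduction rests on one simple observation: the equal-average property is invariant under adding a common constant to every element, because a subset of size $k$ satisfies $\mathrm{sum}(\Scal_1) = k\cdot\mathrm{avg}(\Scal)$ before the shift if and only if it does after. Hence it is convenient to first reason with integers that may be negative but sum to $0$, asking only for a nonempty proper subset summing to $0$, and then translate back by shifting. Concretely, given a partition instance $a_1,\dots,a_n$ with $\sigma = \sum_i a_i$ (we may assume $\sigma$ is even, as otherwise it is a trivial no-instance), I would form the multiset $\{a_1,\dots,a_n,-\sigma/2,-\sigma/2\}$, which sums to $0$, add a large polynomially-bounded constant $D$ (say $D=\sigma$) to every element, and output the positive-integer instance $\Scal = \{a_1+D,\dots,a_n+D,\,D-\sigma/2,\,D-\sigma/2\}$ with $|\Scal| = n+2$.

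It then remains to argue that $\Scal$ admits an equal-average bipartition if and only if $\{a_i\}$ can be split into two parts of equal sum. By shift-invariance this is equivalent to: the zero-summing multiset $\{a_1,\dots,a_n,-\sigma/2,-\sigma/2\}$ contains a nonempty proper subset summing to $0$ iff the partition instance is a yes-instance. The nontrivial direction --- that such a zero-sum subset forces some of the $a_i$'s to sum to $\sigma/2$ --- I would prove by a case analysis on how many of the two copies of $-\sigma/2$ the subset contains: with zero copies it would be a nonempty set of nonnegative integers summing to $0$, impossible; with two copies, the remaining elements are $a_i$'s summing to $\sigma$, forcing all of them in, so the subset is the whole set and hence not proper; with exactly one copy, the included $a_i$'s must sum to exactly $\sigma/2$, a partition solution (automatically a nonempty proper subset). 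The converse direction is the easy inclusion. Since the partition problem is NP-complete, this yields NP-hardness.

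The step I expect to be the crux is designing the gadget so that \emph{only} genuinely balanced splits survive. A naive reduction that merely shifts a partition instance and hopes a subset of sum $\sigma/2$ becomes an equal-average part fails, because equal-average bipartitions may have parts of unequal cardinality: a sum-$\sigma/2$ subset need not have average $\mathrm{avg}(\Scal)$, and, conversely, spurious balanced splits of other cardinalities could appear. The two dummy elements equal to $-\sigma/2$ are exactly what ties cardinalities and sums together, and checking in the case analysis that no spurious zero-sum subset exists is where the real work lies. A minor caveat is that the construction formally produces a multiset; if distinct elements are required this can be arranged by a routine perturbation and does not affect the argument.
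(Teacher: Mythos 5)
Your proof is correct, but it takes a genuinely different and arguably cleaner route than the paper's. You reduce from the standard partition problem (equal sums, no cardinality constraint), while the paper reduces from the equal-cardinality partition problem ($|\Scal_1|=|\Scal_2|$ and equal sums). The key idea you bring in --- that the equal-average property is invariant under adding a common constant to every element --- lets you recenter the instance to sum zero, after which ``equal-average bipartition'' collapses to ``nonempty proper subset summing to $0$.'' This sidesteps the arithmetic the paper does: the paper first computes $\mathrm{avg}(\Scal') = (2n+1)\sigma/(n+2)$, then uses a chain of inequalities to rule out both copies of $n\sigma$ landing in the same part, and finally another contradiction argument to force $|\Scal_1|=|\Scal_2|$ and $\sum\Scal_1=\sum\Scal_2$. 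Your three-way case analysis (zero, one, or two copies of $-\sigma/2$ in the candidate subset) is shorter and more transparent, and it automatically ties cardinality and sum together without invoking the equal-cardinality variant at all. Both constructions pad with two copies of a dummy value, and both are really working over multisets (the paper's $\Scal' = \Scal \cup \{n\sigma, n\sigma\}$ has the same duplicate-element caveat you flag), so that point is not a disadvantage of yours. You also correctly rediscover, in passing, the paper's Lemma~\ref{lem:avg-eq-partition} --- that an equal-average bipartition is the same as a nonempty proper subset whose average equals the global average --- which the paper states and proves separately. Net, your reduction is from a more basic problem, uses a more elementary invariance argument, and avoids inequality bookkeeping; the paper's version is self-contained in a different way but does more computation for the same conclusion.
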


\begin{proof}
    First it is easy to see that, given two subsets, we can evaluate in polynomial time their averages and check whether they are equal or not.
    Therefore, the problem belongs to NP.

    In the remainder of the proof, we will perform a reduction from the equal cardinality partition problem, which is known to be NP-complete, to the equal average partition problem.
    In the original problem, we are given a set of $n$ positive integers $\Scal$, where $n$ is an even number.
    The objective is to decide whether there exist two subsets $\Scal_1, \Scal_2 \subseteq \Scal$ such that $\Scal_1 \cup \Scal_2 = \Scal$ and $\Scal_1 \cap \Scal_2 = \emptyset$, with $|\Scal_1|=|\Scal_2|$ and $\sum_{i \in \Scal_1} i = \sum_{j \in \Scal_2} j$.

    Now, we will transform an arbitrary instance of that problem into an instance of the equal average partition problem.
    Let the set of integers be $\Scal' = \Scal \cup \{n\sigma, n\sigma\}$, where $\sigma = \sum_{k\in\Scal} k$.
    It is easy to see that the average of $\Scal'$ is equal to $\frac{(2n+1)\sigma}{n+2}$.

    We will start by showing that, if we can decide positively about that instance of the equal average partition problem, we can also decide positively about the original instance of the equal cardinality partition problem.
    Assume there exists a partition of $\Scal'$ into two sets $\Scal'_1$, $\Scal'_2$, with equal averages.
    As an intermediate result, we will show that the two copies of the number $n\sigma$ cannot belong to the same set $\Scal'_1$ or $\Scal'_2$.
    For the sake of contradiction, and without loss of generality, assume that both copies belong to $\Scal'_1$.

    In the case where $\Scal'_1 = \{n\sigma, n\sigma\}$, it holds that $\frac{\sum_{i\in\Scal'_1} i}{|\Scal'_1|} = n\sigma$ and $\frac{\sum_{i\in\Scal'_2} j}{|\Scal'_2|} = \frac{\sigma}{n}$, which is a contradiction, since the two quantities cannot be equal because of $n\geq 2$.
    In cases where $\Scal'_1$ contains at least one more element, since $\Scal'_2 \neq \emptyset$, we get that $\frac{\sum_{i\in\Scal'_1} i}{|\Scal'_1|} = \frac{2n\sigma + \kappa}{2+l}$, with $0 < \kappa < \sigma$ and $1\leq l \leq n-1$, and $\frac{\sum_{j\in\Scal'_2} j}{|\Scal'_2|} = \frac{\sigma - \kappa}{n-l}$.
    It follows that
    \begin{equation*}
        \frac{1}{n-l} \leq 1 \Rightarrow 
        \frac{\sigma - \kappa}{n-l} \leq \sigma - \kappa \Rightarrow
        \frac{\sum_{j\in\Scal'_2} j}{|\Scal'_2|} < \sigma \stackrel{(*)}{\Rightarrow}
        \frac{\sum_{j\in\Scal'_2} j}{|\Scal'_2|} < \frac{(2n+1)\sigma}{n+2} \Rightarrow
        \frac{\sum_{j\in\Scal'_2} j}{|\Scal'_2|} < \frac{\sum_{k\in\Scal'} k}{|\Scal'|},
    \end{equation*}
    where $(*)$ holds because $n>1$.
    According to Lemma~\ref{lem:avg-eq-partition}, the last inequality leads to a contradiction.
    With that, we can conclude that one copy of $n\sigma$ belongs to $\Scal'_1$ and the other one belongs to $\Scal'_2$.

    Let $\Scal_1$, $\Scal_2$ be such that $\Scal'_1 = \{n\sigma\} \cup \Scal_1$ and $\Scal'_2 = \{n\sigma\} \cup \Scal_2$.
    We will now show that $\Scal_1$ and $\Scal_2$ are a solution to the original instance of the equal cardinality partition problem, \ie, $|\Scal_1|=|\Scal_2|$ and $\sum_{i\in\Scal_1} i = \sum_{j\in\Scal_2} j$.
    It is trivial to see that $\Scal_1, \Scal_2$ have to be non-empty, otherwise the averages of $\Scal'_1$ and $\Scal'_2$ would differ.
    Since $\Scal'_1$, $\Scal'_2$ are a partition of $\Scal'$ with equal averages and because of Lemma~\ref{lem:avg-eq-partition}, we know that
    \begin{equation}\label{eq:lem_helper}
        \frac{n\sigma + \sum_{i\in\Scal_1} i}{1 + |\Scal_1|} = \frac{n\sigma + \sum_{j\in\Scal_2} j}{1 + |\Scal_2|} = \frac{(2n+1)\sigma}{n+2}.
    \end{equation}
    For the sake of contradiction, assume that either $|\Scal_1|\neq|\Scal_2|$ or $\sum_{i\in\Scal_1} i \neq \sum_{j\in\Scal_2} j$.
    For brevity, we will focus only on the two following cases, as any other case leads easily to a contradiction:
    \begin{itemize}
        \item $|\Scal_1| < |\Scal_2|$ and $\sum_{i\in\Scal_1} i < \sum_{j\in\Scal_2} j$:
        Since $\Scal_1$, $\Scal_2$ are such that $\Scal_1 \cup \Scal_2 = \Scal$, it holds that
        \begin{align*}
            & \sum_{j\in\Scal_2} j - \sum_{i\in\Scal_1} i < \sigma \stackrel{(*)}{\Rightarrow} 
            \frac{(2n+1)\sigma}{n+2}(1+|\Scal_2|) -n\sigma - \frac{(2n+1)\sigma}{n+2}(1+|\Scal_1|) + n\sigma < \sigma \Rightarrow \\
            & \frac{(2n+1)\sigma}{n+2}(|\Scal_2| - |\Scal_1|) < \sigma \Rightarrow
            (2n+1)(|\Scal_2| - |\Scal_1|) < (n+2) \stackrel{(**)}{\Rightarrow}
            2n+1 < n+2 \Rightarrow
            n < 1,
        \end{align*}
        where $(*)$ follows from Equation~\ref{eq:lem_helper}, and $(**)$ holds because $|\Scal_2|-|\Scal_1|\geq 1$.
        The last inequality is clearly a contradiction.

        \item $|\Scal_1| > |\Scal_2|$ and $\sum_{i\in\Scal_1} i > \sum_{j\in\Scal_2} j$: The proof is the symmetric version of the proof in the previous case.
    \end{itemize}
    Therefore, we can conclude that $\Scal_1$ and $\Scal_2$ are a solution to the original problem, \ie, they are a partition of $\Scal$ with equal cardinality and equal sums.

    Lastly, we will show that, if there is no partition of $\Scal'$ with equal averages, there can be no equal cardinality partition of $\Scal$ with equal sums.
    For the sake of contradiction, assume there exist $\Scal_1$, $\Scal_2$ with $|\Scal_1| =  |\Scal_2|$ and $\sum_{i\in\Scal_1} i = \sum_{j\in\Scal_2} j$.
    Then, let $\Scal'_1 = \{n\sigma\} \cup \Scal_1$ and $\Scal'_2 = \{n\sigma\} \cup \Scal_2$.
    It is easy to see that 
    \begin{equation}
        \frac{\sum_{i\in\Scal'_1} i}{|\Scal'_1|} = \frac{n\sigma + \sum_{i\in\Scal_1} i}{1 + |\Scal_1|} = \frac{n\sigma + \sum_{j\in\Scal_2} j}{1 + |\Scal_2|} = \frac{\sum_{i\in\Scal'_2} i}{|\Scal'_2|},
    \end{equation}
    which is a contradiction, since it means that $\Scal'_1$ and $\Scal'_2$ are a partition of $\Scal'$ with equal averages.

    Following the above procedure, we can decide whether the original instance of the equal-cardinality problem has a solution or not.
    As a consequence, the problem of deciding whether a set of positive integers can be partitioned into two subsets of equal average is NP-complete.
    
\end{proof}

\begin{lemma}\label{lem:avg-eq-partition}
    A set of integers $\Scal$ can be partitioned into two non-empty sets $\Scal_1$, $\Scal_2$ with equal averages $\frac{\sum_{i\in\Scal_1} i}{|\Scal_1|}=\frac{\sum_{j\in\Scal_2} j}{|\Scal_2|}$, iff $\frac{\sum_{i\in\Scal_1} i}{|\Scal_1|}=\frac{\sum_{k\in\Scal} k}{|\Scal|}$, with $|\Scal_1|\subset |\Scal|$.
\end{lemma}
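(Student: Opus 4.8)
The plan is to reduce the stated equivalence to a one-line algebraic identity about weighted averages. First I would fix notation: set $n_1 = |\Scal_1|$, $n_2 = |\Scal_2|$, $S_1 = \sum_{i\in\Scal_1} i$ and $S_2 = \sum_{j\in\Scal_2} j$. Since $\Scal_1$ and $\Scal_2$ are disjoint with union $\Scal$, this gives $|\Scal| = n_1 + n_2$ and $\sum_{k\in\Scal} k = S_1 + S_2$. Because $\Scal_1$ is a proper, nonempty subset of $\Scal$, both $\Scal_1$ and $\Scal_2$ are nonempty, so $n_1, n_2 \geq 1$ and every denominator appearing below is nonzero.

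The key step is the observation that the global average is a strict convex combination of the two block averages,
\[
\frac{S_1+S_2}{n_1+n_2} \;=\; \frac{n_1}{n_1+n_2}\cdot\frac{S_1}{n_1} \;+\; \frac{n_2}{n_1+n_2}\cdot\frac{S_2}{n_2},
\]
with strictly positive coefficients that sum to $1$. A strict convex combination of two real numbers equals one of them exactly when the two numbers coincide, so $\frac{S_1}{n_1} = \frac{S_1+S_2}{n_1+n_2}$ holds if and only if $\frac{S_1}{n_1} = \frac{S_2}{n_2}$. In the write-up I would actually present this via plain cross-multiplication rather than invoking convexity:
\[
\frac{S_1}{n_1} = \frac{S_1+S_2}{n_1+n_2}
\;\Longleftrightarrow\; S_1(n_1+n_2) = n_1(S_1+S_2)
\;\Longleftrightarrow\; S_1 n_2 = n_1 S_2
\;\Longleftrightarrow\; \frac{S_1}{n_1} = \frac{S_2}{n_2}.
\]
Reading this chain of equivalences in the two directions delivers both implications of the lemma: if a partition of $\Scal$ into nonempty $\Scal_1,\Scal_2$ has equal block averages, that common value equals the average of $\Scal$; and conversely, if the average of $\Scal_1$ equals the average of $\Scal$, then the average of $\Scal_2$ must equal it as well.

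I do not expect any genuine obstacle, since the lemma is purely a bookkeeping fact about averages. The only point that requires a moment's care is ensuring that both parts of the partition are nonempty, so that $n_1$ and $n_2$ are legitimate denominators and the convex-combination weights are strictly positive; this is exactly what the hypothesis that $\Scal_1$ is a proper nonempty subset of $\Scal$ guarantees, and it is the reason the statement is restricted to such subsets.
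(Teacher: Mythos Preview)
Your proof is correct and essentially matches the paper's: both directions are obtained by the same cross-multiplication identity $S_1 n_2 = n_1 S_2$, which the paper writes out separately for each implication while you package it as a single biconditional chain. Your convex-combination remark is a nice extra gloss but not a different method.
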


\begin{proof}
    First, assume there is such a partition of $\Scal$ into $\Scal_1$, $\Scal_2$, with equal averages.
    It holds that
    \begin{align*}
        \frac{\sum_{i\in \Scal_1} i }{|\Scal_1|} = \frac{\sum_{k\in \Scal} k - \sum_{i\in \Scal_1} i}{|\Scal| - |\Scal_1|} \Rightarrow 
        \left(|\Scal| - |\Scal_1|\right) \sum_{i\in \Scal_1} i = |\Scal_1| \left(\sum_{k\in \Scal} k - \sum_{i\in \Scal_1} i \right) & \Rightarrow 
        |\Scal|\sum_{i\in \Scal_1} i = |\Scal_1| \sum_{k\in \Scal} k  \\
        & \Rightarrow \frac{\sum_{i\in\Scal_1} i}{|\Scal_1|} = \frac{\sum_{k\in\Scal} k}{|\Scal|},
    \end{align*}
    where $\Scal_1\subset \Scal$ because $\Scal_2 \neq \emptyset$.

Now, assume there exists a set $\Scal_1 \subset \Scal$, such that $\frac{\sum_{i\in\Scal_1} i}{|\Scal_1|}=\frac{\sum_{k\in\Scal} k}{|\Scal|}$ and let $\Scal_2 = \Scal \setminus \Scal_1$.
It is easy to see that
\begin{equation*}
        \frac{\sum_{j\in \Scal_2} j }{|\Scal_2|} =
        \frac{\sum_{k\in \Scal} k - \sum_{i\in \Scal_1} i}{|\Scal| - |\Scal_1|} =
        \frac{\sum_{k\in \Scal} k - \frac{|\Scal_1|}{|\Scal|}\sum_{k\in \Scal} k}{|\Scal|\left(1 - \frac{|\Scal_1|}{|\Scal|}\right)} = 
        \frac{\sum_{k\in \Scal} k }{|\Scal|},
\end{equation*}
and therefore, the sets $\Scal_1$, $\Scal_2$ consist a partition of $\Scal$ with equal averages.
\end{proof}

\subsection{Proof of Proposition~\ref{prop:monotone-contiguous}}
We first prove the sufficient condition, \ie, we prove that, if $f_\Bcal$ is monotone with respect to $f$, then $\Bcal$ is a contiguous partition on $\{1, \ldots n\}$. 
The proof is by contradiction. 
Assume $\Bcal$ is not a contiguous partition, \ie, there exists $x_1,x_2,x_3\in\Xcal$ such that $i(x_1)<i(x_2)<i(x_3)$ and $i(x_1)\sim_\Bcal i(x_3)$ while $i(x_1)\not\sim_\Bcal i(x_2)$. 
If $a_{[i(x_1)]}>a_{[i(x_2)]}$, then $f_\Bcal(x_1)>f_\Bcal(x_2)$, however, since $f(x_1)<f(x_2)$, this leads to a contradiction with the monotonicity assumption. 
On the other hand, if $a_{[i(x_1)]}<a_{[i(x_2)]}$, then $f_\Bcal(x_3)<f_\Bcal(x_2)$ since $i(x_1)\sim_\Bcal i(x_3)$ and thus $a_{[i(x_3)]}<a_{[i(x_2)]}$, however, this leads again to a contradiction with the monotonicity assumption. This proves that $\Bcal$ must be a contiguous partition.
%

Next, we prove the necessary condition, \ie, we prove that, if $\Bcal$ is a contiguous partition on $\{1, \ldots n\}$, then $f_\Bcal$ is monotone with respect to $f$. 
For any $x_1, x_2 \in \Xcal$ such that $f(x_1) < f(x_2)$, we have that:
\begin{align*}
f_\Bcal(x_1) = a_{[i(x_1)]} = \frac{ \sum_{l \in 
[i(x_1)]} 
a_l \rho_{l}}{\sum_{l\in[i(x_1)]}\rho_{l}} 
\leq \frac{ \sum_{l \in [i(x_2)]} a_l
\rho_{l}}{\sum_{l\in[i(x_2)]}\rho_{l}} = a_{[i(x_2)]}=f_\Bcal(x_2).
\end{align*}
where the inequality is due to Lemma~\ref{lem:first-monotone-contiguous} below and the fact that the weighted average of a set of numbers is lower and upper bounded by the smallest and largest element of the set respectively.
%

\begin{lemma}\label{lem:first-monotone-contiguous}
Let $f$ be a classifier with $\Rf = \cbr{a_1, \ldots, a_n}$, $\Bcal$ be a contiguous partition on $\{1, \ldots, n\}$ and $x_1, x_2 \in \Xcal$. If $i(x_1) < i(x_2)$ and $i(x_1) \not \sim_{\Bcal} i(x_2)$, then, for every $k \in [i(x_1)]$ and $k' \in [i(x_2)]$, it holds that $k < k'$.
\end{lemma}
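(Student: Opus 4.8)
The plan is to argue by contradiction, exploiting the definition of a contiguous partition directly. Suppose the conclusion fails, so there exist $k \in [i(x_1)]$ and $k' \in [i(x_2)]$ with $k' \leq k$. Since $i(x_1) \not\sim_\Bcal i(x_2)$ and $k \sim_\Bcal i(x_1)$, $k' \sim_\Bcal i(x_2)$, we cannot have $k = k'$ (that would force $i(x_1) \sim_\Bcal i(x_2)$ by transitivity of $\sim_\Bcal$), so in fact $k' < k$. Combining this with the hypothesis $i(x_1) < i(x_2)$, I would like to produce a triple of indices that are strictly ordered but whose two outer members lie in the same cell while the middle one does not, contradicting contiguity.

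The natural triple to consider is $k' < k < i(x_2)$ (which holds because $k \in [i(x_1)]$ gives $k \leq i(x_1) < i(x_2)$ after we also note $k$ cannot equal $i(x_2)$), together with the fact that $k' \sim_\Bcal i(x_2)$. By the definition of a contiguous partition applied to $k' < k < i(x_2)$ with $k' \sim_\Bcal i(x_2)$, we would get $k \sim_\Bcal i(x_2)$; but $k \sim_\Bcal i(x_1)$, so transitivity yields $i(x_1) \sim_\Bcal i(x_2)$, contradicting the hypothesis. I should double-check the one subtlety: that $k < i(x_2)$ strictly. We have $k \leq i(x_1) < i(x_2)$, so this is immediate. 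Similarly $k' \geq$ nothing problematic — we only used $k' < k$.

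The only genuine obstacle is making sure every inequality is \emph{strict} where contiguity requires it, and that the index used as the "middle" element of the contiguity condition is indeed strictly between the two outer ones; the argument collapses if, say, $k$ could equal $i(x_2)$, which is why the step $k \leq i(x_1) < i(x_2)$ is essential. Once the strictness bookkeeping is in place, the contradiction with $i(x_1) \not\sim_\Bcal i(x_2)$ is immediate, and the lemma follows.
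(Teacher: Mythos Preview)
Your overall strategy (contradiction via the contiguity property) matches the paper's, but there is a genuine gap in the step where you derive $k < i(x_2)$. You write that ``$k \in [i(x_1)]$ gives $k \leq i(x_1) < i(x_2)$,'' but $k \in [i(x_1)]$ only says that $k$ lies in the same cell as $i(x_1)$; it does \emph{not} imply $k \leq i(x_1)$. For instance, if $[i(x_1)] = \{3,4,5\}$ with $i(x_1)=3$, one could have $k=5$. So the chain $k \leq i(x_1) < i(x_2)$ is unjustified, and with it the claim that $k' < k < i(x_2)$ forms the required ordered triple.

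The fix is a short case split on the position of $k$ relative to $i(x_2)$. If $k < i(x_2)$, your triple $k' < k < i(x_2)$ with $k' \sim_\Bcal i(x_2)$ works as you wrote. If instead $k > i(x_2)$ (equality is ruled out since $k \not\sim_\Bcal i(x_2)$), then $i(x_1) < i(x_2) < k$ with $i(x_1) \sim_\Bcal k$ already forces $i(x_2) \sim_\Bcal i(x_1)$ by contiguity, a contradiction. The paper handles this by passing to the extremal elements $l = \max [i(x_1)]$ and $s = \min [i(x_2)]$ and then splitting into cases on the position of $s$ relative to $i(x_1)$, which amounts to the same bookkeeping. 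Once you add this missing case, your argument is complete and essentially equivalent to the paper's.
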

\begin{proof}
%
%
To prove the lemma, we just need to prove that the largest index in $[i(x_1)]$ is smaller than the smallest index in $[i(x_2)]$. 
The proof is by contradiction. 
Let $l = \max \{ k \given k \in [i(x_1)] \}$ and $s = \min\{ k \given k \in [i(x_2)]\}$ and assume that $l > s$. 
Then, it cannot simultaneously hold that $i(x_1) = l$ and $i(x_2) = s$ since we have that $i(x_1) < i(x_2)$. 
Assume first that $i(x_1) \neq l$, and 
take $x_3, x_4 \in \Xcal$ such that $i(x_3) = s$ and $i(x_4) = l$. 
%
If $i(x_3) < i(x_1)$, then it holds that $i(x_3) < i(x_1) < i(x_2)$, however, since $i(x_2) \sim_\Bcal i(x_3)$ and $i(x_1) \not \sim_\Bcal i(x_2)$, this leads to a contradiction with the assumption that $\Bcal$ is contiguous.
If $i(x_3) > i(x_1)$, then it holds that $i(x_1) < i(x_3) < i(x_4)$, however, since $i(x_1) \sim_\Bcal i(x_4)$ while $i(x_3) \not \sim_\Bcal i(x_4)$, this also leads to a contradiction with the assumption that $\Bcal$ is contiguous. 
If one assumes instead that $i(x_1) = l$, a similar reasoning using $i(x_2)$ and $i(x_4)$ leads to a contradiction too. 
This completes the proof. 
\end{proof}

\subsection{Proof of Proposition~\ref{prop:pav-within-group-monotonicity}}
We prove by contradiction. Assume there exist violations of within-group monotonicity. We first define the nearest violating triplet, $(l,r,z)$, as:
\begin{align*}
    (l,r,z) = \argmin_{\cbr{(i,j,z)\given i,j\in\Rbf, i<j, z\in\Zcal}} \abr{j-i} \text{ such that } a_{\Acal_i,z}>a_{\Acal_{j},z}
\end{align*}
If $r = l+1$ then it contradicts with the assumption that no monotonicity violations occur between adjacent cells. If $r\neq l+1$, there exists $i\in\Rbf$ such that $l\leq i\leq r$ and it does not happen simultaneously that $i=l$ and $i=r$. Then it should hold that $a_{\Acal_l,z}\leq a_{\Acal_i,z}\leq a_{\Acal_r,z}$ since otherwise either of $(l,i,z)$ or $(i,r,z)$ is the nearest violating triplet. In this case however, $a_{\Acal_l,z}\leq a_{\Acal_r,z}$ which is a contradiction with it being a violating triplet. As a result, no such triplet can exist and $f_\Bcal$ is within-group monotone.

\subsection{Proof of Lack of Local Optimality of the Pool Adjacent Violators (PAV) Algorithm} \label{app:pav-suboptimality}
Let $\Rf = \{a_1, a_2, a_3\}$, $\Zcal = \cbr{z_1,z_2}$ and 
$\rho_i\rho_{z\given i}=\frac{1}{6}$ for all $i \in \{1, 2, 3\}$ and 
$z\in\Zcal$.
Further, let $a_{1,z_2}=a_{2,z_1}=a_{3,z_2}=\alpha$, 
$a_{1,z_1}=2\alpha$, $a_{2,z_2}=3\alpha$ and $a_{3,z_1}=4\alpha$, 
where $\alpha\in[0,0.25]$. 
First, we note that, by construction, it holds that $a_1 = \frac{3}{2}\alpha<a_2=2\alpha<a_3=\frac{5}{2}\alpha$.
Now, since $a_{1,z_1}>a_{2,z_1}$, Algorithm~\ref{alg:pav} first 
merges these two bins, 
then, since $a_{\cbr{1,2},z_2}>a_{\cbr{3},z_2}$, it merges 
all the three bins together and finally it terminates, 
returning $\Bcal = \cbr{\cbr{1,2,3}}$.
%
%
However, since it holds that $a_{1,z_1}<a_{\cbr{2,3},z_1}$ and $a_{1,z_2}<a_{\cbr{2,3},z_2}$, it clearly holds that the partition $\Bcal' = \cbr{\cbr{1},\cbr{2,3}}$ induces a classifier $f_{\Bcal'}$ that is within-group monotone and it readily follows that $f_{\Bcal'}$ dominates $f_{\Bcal}$.

\subsection{Proof of Lemma~\ref{lem:recursive-approach}}
We first prove the sufficient condition, \ie, we prove, for any $\Bcal\in\Bscr_{l,r}$, $\exists k<l$ such that $\Bcal\setminus\cbr{\R{l}{r}} \in\Bscr_{k,l-1}$ and $a_{\cbr{k, \ldots, l-1},z}\leq a_{\cbr{l, \ldots, r},z}$ $\forall z \in \Zcal$.
Let $\Bcal' = \Bcal\setminus\cbr{\cbr{l, \ldots, r}}$. 
To this end, we start by proving by contradiction that $\exists k<l$ such that $\Bcal'\in\Bscr_{k,l-1}$. 
Since the partition $\Bcal$ covers $\R{1}{r}$, we have that the last cell of $\Bcal'$ contains bin $l-1$. 
Assume $\Bcal' \not\in \cup_{k=1}^{l-1}\Bscr_{k,l-1}$. 
Then, there must exist $\Acal,\Acal'\in\Bcal'$ and $z\in\Zcal$ such that $a_{\Acal}<a_{\Acal'}$ and $a_{\Acal,z}>a_{\Acal,z'}$. 
However, since $\Bcal' \subset \Bcal$, it also holds that $\Acal, \Acal' \in \Bcal$ and $f_{\Bcal}$ cannot be within-group monotone on $\cup_{i\leq r}\Xcal_i$, leading to a contradiction. 
Therefore, it must hold that $\Bcal'\in \cup_{k=1}^{l-1}\Bscr_{k,l-1}$. 
Now, to prove that, if $\Bcal'\in \cup_{k=1}^{l-1}\Bscr_{k,l-1}$ and $\Bcal \in \Bscr_{l,r}$, then it must hold that $a_{\cbr{k, \ldots, l-1},z}\leq a_{\cbr{l, \ldots, r},z}$ $\forall z \in \Zcal$, we resort to Lemma~\ref{lem:lem-recursive-approach}.

We next prove the necessary condition, \ie, we prove that, given any $\Bcal \in \Bscr_r$, if $\exists k<l$ such that $\Bcal\setminus\cbr{\R{l}{r}} \in\Bscr_{k,l-1}$ and $a_{\cbr{k, \ldots, l-1},z}\leq a_{\cbr{l, \ldots, r},z}$ $\forall z \in \Zcal$ then $\Bcal \in \Bscr_{l,r}$. 
Let $\Bcal' = \Bcal\setminus\cbr{\cbr{l, \ldots, r}}$. Since $\Bcal'\in\Bscr_{k,l-1}$, we know that no violations of within-group monotonicity occurs on $\cup_{i\leq l-1}\Xcal_i$. 
Now, we prove that there are no violations of within-group monotonicity between $\R{l}{r}$ and any $\Acal \in \Bcal'$. 
By assumption, we know that there are not violations of within-group monotonicity between $\R{l}{r}$ and $\R{k}{l-1}$.
Then, we prove by contradiction that there are not violations between $\R{l}{r}$ and any $\Acal \in \Bcal' \setminus \cbr{\R{k}{l-1}}$. 
For any $\Acal \in \Bcal' \setminus \cbr{\R{k}{l-1}}$, it follows from Proposition~\ref{prop:monotone-contiguous} that $a_\Acal<a_{\R{k}{l-1}}$ and $a_{\Acal}<a_{\R{l}{r}}$. 
Now, assume there exists $\Acal \in \Bcal'\setminus \cbr{\R{k}{l-1}}$, $z\in\Zcal$ such that $a_{\Acal,z}>a_{\R{l}{r},z}$. 
Since, by assumption, we have that $a_{\R{k}{l-1},z}\leq a_{\R{l}{r},z}$, it should hold that $a_{\R{k}{l-1},z}<a_{\Acal,z}$, which contradicts with the assumption that $\Bcal'\in\Bscr_{k,l-1}$, leading
to a contradiction. 
%
This proves that $\Bcal\in\Bscr_{l,r}$.

%
\begin{lemma}\label{lem:lem-recursive-approach}
Let $\Bcal = \Bcal'\cup\cbr{\R{l}{r}} \in \Bcal_{l,r}$ and $\Bcal'\in\Bscr_{k,l-1}$ with $k<l$. Then, it must hold that $a_{\cbr{k, \ldots, l-1},z}\leq a_{\cbr{l, \ldots, r},z}$ $\forall z \in \Zcal$.
\end{lemma}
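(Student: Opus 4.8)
Proof proposal for Lemma~\ref{lem:lem-recursive-approach}.

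The plan is to reduce the claim to a single application of the within-group monotonicity of $f_{\Bcal}$ to its last two cells. First I would record the structural facts: since $\Bcal' \in \Bscr_{k,l-1}$, the last cell of $\Bcal'$ is $\R{k}{l-1}$, and since $\Bcal = \Bcal' \cup \cbr{\R{l}{r}}$, both $\R{k}{l-1}$ and $\R{l}{r}$ are cells of $\Bcal$ (and $\Bcal$ is a contiguous partition of $\R{1}{r}$). Because $\Bcal \in \Bscr_{l,r}$, the classifier $f_{\Bcal}$ is within-group monotone on $\cup_{i \leq r}\Xcal_i$, which — exactly as in the derivation of the calibration identities $\Pr(Y=1\given f_{\Bcal}(X)=a_{\Acal},Z=z)=a_{\Acal,z}$ — is equivalent to saying that $a_{\Acal,z} \leq a_{\Acal',z}$ for every pair of cells $\Acal,\Acal'$ of $\Bcal$ with $a_{\Acal} < a_{\Acal'}$ and every group $z$ with $\Pr(Z=z\given f_{\Bcal}(X)=a_{\Acal})>0$ and $\Pr(Z=z\given f_{\Bcal}(X)=a_{\Acal'})>0$. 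So it suffices to verify the hypothesis $a_{\R{k}{l-1}} < a_{\R{l}{r}}$, after which the desired inequality follows immediately.

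To verify $a_{\R{k}{l-1}} < a_{\R{l}{r}}$, I would use that each $a_{\Acal}$ is a convex combination of $\cbr{a_j : j \in \Acal}$ with strictly positive weights $\rho_j / \sum_{j' \in \Acal}\rho_{j'}$ (all $\rho_j>0$ since $a_j \in \Rf$), together with the standing assumption $a_1 < \cdots < a_n$. This gives $a_{\R{k}{l-1}} \leq a_{l-1} < a_{l} \leq a_{\R{l}{r}}$, so the inequality is in fact strict and the within-group monotonicity constraint above genuinely applies to this pair of cells.

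Finally I would conclude as follows. Fix $z \in \Zcal$. If $\Pr(Z=z\given f_{\Bcal}(X)=a_{\R{k}{l-1}})=0$ or $\Pr(Z=z\given f_{\Bcal}(X)=a_{\R{l}{r}})=0$, the corresponding quantity $a_{\R{k}{l-1},z}$ resp.\ $a_{\R{l}{r},z}$ is not meaningfully defined and there is nothing to prove for that $z$; otherwise, applying Definition~\ref{def:within-group-monotonicity} to $f_{\Bcal}$ with $a = a_{\R{k}{l-1}} < b = a_{\R{l}{r}}$ yields $a_{\R{k}{l-1},z} = \Pr(Y=1\given f_{\Bcal}(X)=a_{\R{k}{l-1}},Z=z) \leq \Pr(Y=1\given f_{\Bcal}(X)=a_{\R{l}{r}},Z=z) = a_{\R{l}{r},z}$, which is the claim. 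I do not expect any substantial obstacle here: the only points requiring a little care are making the passage from ``$f_{\Bcal}$ is within-group monotone on $\cup_{i\leq r}\Xcal_i$'' to the cell-wise inequality explicit, and noting that the strict ordering of the $a_i$'s forces $a_{\R{k}{l-1}} < a_{\R{l}{r}}$ strictly, so that the constraint is not vacuous.
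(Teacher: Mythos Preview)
Your proposal is correct and follows essentially the same approach as the paper: identify $\R{k}{l-1}$ and $\R{l}{r}$ as the last two cells of $\Bcal$, establish $a_{\R{k}{l-1}} < a_{\R{l}{r}}$, and then apply within-group monotonicity of $f_{\Bcal}$. The only minor difference is that the paper obtains the strict inequality by invoking Proposition~\ref{prop:monotone-contiguous}, whereas you argue it directly via the convex-combination bounds $a_{\R{k}{l-1}} \leq a_{l-1} < a_l \leq a_{\R{l}{r}}$; both are fine.
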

\begin{proof}
%
Since $\Bcal'\in\Bscr_{k,l-1}$, we know that $\R{k}{l-1} \in \Bcal'$.
Moreover, it follows from Proposition~\ref{prop:monotone-contiguous} that $f_{\Bcal}$ is monotone with respect to $f$ and hence, since $k<l$ and $k \not \sim_{\Bcal}l$, we have that $a_{\R{k}{l-1}}< a_{\R{l}{r}}$. 
Further, since $\Bcal \in \Bcal_{l, r}$, we have that, for every $\Acal,\Acal'\in\Bcal$ such that $a_\Acal<a_{\Acal'}$, it holds that $a_{\Acal,z}\leq a_{\Acal',z}$ for all $z\in\Zcal$. 
Thus, it also holds that $a_{\R{k}{l-1},z}\leq a_{\R{l}{r},z}$ 
for all $z\in\Zcal$.
\end{proof}

\subsection{Proof of Theorem~\ref{thm:optimal}}
To prove that Algorithm~\ref{alg:optimal} returns the optimal partition $\Bcal^{*}$, we just
need to prove that, for each $l, r \in \{1, \ldots, n\}$, the partition $\Bcal_{l, r}$ the algorithm finds 
is optimal, \ie, $\Bcal_{l, r} = \Bcal^{*}_{l, r}$.
In what follows, we prove this by induction.


%
%
%
For the base cases, we have that $\Bcal_{1, r} = \cbr{\R{1}{r}}$ are clearly optimal since $\Bscr_{1,r}$ only contains $\cbr{\R{1}{r}}$ for all $r \in \{1, \ldots, n\}$. 
As the induction hypothesis, assume that, for any $l' < l$ and $r' < r$, the partition $\Bcal_{l',r'}$ the algorithm finds is optimal. 
Moreover, let $\Scal_{l,r} = \cbr{k \given k<l, a_{\cbr{k, \ldots, l-1},z}\leq a_{\cbr{l, \ldots, r},z}~\forall{z\in\Zcal}}$.
Then, for $(l, r)$, we need to show that $\Bcal_{l, r} = \Bcal_{k^*,l-1} \cup \cbr{\R{l}{r}}$,
with $k^*= \argmax_{k\in\Scal_{l,r}}\abr{\Bcal_{k,l-1}}$, is optimal.

To this end, we first show that $f_{\Bcal_{l,r}}$ is within-group monotone on $\cup_{i\leq r}\Xcal_i$, \ie, $\Bcal_{l, r} \in \Bscr_{l,r}$. 
%
We have that, by the induction hypothesis, $\Bcal_{k^*,l-1}\in\Bscr_{k^*,l-1}$ and, 
by definition, $k^{*} \in \Scal_{l,r}$. Then, it follows directly from 
Lemma~\ref{lem:recursive-approach} that $f_{\Bcal} \in\Bscr_{l,r}$.
Next, we show that $\Bcal_{l, r} = \argmax_{\Bcal \in \Bscr_{l,r}} \abr{\Bcal}$. 
Using again Lemma~\ref{lem:recursive-approach}, we have that, for any $\Bcal \in \Bscr_{l,r}$,
it holds that $\Bcal = \Bcal'\cup\cbr{\R{l}{r}}$, with $\Bcal'\in\Bscr_{k,l-1}$, for some $k \in \Scal_{l, r}$.
As a result, since $\abr{\Bcal'\cup\cbr{\R{l}{r}}} = \abr{\Bcal'}+1$, it suffices to find $\Bcal' = \argmax_{\Bcal''\in\cup_{k\in\Scal_{l,r}}\Bscr_{k,l-1}}\abr{\Bcal''}$.
Now, by the induction hypothesis, we know that, for each $\Bscr_{k,l-1}$, $\Bcal_{k, l-1}$ is the optimal partition. 
%
%
Then, since $k^*= \argmax_{k\in\Scal_{l,r}}\abr{\Bcal_{k,l-1}}$, we can conclude that $\Bcal_{l, r}$ is optimal.
%

\subsection{Proof of Lemma~\ref{lem:multical-recursive}}
We first prove the sufficient condition, \ie, we prove that, given any $\Bcal \in \Bscr_r$, if it holds that $f_{\Bcal}$ is within-group calibrated on $\cup_{i \leq r}\Xcal_i$ then $\exists l < r$ such that $\Bcal \backslash \cbr{\R{l}{r}} \in \Bscr_{l-1}$ and $f_{\Bcal \backslash \cbr{\R{l}{r}}}$ is within-group calibrated on $\cup_{i \leq l-1} \Xcal_i$ and $a_{\R{l}{r},z} = a_{\R{l}{r}}$ for all $z\in\Zcal$.
Let $\Bcal' = \Bcal\setminus \cbr{\R{l}{r}}$. Since $\Bcal$ covers $\R{1}{r}$, then it holds that $\Bcal'$ covers $\R{1}{l-1}$ and hence $\Bcal'\in\Bscr_{l-1}$.
Since $\Bcal'\subset\Bcal$ and $f_\Bcal$ is within-group calibrated on $\cup_{i \leq r}\Xcal_i$, then it holds that $f_{\Bcal'}$ is within-group calibrated on $\cup_{i \leq l-1}\Xcal_i$. Finally, since $\R{l}{r} \in \Bcal$, it also holds that $a_{\R{l}{r},z} = a_{\R{l}{r}}$.


Next, we prove the necessary condition, \ie, given any $\Bcal\in\Bscr_r$, if $\exists l < r$ such that $\Bcal \backslash \cbr{\R{l}{r}} \in \Bscr_{l-1}$ and $f_{\Bcal \backslash \cbr{\R{l}{r}}}$ is within-group calibrated on $\cup_{i \leq l-1} \Xcal_i$ and $a_{\R{l}{r},z} = a_{\R{l}{r}}$ $\forall z\in\Zcal$ then $f_{\Bcal}$ is within-group calibrated on $\cup_{i \leq r}\Xcal_i$.
We need to show that, for every $\Acal\in\Bcal$, it holds that $a_{\Acal,z} = a_\Acal$. 
Let $\Bcal' = \Bcal\setminus \cbr{\R{l}{r}}$. For every $z \in \Zcal$,  it holds by assumption that $a_{\Acal,z} = a_{\Acal}$ $\forall \Acal\in\Bcal'$ and $a_{\R{l}{r},z} = a_{\R{l}{r}}$. 
As a result, $f_{\Bcal}$ is within-group calibrated on $\cup_{i\leq r}\Xcal_i$.


\subsection{Proof of Theorem~\ref{thm:multicalibrated}}
To prove that Algorithm~\ref{alg:multicalibrated} returns the optimal $\Bcal_{\text{cal}}^{*}$, if a solution exists, we just need to prove that, for every $r \in \{1, \ldots, n\}$, the partition $\Bcal_{\text{cal}, r}$ the algorithm finds is optimal, \ie, $\Bcal_{\text{cal}, r} = \Bcal_{\text{cal}, r}^{*}$. 
In what follows, we prove this by induction.

%
For the base case ($r=1$), we have that $\Bcal_{\text{cal}, 1} = \cbr{\cbr{a_1}}$ iff, for all $z\in\Zcal$ with $\rho_{z\given 1}>0$, it holds that $a_{1,z}= a_{1}$. This is clearly optimal since $\Bscr_1$ only contains $\cbr{\cbr{a_1}}$.
Otherwise, it holds that $\Bcal_{\text{cal},1} = \emptyset$. 
%
%
As the induction hypothesis, assume that, for any $r' < r$, the partition $\Bcal_{\text{cal}, r'}$ the algorithm finds is either 
the optimal partition or, if there is no solution, an empty partition.
Moreover, let $\Scal_r = \cbr{i\in\R{2}{r}\given a_{\R{i}{r},z} = a_{\R{i}{r}}~\forall z\in\Zcal}$.
Then, for $r$, we distinguish between two cases.
If $\Bcal_{\text{cal},r'}$ is empty for all $r'<r$, we again 
distinguish between two cases. 
If $a_{\{1, \ldots, r\}} \neq a_{\{1, \ldots, r\}, z}$ $\forall z \in \Zcal$, it means that $\Bcal_{\text{cal}, r} = \{\{1, \ldots, r\}\}$ is the only partition in $\Bscr_r$ that is within-group calibrated and thus it is optimal.
Otherwise, we can conclude that no partition $\Bcal \in \Bscr_r$ is within-group calibrated and thus $\Bcal_{\text{cal}, r} = \emptyset$. 
Now, if $\Bcal_{\text{cal},r'}$ is not empty for some $r' < r$, we need to show that $\Bcal_{\text{cal}, r} = \Bcal_{\text{cal}, k^*-1} \cup \cbr{\R{k^*}{r}}$, with $k^{*} = \argmax_{k \in \Scal_r} |\Bcal_{\text{cal}, k-1}|$, is optimal. 

To this end, we first show that $f_{\Bcal_{\text{cal}, r}}$ is within-group calibrated on $\cup_{i\leq r}\Xcal_i$. 
Using the induction hypothesis and the fact that $k^*\leq r$, we have that $\Bcal_{\text{cal}, k^*-1}$ is the optimal partition in $\Bscr_{k^*-1}$.
As a result, it follows from Lemma~\ref{lem:multical-recursive} that $f_{\Bcal_{\text{cal}, r}}$ is within-group calibrated on $\cup_{i\leq r}\Xcal_i$.
Next, we show that $\Bcal_{\text{cal}, r} = \argmax_{\Bcal \in \Bscr_{r}} |\Bcal|$ among those partitions $\Bcal$ such that $f_\Bcal$ is within-group calibrated. 
Using again Lemma~\ref{lem:multical-recursive}, we have that, for
any $\Bcal$ such that $f_{\Bcal}$ is within-group calibrated, it holds
that $\Bcal = \Bcal'\cup\cbr{\R{k}{r}}$, with $\Bcal'\in\Bscr_{k-1}$, for some $k \in \Scal_r$.
%
%
As a result, since $\abr{\Bcal} = \abr{\Bcal'} + 1$, it suffices 
to find $\Bcal' = \argmax_{\Bcal'' \in \cup_{k \in \Scal_r} \Bscr_{k-1}} \abr{\Bcal''}$ such that $f_{\Bcal''}$ is within-group calibrated. 
%
Now, by the induction hypothesis, we know that, for each $\Bscr_{k-1}$,
$\Bcal_{k-1}$ is the optimal partition. Then, since $k^* = \argmax_{k\in\Scal_r}\abr{\Bcal_{\text{cal},k-1}}$, we can conclude
that $\Bcal_{\text{cal}, r}$ is optimal.

\subsection{Proof of Proposition~\ref{prop:multicalibrated-bin-diff}}
We prove by contradiction. Assume there exists a $\Bcal\in\Bscr$ such that $f_\Bcal$ is within-group calibrated.
Then, for every $\Acal\in\Bcal$, it must hold that $a_{\Acal,z} = a_{\Acal,z'} = a_\Acal$. 
Consider an arbitrary cell $\Acal \in \Bcal$.
We have that 
\begin{align*}
    & a_{\Acal,z} = \frac{\sum_{j \in \Acal} \rho_j \rho_{z \given j} a_{j,z}}{\sum_{j \in \Acal} \rho_j \rho_{z \given j}}
    \overset{(i)}{=}\frac{\sum_{j \in \Acal} \rho_j \rho_{z' \given j} a_{j,z}}{\sum_{j \in \Acal} \rho_j \rho_{z' \given j}}
    \overset{(ii)}{<} \frac{\sum_{j \in \Acal} \rho_j \rho_{z' \given j} a_{j,z'}}{\sum_{j \in \Acal} \rho_j \rho_{z' \given j}} = a_{\Acal,z'}
\end{align*}
%
where $(i)$ follows from the fact that $\rho_{z\given i} = \rho_{z'\given i}$ for all $i\in\Rf$ and $(ii)$ follows
from the fact that, by assumption, $a_{i,z}<a_{i,z'}$ for all $i\in\R{1}{n}$.
As an immediate consequence, we have that $a_{\Acal,z}<a_{\Acal}<a_{\Acal,z'}$,
contradicting the within-group calibration property.

\clearpage
\newpage

\section{Additional Experiments}

\subsection{Screening Classifiers Induced by the Partitions Found by Algorithms~\ref{alg:pav},~\ref{alg:optimal} and \ref{alg:multicalibrated}} \label{app:exp-partitions}
%
In this section, we take a closer look at all the quality score values $a = \Pr(Y = 1 \given f(X) = a)$ and group 
conditional score values $a_z = \Pr(Y = 1 \given f(X) = a, Z = z)$ of both the original classifier $f$ and the
modified classifiers $f_{\Bcal}$ induced by the partitions $\Bcal$ found by Algorithms~\ref{alg:pav},~\ref{alg:optimal} and \ref{alg:multicalibrated}.
Figure~\ref{fig:exp_violations_1} summarizes the results for one experiment with a classifier $f$ with $n = 15$,
which reveal several interesting findings.
\begin{figure*}[h]
\centering
    \subfloat[Citizenship status $(Z)$]{
	\includegraphics[width=0.5\textwidth]{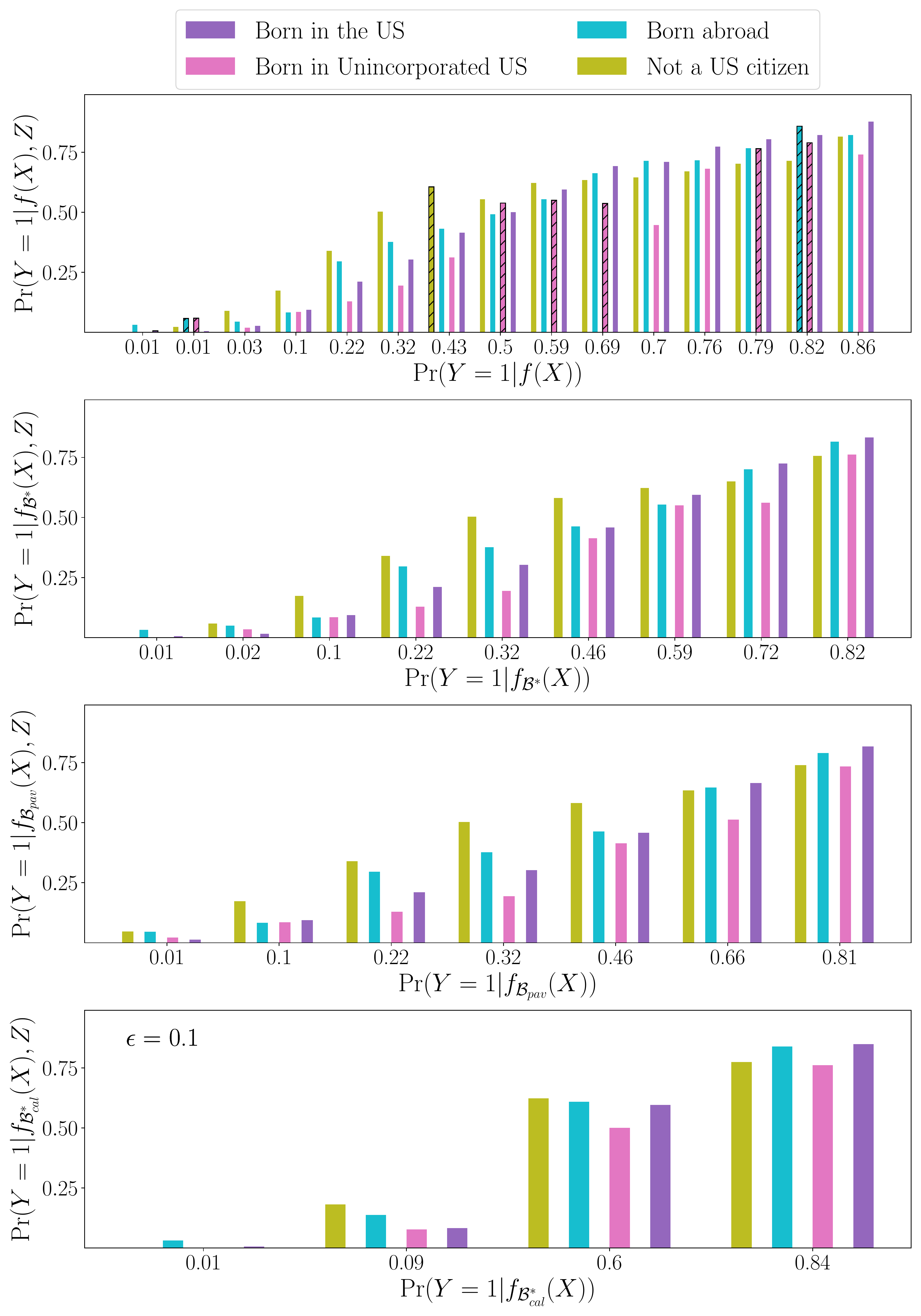}}
    \subfloat[Race code $(Z)$]{
	\includegraphics[width=0.5\textwidth]{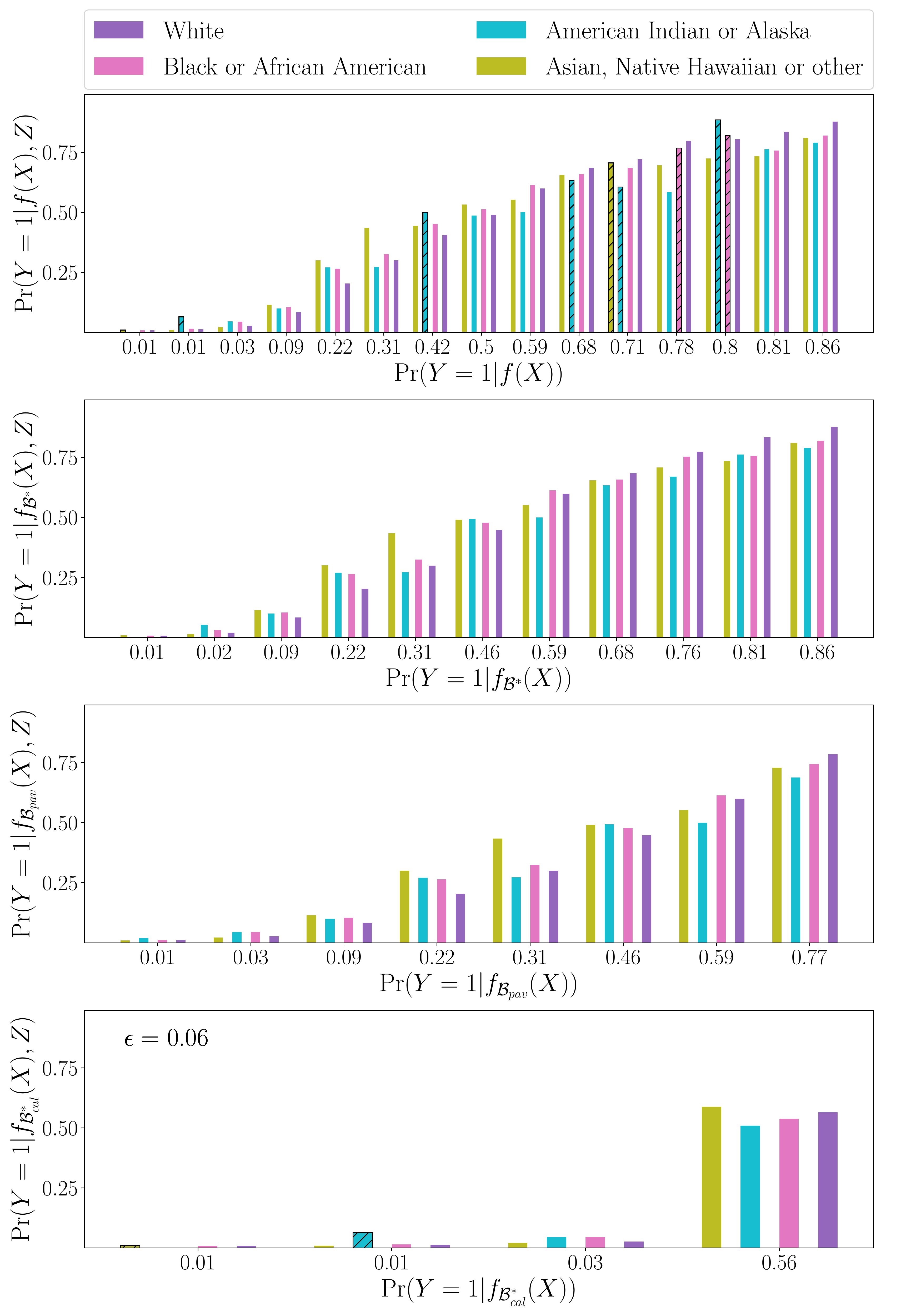}}
\caption{Quality score values $a = P(Y = 1 \given f(X) = a)$ and group conditional quality score values $a_z = P(Y = 1 \given f(X) = a, Z = z)$ of the screening classifier $f$ and the modified classifiers $f_{\Bcal_{\text{pav}}}$, $f_{\Bcal^{*}}$, and $f_{\Bcal^{*}_{\text{cal}}}$ induced by the partitions found by Algorithms~\ref{alg:pav},~\ref{alg:optimal} and \ref{alg:multicalibrated}, respectively. 
In the first and last rows, the hatched bars indicate within-group monotonicity violations and, in the last row, 
we report the smallest $\epsilon$ value such that a within-group $\epsilon$-calibrated classifier $f_{\Bcal^{*}_{\text{cal}}}$ exists.
}
\label{fig:exp_violations_1}\label{app:exp}
\end{figure*}

As expected, $f_{\Bcal^*}$ and $f_{\Bcal_{\text{pav}}}$ are within-group monotone and $f_{{\Bcal}^*}$ is more fine-grained than $f_{\Bcal_{\text{pav}}}$, \ie, $|\Bcal^*| \geq |\Bcal_{\text{pav}}|$.
However, the minimum value of $\epsilon$ such that $f_{\Bcal^{*}_{\text{cal}}}$ exists is not always low 
enough for $f_{\Bcal^{*}_{\text{cal}}}$ to be within-group monotone.
Moreover, we find that, for $f$, $f_{\Bcal^*}$ and $f_{\Bcal_{\text{pav}}}$, the difference among group conditional score values $a_z$ for a given quality score values $a$ is often significant. 
As a result, one should be cautious about comparing candidates from different groups $z$ and instead utilize 
group-dependent decision thresholds~\cite{Wang2022ImprovingSP} to implement more equitable hiring 
practices such as the Rooney rule~\cite{collins2007tackling}, which requires that, when hiring for 
a given position, at least one (or more) candidate(s) from each minority group should be interviewed.
In this context, it is also worth noting that, while using $f_{\Bcal^{*}_{\text{cal}}}$ would mitigate 
such differences, our results show that this would reduce dramatically the granularity of the predictions.
We found qualitatively similar results for different $n$ values.


\subsection{Additional Experiments On Within-Group $\epsilon$-Calibration}
\label{app:exp-epsilon-calibration}
%
%
%
In this section, we investigate how the smallest $\epsilon$ such that a within-group $\epsilon$-calibrated classifier $f_{\Bcal^{*}_{\text{cal}}}$ exists varies against the number of bins $n$ of the screening classifier $f$.
Figure~\ref{fig:epsilon} shows that, for each set of groups $\Zcal$,
$\epsilon$ remains relatively constant with respect to $n$, however, the greater the difference 
across group conditional quality scores $a_z = P(Y = 1 \given f(X) = a, Z = z)$, the greater the 
value of $\epsilon$ that is needed to obtain a within-group $\epsilon$-calibrated classifier, as
one may have perhaps expected.
\begin{figure*}[h]
\centering
\hspace{1cm}\includegraphics[width=0.6\textwidth]{plots/legend_Z.pdf}\\[-0.2ex]
\includegraphics[width=0.35\textwidth]{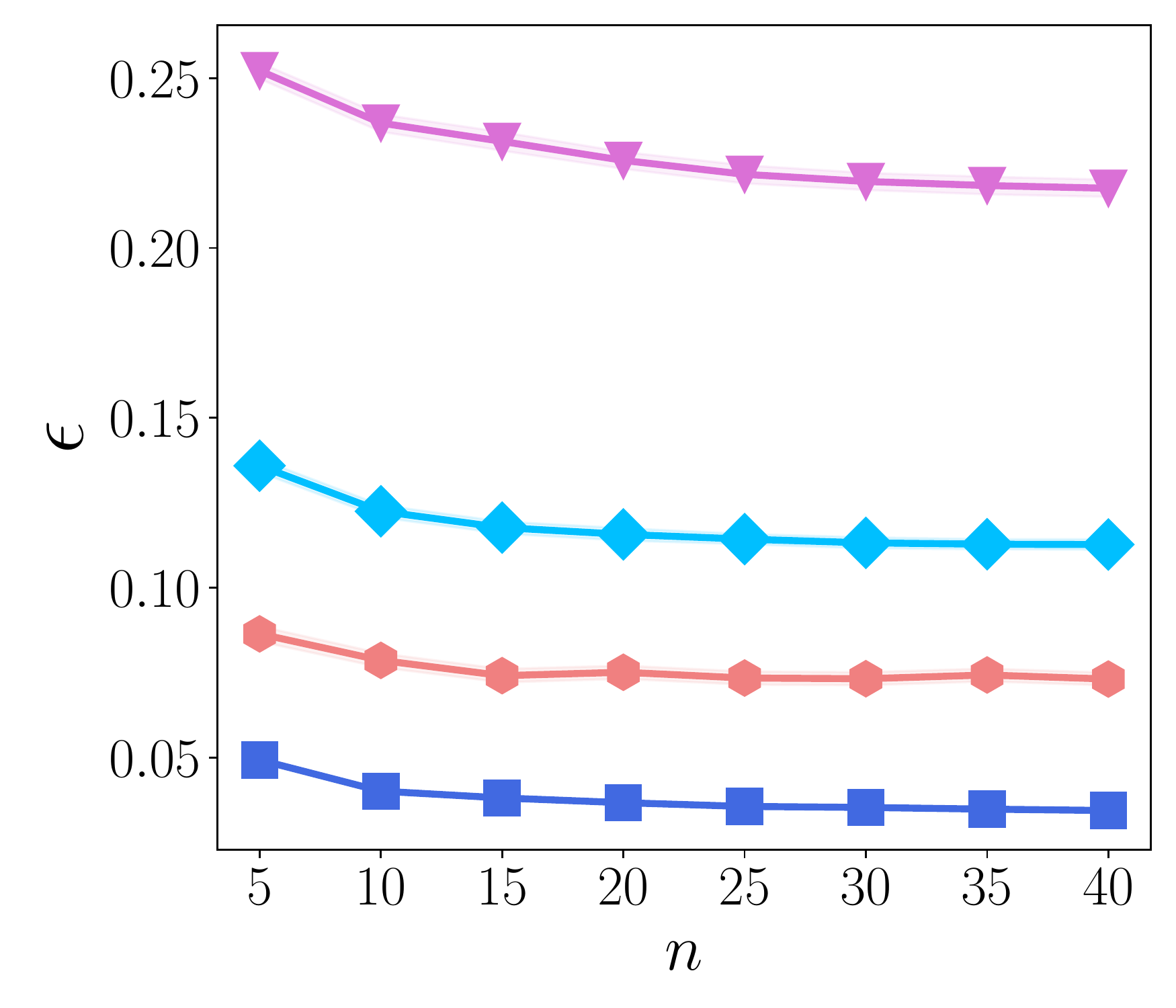}
\caption{Minimum value of $\epsilon$ such that a within-group $\epsilon$-calibrated $f_{\Bcal^{*}_{\text{cal}}}$ exists against 
the number of bins $n$ of the screening classifier $f$.}
\label{fig:epsilon}
\end{figure*}

\subsection{Experimental Results for Other Groups $\Zcal$} \label{app:exp-other-z}
\begin{figure*}[h]
\centering
\includegraphics[width=0.7\textwidth]{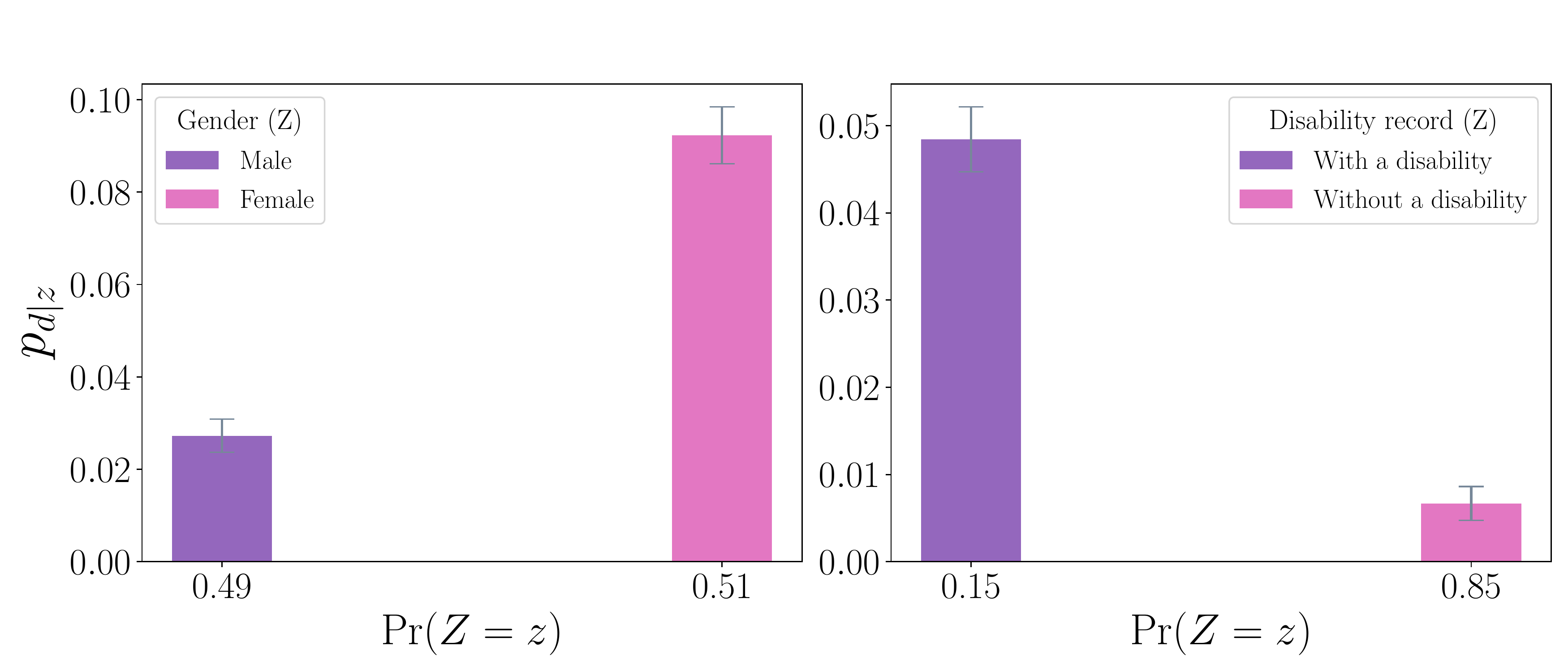}
\caption{Probability $p_{d \given z}$ that an individual from group $z$ may suffer from within-group unfairness against $\Pr(Z = z)$ for $n = 15$.}
\label{fig:exp_discriminations_2}
\end{figure*}

\begin{figure*}[h]
\centering
    \hspace*{0.5cm}\includegraphics[width=0.48\textwidth,left]{plots/legend_n_bins.pdf}\\[-3.5ex]
    \hspace*{0cm}\includegraphics[width=0.48\textwidth,right]{plots/legend.pdf}\\[-2.5ex]
    \subfloat[$|\Bcal|$ vs. n]{ 
	\includegraphics[width=0.24\textwidth]{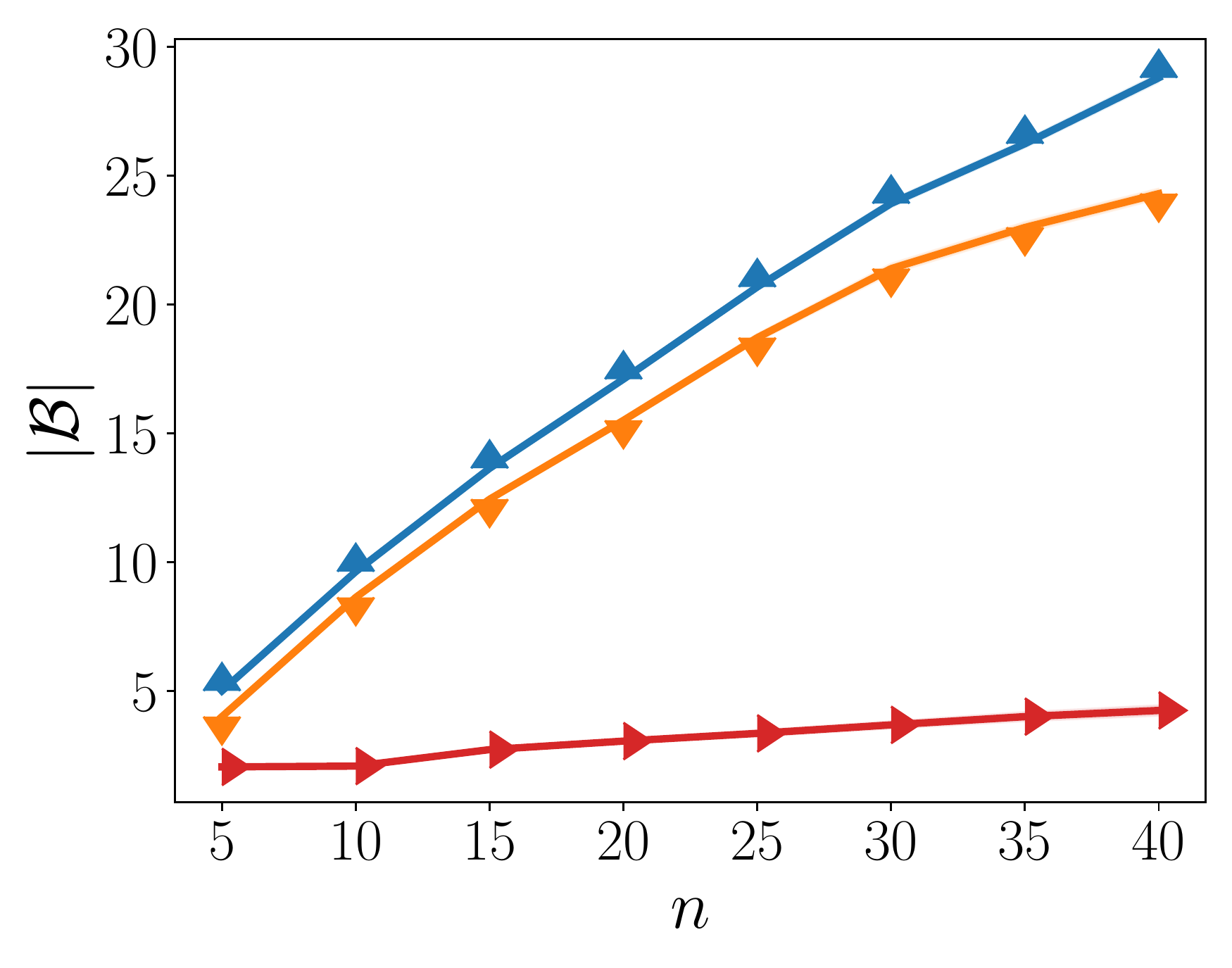}
        \includegraphics[width=0.24\textwidth]{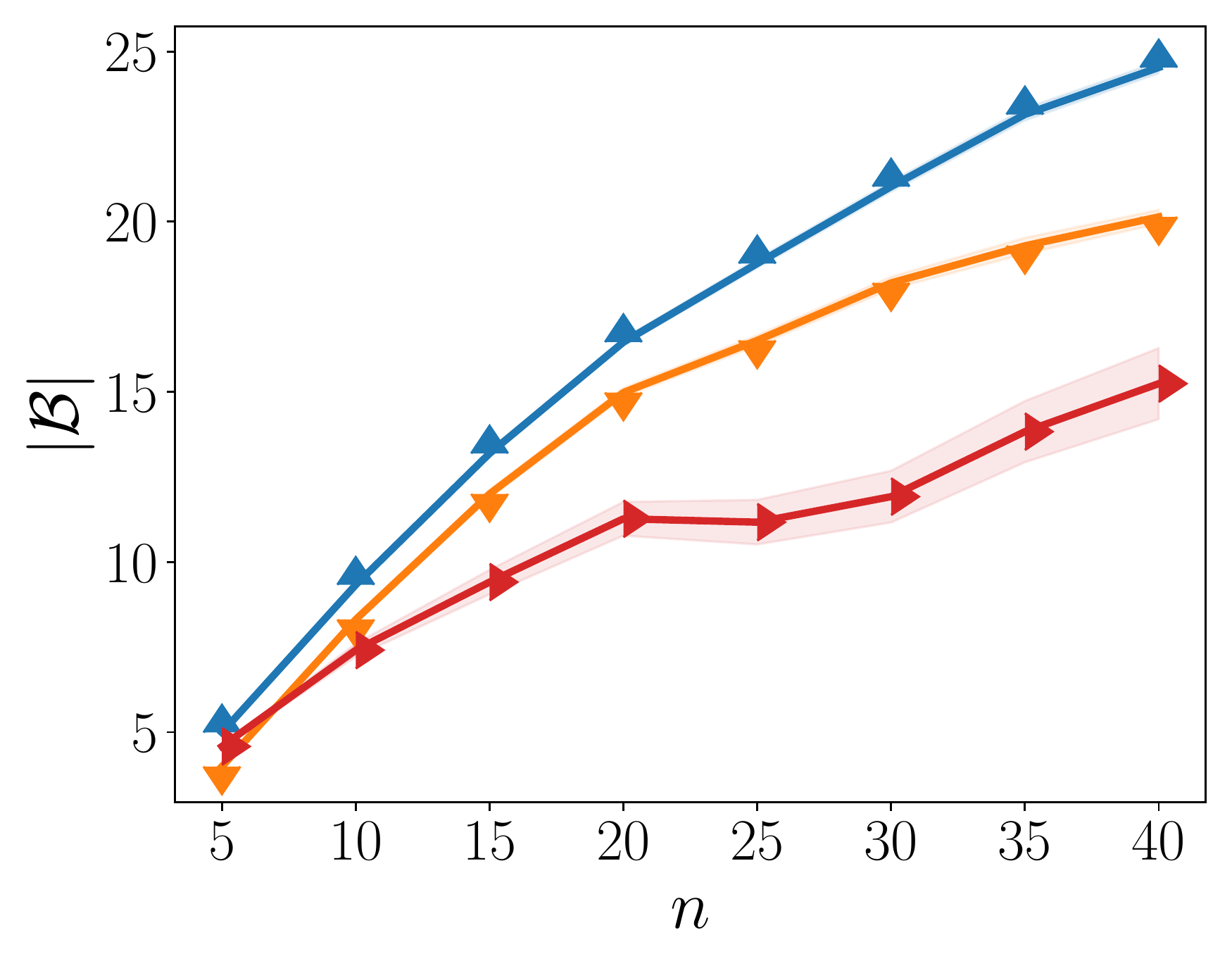}}
    \subfloat[Shortlist size vs. n]{ 
	\includegraphics[width=0.24\textwidth]{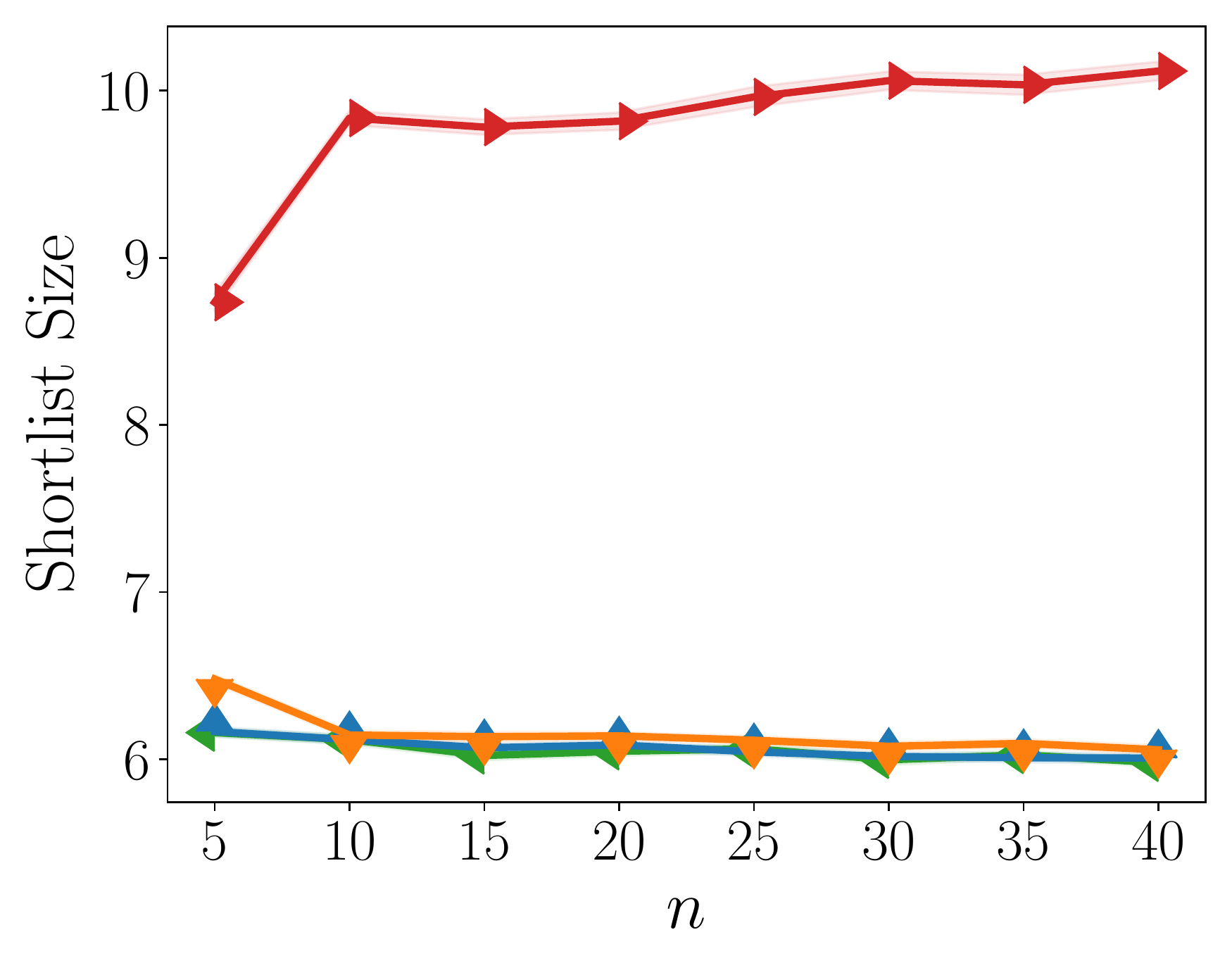}
        \includegraphics[width=0.24\textwidth]{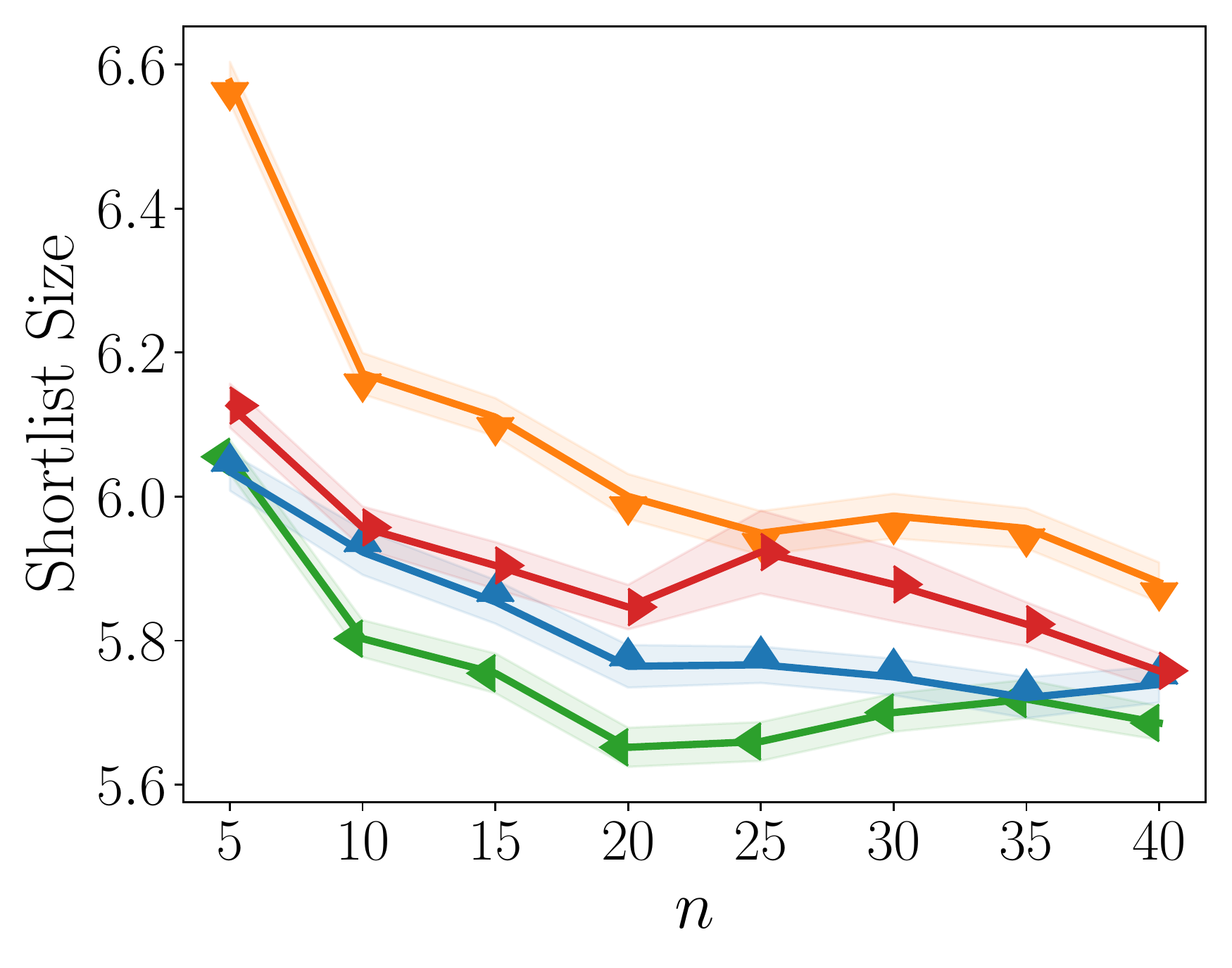}} 
\caption{Quality of the partitions $\Bcal_{\text{pav}}$, $\Bcal^{*}$, and $\Bcal^{*}_{\text{cal}}$ 
returned by Algorithms~\ref{alg:pav},~\ref{alg:optimal} and \ref{alg:multicalibrated}, respectively, 
for screening classifiers $f$ with an increasing number of bins $n$. 
Panel (a) shows the size $|\Bcal|$ of the partitions provided by each algorithm (higher is better). Panel (b) shows 
the size of the shortlists created using the classifiers $f_{\Bcal}$ induced by each partition $\Bcal$ (lower is better).}
\label{fig:partition-short-list-size-gender-disability}
\end{figure*}

\begin{figure*}[h]
\centering
    \subfloat[Gender $(Z)$]{
	\includegraphics[width=0.5\textwidth]{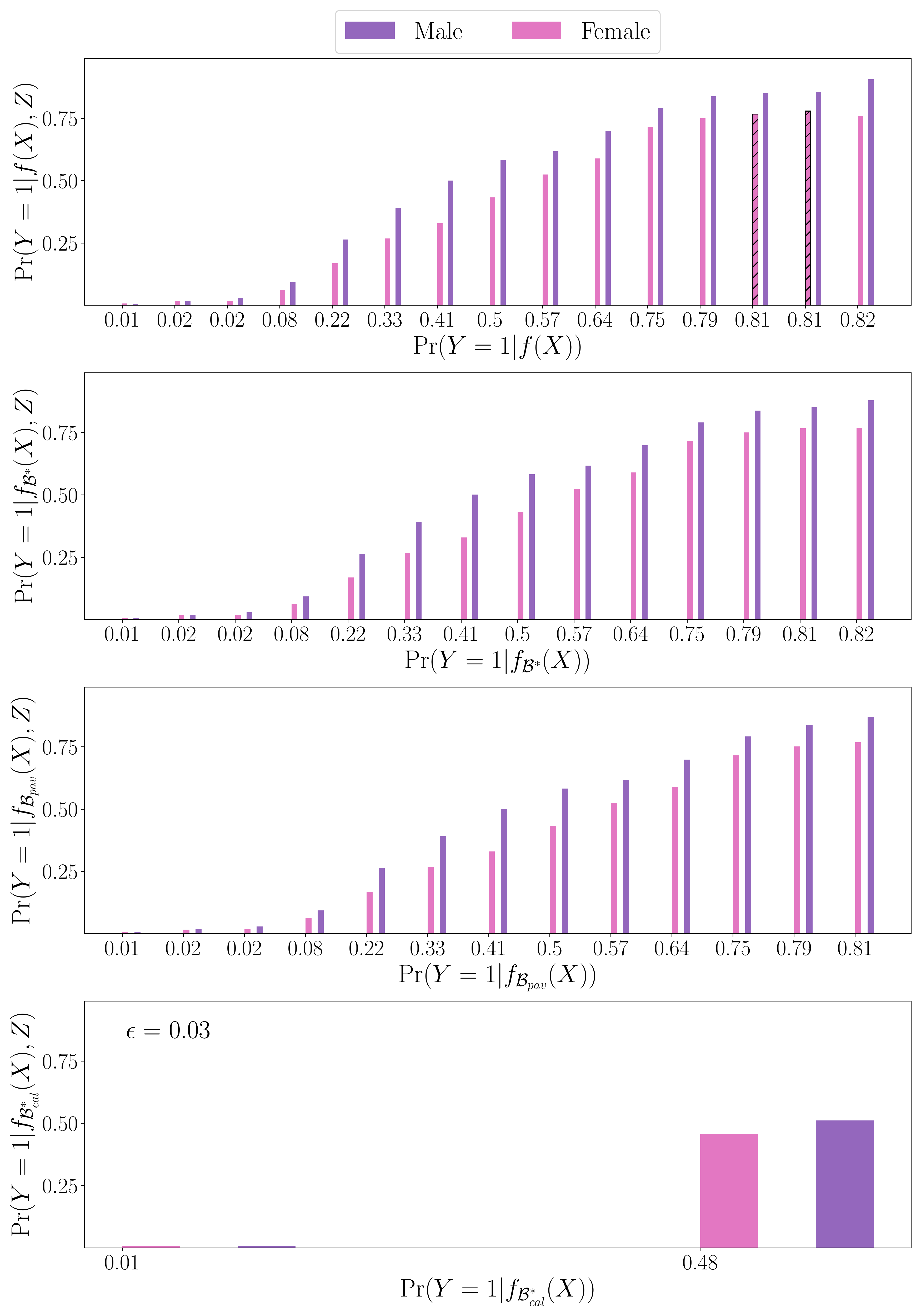}}
    \subfloat[Disability record $(Z)$]{
	\includegraphics[width=0.5\textwidth]{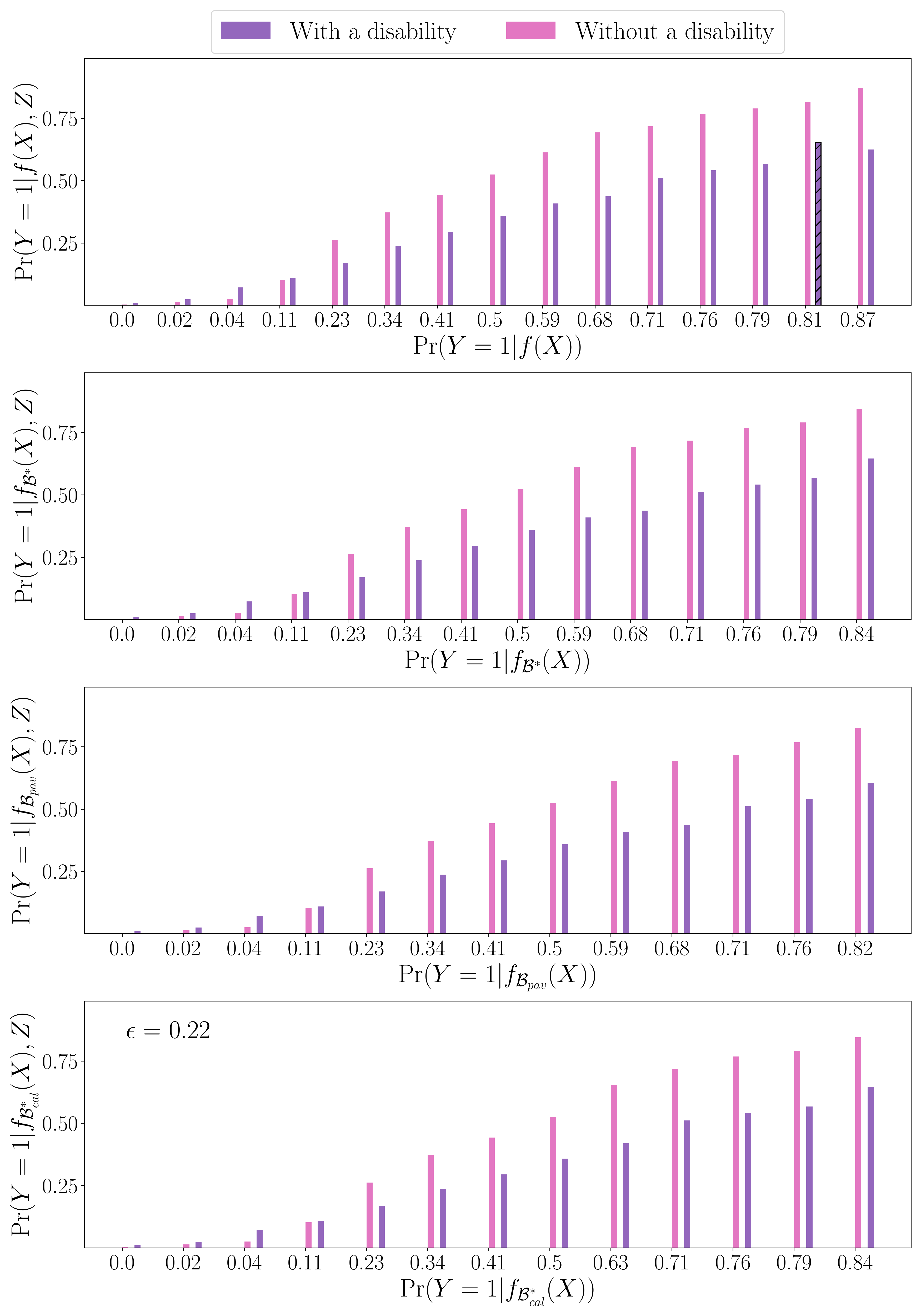}} 
\caption{Quality score values $a = P(Y = 1 \given f(X) = a)$ and group conditional quality score values $a_z = P(Y = 1 \given f(X) = a, Z = z)$ of the screening classifier $f$ and the modified classifiers $f_{\Bcal_{\text{pav}}}$, $f_{\Bcal^{*}}$, and $f_{\Bcal^{*}_{\text{cal}}}$ induced by the partitions found by Algorithms~\ref{alg:pav},~\ref{alg:optimal} and \ref{alg:multicalibrated}, respectively. 
In the first row, the hatched bars indicate within-group monotonicity violations and, in the last row, 
we report the smallest $\epsilon$ value such that a within-group $\epsilon$-calibrated classifier $f_{\Bcal^{*}_{\text{cal}}}$ exists.}
\label{fig:exp_violations_2}
\end{figure*}

\clearpage

\end{document}